\theoremstyle{definition}  
\newtheorem{lemma}{Lemma}
\newtheorem{proposition}{Proposition}
\newtheorem{assumption}{Assumption}
\newtheorem{remark}{Remark}
\theoremstyle{plain}
\newtheorem{example}{Example}
\newtheorem{theorem}{Theorem}
\newtheorem{definition}{Definition}
\xpatchcmd{\proof}{\itshape}{\normalfont\proofnameformat}{}{}
\newcommand{\proofnameformat}{\bfseries}
\newcommand{\pref}[1]{\prettyref{#1}}
\newcommand{\pfref}[1]{Proof of \prettyref{#1}}
\newcommand{\savehyperref}[2]{\texorpdfstring{\hyperref[#1]{#2}}{#2}}
\DeclarePairedDelimiter{\abs}{\lvert}{\rvert} %
\DeclarePairedDelimiter{\brk}{[}{]}
\DeclarePairedDelimiter{\crl}{\{}{\}}
\DeclarePairedDelimiter{\prn}{(}{)}
\DeclarePairedDelimiter{\nrm}{\|}{\|}
\DeclarePairedDelimiter{\tri}{\langle}{\rangle}
\DeclarePairedDelimiter{\dtri}{\llangle}{\rrangle}
\DeclarePairedDelimiter{\floor}{\lfloor}{\rfloor}
\let\Pr\undefined
\DeclareMathOperator*{\En}{\mathbb{E}}
\DeclareMathOperator{\Enn}{\mathbb{E}}
\DeclareMathOperator{\Pr}{Pr}
\DeclareMathOperator*{\argmin}{arg\,min} 
\DeclareMathOperator*{\argmax}{arg\,max}
\newcommand{\ls}{\ell}
\newcommand{\ind}{\mathbbm{1}}    
\newcommand{\pmo}{\crl*{\pm{}1}}
\newcommand{\eps}{\epsilon}
\newcommand{\veps}{\varepsilon}
\newcommand{\ldef}{\vcentcolon=}
\newcommand{\mc}[1]{\mathcal{#1}}
\newcommand{\wt}[1]{\widetilde{#1}}
\def\ddefloop#1{\ifx\ddefloop#1\else\ddef{#1}\expandafter\ddefloop\fi}
\def\ddef#1{\expandafter\def\csname bb#1\endcsname{\ensuremath{\mathbb{#1}}}}
\def\ddefloop#1{\ifx\ddefloop#1\else\ddef{#1}\expandafter\ddefloop\fi}
\def\ddef#1{\expandafter\def\csname b#1\endcsname{\ensuremath{\mathbf{#1}}}}
\def\ddef#1{\expandafter\def\csname c#1\endcsname{\ensuremath{\mathcal{#1}}}}
\def\ddef#1{\expandafter\def\csname h#1\endcsname{\ensuremath{\widehat{#1}}}}
\def\ddef#1{\expandafter\def\csname hc#1\endcsname{\ensuremath{\widehat{\mathcal{#1}}}}}
\def\ddef#1{\expandafter\def\csname t#1\endcsname{\ensuremath{\widetilde{#1}}}}
\def\ddef#1{\expandafter\def\csname tc#1\endcsname{\ensuremath{\widetilde{\mathcal{#1}}}}}
\newcommand{\vol}{\mathrm{Vol}}
\let\wt\undefined
\newcommand{\wl}{\text{WL}}
\newcommand{\logistic}{\text{Logistic}}
\newcommand{\predict}{\text{Predict}}
\newcommand{\update}{\text{Update}}
\newcommand{\wt}[1]{\widetilde{#1}}
\newcommand{\mb}[1]{\boldsymbol{#1}}
\newcommand{\R}{\bbR}	
\newcommand{\ones}{\mb{1}}
\newcommand{\clip}{\textrm{Clip}}
\newcommand{\had}{\odot}
\newcommand{\diag}{\textrm{diag}}
\newcommand{\hy}{\yh}
\newcommand{\sgn}{\textnormal{sgn}}
\newcommand{\yr}[1][n]{y_{1:#1}}
\newcommand{\y}{\mathbf{y}}
\newcommand{\grad}{\nabla}
\newcommand{\tens}{\otimes{}}
\newcommand{\yh}{\hat{y}}
\newcommand{\zh}{\hat{z}}
\newcommand{\zt}{\tilde{z}}
\newcommand{\x}{\mathbf{x}}
\newcommand{\smooth}[1][\mu]{\mathrm{smooth}_{#1}}
\renewcommand{\clip}[1][\delta]{\mathrm{clip}_{#1}}
\newcommand{\wdim}{D_{\cW}}
\newcommand{\F}{\cF}
\newcommand{\p}{\mathbf{p}}
\newcommand{\vv}{\mathbf{v}}
\newcommand{\uu}{\mathbf{u}}
\newcommand{\zz}{\mathbf{z}}
\newcommand{\ww}{\mathbf{w}}
\newcommand{\yhtree}{\mathbf{\hat{y}}}
\newcommand{\X}{\cX}
\newcommand{\logloss}{\ls_{\mathrm{log}}}
\newcommand{\lpred}{\hat{p}}
\newcommand{\phtree}{\mathbf{\hat{p}}}
\newcommand{\Fclip}{\cF^{\delta}}
\newcommand{\zo}{\crl*{0,1}}
\newcommand{\deltarange}{\brk*{\delta,1-\delta}}
\title{\textbf{Logistic Regression:\\ The Importance of Being Improper}}
\date{}
\author{
Dylan J. Foster\thanks{Cornell University} \quad Satyen Kale\thanks{Google Research} \quad Haipeng Luo\thanks{University of Southern California} \quad Mehryar Mohri\footnotemark[2]~\thanks{New York University} \quad Karthik Sridharan\footnotemark[1]
}
\begin{document}

\maketitle

\begin{abstract}

 Learning linear predictors with the logistic loss---both in stochastic and online settings---is a fundamental task in machine learning and statistics, with direct connections to classification and boosting. Existing ``fast rates'' for this setting exhibit exponential dependence on the predictor norm, and \cite{hazan2014logistic} showed that this is unfortunately unimprovable. Starting with the simple observation that the logistic loss is $1$-mixable, we design a new efficient \emph{improper} learning algorithm for online logistic regression that circumvents the aforementioned lower bound with a regret bound exhibiting a \emph{doubly-exponential} improvement in dependence on the predictor norm. This provides a positive resolution to a variant of the COLT 2012 open problem of \citet{mcmahan2012open} when improper learning is allowed. This improvement is obtained both in the online setting and, with some extra work, in the batch statistical setting with high probability. We also show that the improved dependence on predictor norm is near-optimal. 
 
 Leveraging this improved dependency on the predictor norm yields the following applications: (a) we give algorithms for online bandit multiclass learning with the logistic loss with an $\tilde{O}(\sqrt{n})$ relative mistake bound across essentially all parameter ranges, thus providing a solution to the COLT 2009 open problem of \citet{abernethyR09a}, and (b) we give an adaptive algorithm for online multiclass boosting with optimal sample complexity, thus partially resolving an open problem of \citet{beygelzimer2015optimal} and \citet{jung2017onlinemulticlass}. Finally, we give information-theoretic bounds on the optimal rates for improper logistic regression with general function classes, thereby characterizing the extent to which our improvement for linear classes extends to other parametric and even nonparametric settings.

\end{abstract}

\section{Introduction}
\label{sec:introduction}

Logistic regression is a classical model in statistics used for
estimating conditional probabilities \citep{Berkson1944}. The model,
also known as \emph{conditional maximum entropy model}
\citep{BergerDellaPietraDellaPietra1996}, has been extensively studied
in statistical and online learning and has been widely used in
practice both for binary classification and multi-class
classification in a variety of applications.

This paper presents a new study of logistic regression in online
learning.  The basic logistic regression problem consists of learning a
linear predictor with performance measured by the \emph{logistic loss}. In the online setting, when
the hypothesis class is that of $d$-dimensional linear predictors with
$\ls_{2}$ norm bounded by $B$, there are two main algorithmic
approaches to logistic regression: Online Gradient Descent \citep{Zinkevich03,shalev2007convex,nemirovski2009robust}, which
admits a regret guarantee of $O(B\sqrt{n})$ over $n$ rounds, and
Online Newton Step \citep{hazan2007logarithmic}, whose regret bound is in $O(de^{B}\log(n))$. While
the latter bound is logarithmic in $n$, its poor dependence on
$B$ makes it weaker and guarantees an improvement only when $B \ll \frac{1}{2}\log(n)$. The question of whether this dependence on $B$ could be improved was posed as an open problem in COLT 2012 by \citet{mcmahan2012open}. \citet{hazan2014logistic} answered this in the negative, showing a lower bound of $\Omega(\sqrt{n})$ for $B\geq{}\Omega(\log(n))$.

The starting point for this work is a simple observation: the logistic loss, when viewed as a function of the prediction and the true outcome, is $1$-mixable (see \pref{sec:prelims} for definitions). This observation can be used in conjunction with Vovk's Aggregating Algorithm~\citep{vovk1995game}, which leverages mixability in order to achieve regret bounds scaling logarithmically in an appropriate notion of complexity of the space of predictors, and can be implemented in \emph{polynomial time} in relevant parameters using MCMC methods (\pref{sec:logistic}). Mixability and efficient implementability open the door to fast rates for online logistic regression and related problems via \emph{improper learning}: using predictions that may not be linear in the instances $x_{t}$s.

The power of improper learning manifests itself in solutions we present for three open problems. First, we give an \emph{efficient} online learning algorithm that circumvents the lower bound of \citet{hazan2014logistic} via improper learning and attains a substantially more favorable regret guarantee of $O(d\log(Bn))$; this is a \emph{doubly-exponential improvement} of the dependence on the scale parameter $B$. This algorithm provides a positive resolution to to a variant of the open problem of \citet{mcmahan2012open} where improper predictions are allowed. Second, the same technique provides an algorithm (\pref{sec:bandit_multiclass}) for the \emph{online multiclass learning with bandit feedback problem} \citep{kakade2008efficient} with an $\tilde{O}(\sqrt{n})$ relative mistake bound with respect to the multiclass logistic loss. This algorithm provides a solution to an open problem of \citet{abernethyR09a}, improving upon the previous algorithm of \citet{hazan2011newtron} by providing the $\tilde{O}(\sqrt{n})$ mistake bound guarantee for all possible ranges of parameter sets. Third, the technique provides a new \emph{online multiclass boosting} algorithm (\pref{sec:online_boosting}) with optimal sample complexity, thus partially resolving an open problem from \citep{beygelzimer2015optimal,jung2017onlinemulticlass} (the algorithm is sub-optimal in the number of weak learners it uses, though it is no worse in this regard than previous adaptive algorithms). For clarity of exposition, descriptions of all of these applications are given as concisely as possible without presenting the results in the most general form possible.

We further present a series of new results for batch statistical
learning. We show how to convert our online improper logistic
regression algorithm into a solution admitting a high-probability
excess risk guarantee of $O(d\log(Bn)/n)$
(\pref{sec:online_to_batch}). While it is straightforward to achieve such a result in expectation using standard
online-to-batch conversion techniques, the a high-probability bound is more technically challenging. We achieve this using a new technique based on a
modified version of the ``boosting the confidence'' scheme proposed by
\citet{mehta2016fast} for exp-concave losses.  We also prove a lower bound showing that the logarithmic dependence on $B$ of the guarantee of our new algorithm cannot be improved.  Finally, we show how to (non-constructively)
generalize the $\log(B)$ dependence on predictor norm from linear to arbitrary function classes via sequential symmetrization and chaining arguments
(\pref{sec:general_class}). Our general bound indicates that the
extent to which dependence on the predictor range $B$ can be improved for general classes is completely determined by their (sequential) metric entropy. We also show how to extend this technique to the log loss, where we obtain a minimax rate for general function classes that uniformly improves on the minimax log loss rates in \cite{RakSri15}.

\subsection{Preliminaries}
\label{sec:prelims}

\paragraph{Notation.} Let $\R^d$ be the $d$-dimensional Euclidean space with $\langle \cdot, \cdot \rangle$ denoting the standard inner product in $\R^d$. Let $\|\cdot\|$ be a norm on $\R^d$ with dual norm denoted by $\|\cdot\|_\star$. In the multiclass learning problem, the input feature space is the set $\cX = \{x \in \R^d|\ \|x\|_\star \leq R\}$ for some unknown $R>0$. The number of output classes is $K$ and the set of output classes is denoted by $\brk{K} := \{1, 2, \ldots, K\}$. The set of distributions over $\brk{K}$ is denoted $\Delta_K$. Linear predictors are parameterized by weight matrices in $\R^{K \times d}$ so that for an input vector $x \in \cX$, $Wx \in \R^K$ is the vector of scores assigned by $W$ to the classes in $\brk{K}$. For a weight matrix $W$ and $k \in \brk{K}$, we denote by $W_k$ the $k$-th row of $W$. The space of parameter weight matrices is a convex set $\cW \subseteq \{W \in \R^{K \times D}|\ \forall k \in \brk{K}, \|W_k\| \leq B\}$ for some known parameter $B > 0$. Thus for all $x \in \cX$ and $W \in \cW$, we have $\|Wx\|_\infty \leq BR$.

 Define the softmax function $\mb{\sigma}:\bbR^{K}\to\Delta_{K}$ via $\mb{\sigma}(z)_{k} = \frac{e^{z_k}}{\sum_{j\in\brk{K}}e^{z_j}}$ for $k \in \brk{K}$. We also define a pseudoinverse for $\mb{\sigma}$ via $\mb{\sigma}^{+}(p)_{k}=\log(p_k)$ which has the property that for all $p \in \Delta_{K}$, we have $\mb{\sigma}(\mb{\sigma}^{+}(p)) = p$ and $\sum_{k \in \brk{K}}e^{\mb{\sigma}^{+}(p)_k} = 1$. The multiclass logistic loss, also referred to as \emph{softmax-cross-entropy} loss, is defined as $\ls: \bbR^{K}\times{}\brk{K}\to\bbR$ as $\ls(z, y) := -\log(\mb{\sigma}(z)_{y})$. 

 It will be convenient to overload notation and define a weighted version of the multiclass logistic loss function as follows: let $\cY\ldef\crl*{y\in\bbR^{K}_{+}\mid{} \nrm*{y}_{1}\leq{}L}$ for some known parameter $L > 0$. Then the weighted multiclass logistic loss function $\ls:\bbR^{K}\times{}\cY\to\bbR$ is defined by $\ls(z, y) = -\sum_{k\in\brk{K}}y_{k}\log(\mb{\sigma}(z)_{k})$. It can also be seen by straightforward manipulation that the above definition is equivalent to $\ls(z, y) = \sum_{j\in\brk{K}}y_j\log\prn*{1 + \sum_{k\neq{}j}e^{z_k-z_j}}$.
 
In the binary classification setting, the standard definition of the logistic loss function is (superficially) different: the label set is is $\{-1, 1\}$, and the logistic loss $\ls: \R \times \{-1, 1\} \rightarrow \R$ is defined as $\ls_\text{bin}(z, y) = \log(1 + \exp(-yz))$. Linear predictors are parameterized by weight vectors $w \in \R^d$ with $\|w\|_2 \leq B$, and the loss for a predictor with parameter $w \in \R^d$ on an example $(x, y) \in \R^d \times \{-1, 1\}$ is $\ls_\text{bin}(\langle w, x\rangle, y)$. This loss can be equivalently viewed in the multiclass framework above setting $K = 2$, $\cW = \{W \in \R^{2 \times d}|\ \nrm*{W_1}_2 \leq B, W_2 = 0\}$, and mapping the labels $1 \mapsto 1$ and $-1 \mapsto 2$.

Finally, we make frequent use of a smoothing operator $\smooth: \Delta_K \rightarrow \Delta_K$ for a parameter $\mu \in [0, 1/2]$, defined via $\smooth(p) = (1-\mu)p + \mu\ones{}/K$ where $\ones \in \R^K$ is the all ones vector. We use the notation $\mb{1}[\cdot]$ to denote the indicator random variable for an event.

\paragraph{Online multiclass logistic regression.} We use the following multiclass logistic regression protocol. Learning proceeds over a series of rounds indexed by $t=1,\ldots,n$. In each round $t$, nature provides $x_{t}\in\mathcal{X}$, and the learner selects prediction $\zh_{t}\in\bbR^{K}$ in response. Then nature provides an outcome $y_t \in \brk{K}$ or $y_t\in\cY$, depending on application, and the learner incurs multiclass logistic loss $\ls(\zh_t, y_t)$. The regret of the learner is defined to be $\sum_{t=1}^n \ell(\zh_t, y_t) - \inf_{W \in \cW} \sum_{t=1}^n \ell(Wx_t, y_t)$.

The learner is said to be \emph{proper} if it generates $\zh_t$ by choosing a weight matrix $W_t \in \cW$ \emph{before} observing the pair $(x_t, y_t)$ and setting $\zh_t = W_tx_t$. This is the standard protocol when the problem is viewed as an instance of online convex optimization, and is the setting for previous investigations into fast rates for logistic regression \citep{bach2010self,mcmahan2012open,bach2013non, bach2014adaptivity}, including the negative result of \citet{hazan2014logistic}. The more general online learning setting that is described above allows \emph{improper} learners which may generate $\zh_t$ arbitrarily using knowledge of $x_t$. 

\paragraph{Fast rates and mixability.} Conditions under which \emph{fast rates} for online/statistical learning (meaning that average regret or generalization error scales as $\tilde{O}(1/n)$ rather than $O(1/\sqrt{n})$) are achievable have been studied extensively (see \citep{van2015fast} and the references therein). For the purpose of this paper, a rather general condition on the structure of the problem that leads to fast rates is Vovk's notion of \emph{mixability}~\citep{vovk1995game}, which we define in an abstract setting below. Consider a prediction problem where the set of outcomes is $\cY$ and the set of predictions is $\cZ$, and the loss of a prediction on an outcome is given by a function $\ell: \cZ \times \cY \rightarrow \R$. For a parameter $\eta > 0$, the loss function $\ell$ is said to be $\eta$-mixable if for any probability distribution $\pi$ over $\cZ$, there exists a \emph{``mixed''} prediction $z_\pi \in \cZ$ such that for all possible outcomes $y \in \cY$, we have $\En_{z \sim \pi}[\exp(-\eta \ell(z, y))] \leq \exp(-\eta \ell(z_\text{mix}, y))$.

Now suppose that we are given a finite reference class of predictors $\cF$ consisting of functions $f: \cX \rightarrow \cZ$, where $\cX$ is the input space. The problem of online learning over $\cF$ with an $\eta$-mixable loss function admits an \emph{improper} algorithm, viz. Vovk's Aggregating Algorithm~\citep{vovk1995game}, with regret bounded by $\frac{\log|\cF|}{\eta}$, a \emph{constant} independent of the number of prediction rounds $n$. The algorithm simply runs the standard exponential weights/Hedge algorithm \citep{PLG} with learning rate set to $\eta$. In each round $t$, given an input $x_t$, the distribution over $\cF$ generated by the exponential weights algorithm induces a distribution over $\cZ$ via the outputs of the predictors on $x_t$, and the Aggregating Algorithm plays the mixed prediction for this distribution over $\cZ$. Finally, if $\cF$ is infinite, under appropriate conditions on $\cF$ fast rates can be obtained by running a continuous version of the same algorithm. This is the strategy we employ in this paper for the logistic loss.


\section{Improved Rates for Online Logistic Regression}
\label{sec:logistic}

We start by providing a simple proof of the mixability of the multiclass logisitic loss function for the case when the outcomes $y$ is a class in $\brk{K}$ (i.e. the unweighted case).
\begin{proposition} \label{prop:unweighted-mixability}
The unweighted multiclass logistic loss $\ls: \R^K \times \brk{K} \rightarrow \R$ defined as $\ls(z, y) = -\log(\mb{\sigma}(z)_y)$ is $1$-mixable.
\end{proposition}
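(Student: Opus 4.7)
The plan is to unpack the definition of $1$-mixability and observe that the logistic loss is uniquely structured so that the mixed prediction can be written down in closed form using the inverse softmax $\mb{\sigma}^{+}$.

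First, I will rewrite the $1$-mixability condition for this specific loss. With $\ell(z, y) = -\log \mb{\sigma}(z)_y$, the quantity $\exp(-\ell(z, y))$ equals $\mb{\sigma}(z)_y$, so the mixability inequality $\En_{z \sim \pi}[\exp(-\ell(z, y))] \leq \exp(-\ell(z_\pi, y))$ becomes the requirement that there exists $z_\pi \in \R^K$ with
\[
\mb{\sigma}(z_\pi)_y \;\geq\; \En_{z \sim \pi}[\mb{\sigma}(z)_y] \qquad \text{for every } y \in [K].
\]

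Second, I will define $q \in \R^K$ by $q_y \ldef \En_{z \sim \pi}[\mb{\sigma}(z)_y]$. Because $\mb{\sigma}(z) \in \Delta_K$ for every $z$, linearity of expectation gives $q \in \Delta_K$. Therefore the vector on the right-hand side of the inequality above is itself a probability distribution, summing to $1$ over $y$.

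Third, I will produce the mixed prediction explicitly: take $z_\pi \ldef \mb{\sigma}^{+}(q)$, i.e., $(z_\pi)_y = \log q_y$. The properties of $\mb{\sigma}^{+}$ recorded in the preliminaries give $\mb{\sigma}(z_\pi) = q$, so the required inequality holds with \emph{equality} for every $y$. This establishes $1$-mixability. (If some $q_y = 0$, which can only happen in a degenerate limit since $\mb{\sigma}(z)_y > 0$ for all finite $z$, one can take $(z_\pi)_y = -M$ and let $M \to \infty$; the inequality $\mb{\sigma}(z_\pi)_y \geq 0 = q_y$ is then satisfied trivially.)

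There is essentially no obstacle: the only ``trick'' is recognizing that the average softmax $\En_{z \sim \pi}[\mb{\sigma}(z)]$ is itself a distribution in $\Delta_K$, so the identity map from $\Delta_K$ to itself can be achieved by the softmax, and the inverse softmax exhibits the corresponding logit vector. I expect the writeup to be one short paragraph.
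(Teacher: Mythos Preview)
Your proposal is correct and is essentially identical to the paper's own proof: both define $z_\pi = \mb{\sigma}^{+}\!\prn*{\En_{z\sim\pi}[\mb{\sigma}(z)]}$ and then use $\mb{\sigma}(\mb{\sigma}^{+}(p))=p$ to obtain the mixability inequality with equality. Your extra remark about the degenerate case $q_y=0$ is a harmless elaboration (since $\mb{\sigma}(z)_y>0$ for all finite $z$, the expectation is strictly positive anyway).
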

\begin{proof}
The proof is by construction. Given a distribution $\pi$ on $\R^K$, define $z_\pi = \mb{\sigma}^{+}(\En_{z \sim \pi}[\mb{\sigma}(z)])$. Now, for any $y \in \brk{K}$, we have $\En_{z \sim \pi}[\exp(-\ls(z, y))] = \En_{z \sim \pi}[\mb{\sigma}(z)_y] = \mb{\sigma}(z_\pi)_y = \exp(-\ls(z_\pi, y))$. The second equality above uses the fact that for any $p \in \Delta_K$, $\mb{\sigma}(\mb{\sigma}^{+}(p)) = p$. Thus, $\ls$ is $1$-mixable.
\end{proof}
With a little more work, we can prove that the weighted multiclass logistic loss function is also mixable with a constant that inversely depends on the total weight. The proof appears in \pref{app:proofs}.
\begin{proposition} \label{prop:generalized_multiclass_log_mixable}
Let $\cY\ldef\crl*{y\in\bbR^{K}_{+}\mid{} \nrm*{y}_{1}\leq{}L}$ for some parameter $L > 0$. The weighted multiclass logistic loss $\ls: \R^K \times \cY \rightarrow \R$ defined as $\ls(z, y) = -\sum_{k\in\brk{K}}y_{k}\log(\mb{\sigma}(z)_{k})$ is $\frac{1}{L}$-mixable. For any distribution $\pi$ on $\R^K$, the mixed prediction $z_\pi = \mb{\sigma}^{+}(\En_{z \sim \pi}[\mb{\sigma}(z)])$ certifies $\frac{1}{L}$-mixability of $\ls$.
\end{proposition}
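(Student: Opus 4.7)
The plan is to mirror the clean construction of Proposition~\ref{prop:unweighted-mixability} and reduce the mixability inequality to a single application of a generalized H\"older / weighted AM--GM inequality, with an extra bookkeeping step to handle the fact that the coordinates of $y$ only sum to at most $L$, not exactly $L$.

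First I would unwind the mixability inequality. Since $\ls(z,y) = -\sum_{k}y_{k}\log\mb{\sigma}(z)_{k}$, for any distribution $\pi$ on $\R^{K}$ and any $y\in\cY$,
\[
\En_{z\sim\pi}\!\left[\exp\!\left(-\tfrac{1}{L}\ls(z,y)\right)\right]
= \En_{z\sim\pi}\!\left[\prod_{k\in[K]}\mb{\sigma}(z)_{k}^{y_{k}/L}\right],
\qquad
\exp\!\left(-\tfrac{1}{L}\ls(z_{\pi},y)\right)
= \prod_{k\in[K]}\mb{\sigma}(z_{\pi})_{k}^{y_{k}/L}.
\]
So the target inequality reduces to showing $\En_{\pi}\bigl[\prod_{k}\mb{\sigma}(z)_{k}^{\alpha_{k}}\bigr] \le \prod_{k}\En_{\pi}[\mb{\sigma}(z)_{k}]^{\alpha_{k}}$, where $\alpha_{k}\ldef y_{k}/L\ge 0$ and $\sum_{k}\alpha_{k}\le 1$.

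Next I would verify that the proposed mixed prediction produces the marginal mean on the right-hand side. Because $\mb{\sigma}^{+}(p)_{k}=\log p_{k}$, the definition $z_{\pi}=\mb{\sigma}^{+}(\En_{\pi}[\mb{\sigma}(z)])$ gives $e^{(z_{\pi})_{k}} = \En_{\pi}[\mb{\sigma}(z)_{k}]$, and since $\sum_{j}\En_{\pi}[\mb{\sigma}(z)_{j}] = 1$, the normalizing denominator of $\mb{\sigma}(z_{\pi})_{k}$ equals $1$. Hence $\mb{\sigma}(z_{\pi})_{k} = \En_{\pi}[\mb{\sigma}(z)_{k}]$, exactly the right-hand side above.

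The main step is then the inequality $\En\bigl[\prod_{k}X_{k}^{\alpha_{k}}\bigr]\le\prod_{k}\En[X_{k}]^{\alpha_{k}}$ for non-negative random variables $X_{k}=\mb{\sigma}(z)_{k}$ and weights $\alpha_{k}$ with $\sum_{k}\alpha_{k}\le 1$. If $\sum_{k}\alpha_{k}=1$ this is precisely H\"older's inequality applied with conjugate exponents $1/\alpha_{k}$. The one genuine subtlety, which will be the only real obstacle, is the case $\sum_{k}\alpha_{k}<1$: here I would append a dummy coordinate with weight $\alpha_{0}\ldef 1-\sum_{k}\alpha_{k}$ and the constant random variable $X_{0}\equiv 1$. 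Then the exponents $\alpha_{0},\alpha_{1},\dots,\alpha_{K}$ sum to exactly $1$, and H\"older yields
\[
\En\!\left[\prod_{k\in[K]}X_{k}^{\alpha_{k}}\right]
= \En\!\left[1^{\alpha_{0}}\prod_{k\in[K]}X_{k}^{\alpha_{k}}\right]
\le 1^{\alpha_{0}}\prod_{k\in[K]}\En[X_{k}]^{\alpha_{k}}
= \prod_{k\in[K]}\En[X_{k}]^{\alpha_{k}}.
\]
Combining this with the identification $\mb{\sigma}(z_{\pi})_{k}=\En_{\pi}[\mb{\sigma}(z)_{k}]$ from the previous step completes the proof and certifies $\tfrac{1}{L}$-mixability of $\ls$ via the stated $z_{\pi}$.
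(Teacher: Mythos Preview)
Your argument is correct. Both your proof and the paper's proof reduce the claim to the single inequality
\[
\En_{z\sim\pi}\Big[\prod_{k}\mb{\sigma}(z)_{k}^{\alpha_{k}}\Big] \le \prod_{k}\big(\En_{z\sim\pi}[\mb{\sigma}(z)_{k}]\big)^{\alpha_{k}},\qquad \alpha_{k}=y_{k}/L,\ \sum_{k}\alpha_{k}\le 1,
\]
and both verify that $\mb{\sigma}(z_{\pi})_{k}=\En_{\pi}[\mb{\sigma}(z)_{k}]$ so that the right-hand side equals $\exp(-\tfrac{1}{L}\ls(z_{\pi},y))$. The difference is in how this inequality is established. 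The paper proves that the monomial $p\mapsto\prod_{k}p_{k}^{\alpha_{k}}$ is concave on $\bbR^{K}_{+}$ whenever $\sum_{k}\alpha_{k}\le 1$ by an explicit Hessian computation (their Lemma~\ref{lem:monomial_concave}), then invokes Jensen; this is phrased as $\tfrac{1}{L}$-exp-concavity of the log loss $p\mapsto -\sum_{k}y_{k}\log p_{k}$, which implies mixability with the mean as the aggregated prediction. You instead obtain the inequality directly from the generalized H\"older inequality, handling the defect $\sum_{k}\alpha_{k}<1$ with a dummy coordinate $X_{0}\equiv 1$. Your route is shorter and avoids the Hessian calculation; the paper's route makes the exp-concavity structure explicit, which is conceptually useful since exp-concavity is the standard sufficient condition for mixability and is what one would check in other settings.
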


We are now ready to state a variant of Vovk's Aggregating Algorithm, \pref{alg:mixing_multiclass} for the online multiclass logistic regression problem from \pref{sec:prelims}, operating over a class of linear predictors parameterized by weight matrices $W$ in some convex set $\cW$. The algorithm and its regret bound (proved in \pref{app:proofs}) are given in some generality that is useful for applications.

\begin{algorithm}[h]
\caption{}
\label{alg:mixing_multiclass}
\begin{algorithmic}[1]
\Procedure{}{decision set $\cW$, smoothing parameter $\mu\in\brk{0,1/2}$.}
\State Initialize $P_1$ to be the uniform distribution over $\cW$.
\For{$t=1,\ldots,n$}
\State{Obtain $x_t$ and predict $\zh_{t}=\mb{\sigma}^{+}\prn*{\smooth\prn*{\En_{W\sim{}P_t}\brk*{\mb{\sigma}(Wx_t)}}}$.}
\State Obtain $y_t$ and define $P_{t+1}$ as the distribution over $\cW$ with density \hspace*{1in} $P_{t+1}(W) \propto \exp\prn{-\tfrac{1}{L}\textstyle{\sum}_{s=1}^{t}\ls(Wx_s, y_s)}$.
\EndFor
\EndProcedure
\end{algorithmic}
\end{algorithm}

\begin{theorem}
\label{thm:multiclass_logistic_regret}
The regret of \pref{alg:mixing_multiclass} is bounded by
\begin{equation}
\label{eq:regret_main}
\sum_{t=1}^{n}\ls(\zh_t,y_t) - \inf_{W\in\cW}\sum_{t=1}^{n}\ls(Wx_t,y_t) \leq{} 5LD_{\cW}\cdot{}\log\prn*{\frac{BRn}{D_{\cW}} + e} + 2\mu\sum_{t=1}^{n}\nrm*{y_t}_{1},
\end{equation}
where $D_{\cW}\ldef{}\mathrm{dim}(\cW)\leq{}dK$ is the linear-algebraic dimension of $\cW$.
The predictions $(\zh_t)_{t\leq{}n}$ generated by the algorithm satisfy $\nrm*{\zh_t}_{\infty} \leq{} \log(K/\mu)$.
\end{theorem}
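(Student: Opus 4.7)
The plan is to decompose the regret into three contributions: (i) the ideal aggregating-algorithm regret against the \emph{unsmoothed} mixed prediction, which is controlled by mixability; (ii) the per-round bias introduced by the smoothing operator $\smooth$; and (iii) a covering/volume argument that turns the aggregating-algorithm bound against a continuous prior into a bound against an arbitrary comparator $W^{\star}\in\cW$.

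First I would define the unsmoothed aggregate $\bar z_t=\mb{\sigma}^{+}(\En_{W\sim P_t}[\mb{\sigma}(Wx_t)])$. By \pref{prop:generalized_multiclass_log_mixable}, $\bar z_t$ witnesses $(1/L)$-mixability of $\ls$, so the standard telescoping for Vovk's Aggregating Algorithm (with learning rate $1/L$ and prior $P_1$ uniform on $\cW$) gives
\begin{equation*}
\sum_{t=1}^{n}\ls(\bar z_t,y_t)\;\le\;-L\log\En_{W\sim P_1}\Bigl[\exp\Bigl(-\tfrac{1}{L}\textstyle\sum_{t=1}^{n}\ls(Wx_t,y_t)\Bigr)\Bigr].
\end{equation*}
Next I would account for the smoothing step. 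Writing $q_t=\En_{W\sim P_t}[\mb{\sigma}(Wx_t)]$, the algorithm's prediction satisfies $\mb{\sigma}(\zh_t)=\smooth(q_t)$, so $\ls(\zh_t,y_t)-\ls(\bar z_t,y_t)=\sum_k y_{t,k}\log(q_{t,k}/\smooth(q_t)_k)\le \sum_k y_{t,k}\log\tfrac{1}{1-\mu}\le 2\mu\|y_t\|_1$ since $\smooth(q_t)_k\ge(1-\mu)q_{t,k}$ and $\log\tfrac{1}{1-\mu}\le 2\mu$ for $\mu\le 1/2$. This produces the additive $2\mu\sum_t\|y_t\|_1$ term exactly.

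The main work is to convert the exponential-integral RHS into a comparator-based bound. Fix $W^{\star}\in\cW$ and consider the Euclidean neighborhood $N_\epsilon(W^{\star})=\{W\in\cW:\|W_k-W^{\star}_k\|\le\epsilon\ \forall k\}$. Using the identity $\ls(z,y)=-\langle y,z\rangle+\|y\|_1\log\sum_j e^{z_j}$ one checks that $\ls$ is $2\|y\|_1$-Lipschitz in $z$ with respect to $\ell_\infty$, so for every $W\in N_\epsilon(W^{\star})$, $\|(W-W^{\star})x_t\|_\infty\le R\epsilon$ yields $\ls(Wx_t,y_t)\le\ls(W^{\star}x_t,y_t)+2R\epsilon\|y_t\|_1$. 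Restricting the expectation to $N_\epsilon(W^{\star})$ then gives
\begin{equation*}
-L\log\En_{W\sim P_1}[\cdots]\;\le\;\sum_{t=1}^{n}\ls(W^{\star}x_t,y_t)+2R\epsilon\sum_{t=1}^{n}\|y_t\|_1+L\log\tfrac{1}{P_1(N_\epsilon(W^{\star}))}.
\end{equation*}
Since $\cW$ lies inside a product of $D_{\cW}/\dim$-many balls of radius $B$, a standard volume ratio bound gives $P_1(N_\epsilon(W^{\star}))\ge(\epsilon/(cB))^{D_{\cW}}$ (boundary effects are absorbed into the constant by extending $\cW$ to a symmetrized domain and noting $P_1$ is uniform). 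Optimizing with $\epsilon\asymp D_{\cW}/(Rn)$ and using $\|y_t\|_1\le L$ to bound $\sum_t\|y_t\|_1\le nL$ in the Lipschitz term balances the two terms and produces the claimed $5LD_{\cW}\log(BRn/D_{\cW}+e)$; the ``$+e$'' absorbs edge cases where $BRn/D_{\cW}$ is small. Finally, the prediction-norm bound follows immediately: $\zh_{t,k}=\log\smooth(q_t)_k\in[\log(\mu/K),0]$, so $\|\zh_t\|_\infty\le\log(K/\mu)$.

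The step I expect to require the most care is the volume bound $P_1(N_\epsilon(W^{\star}))\ge(\epsilon/(cB))^{D_{\cW}}$: one must handle comparators $W^{\star}$ on the boundary of $\cW$, and one must set up the norm on $\cW$ so that the Lipschitz estimate $\|(W-W^{\star})x_t\|_\infty\le R\epsilon$ and the volume-ratio estimate use the \emph{same} metric, so that $D_{\cW}$ enters the exponent correctly rather than $dK$ unconditionally. Everything else (mixability telescoping, the smoothing bias calculation, the prediction-norm bound) is either invoked from \pref{prop:generalized_multiclass_log_mixable} or is a short direct computation.
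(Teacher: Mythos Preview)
Your proposal follows essentially the same route as the paper: mixability plus telescoping for the unsmoothed prediction, a smoothing-bias correction (identical to the paper's \pref{lem:logistic_bounded}), and a localization-plus-Lipschitz argument to pass from the exponential integral to a fixed comparator $W^\star$. The only substantive difference is in how the localization is carried out, and the paper's execution is cleaner in exactly the way you anticipated would be delicate.

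Where you localize to a norm-ball neighborhood $N_\epsilon(W^\star)$ and appeal to a ``standard volume ratio bound'' $P_1(N_\epsilon(W^\star))\ge(\epsilon/(cB))^{D_\cW}$, the paper instead localizes to the dilated set $S=\theta W^\star+(1-\theta)\cW$ for $\theta\in[0,1)$ and integrates over $S$ via the change of variables $W\mapsto \theta W^\star + (1-\theta)W$. This single move resolves all three concerns you flagged: (i) $S\subseteq\cW$ by convexity, so comparators on the boundary need no special handling; (ii) the Jacobian is exactly $(1-\theta)^{D_\cW}$, so $D_\cW$ (not $dK$) appears in the exponent with no geometry-dependent constant $c$; (iii) every $W'\in S$ automatically satisfies $\nrm*{(W'-W^\star)x_t}_\infty\le 2(1-\theta)B\nrm*{x_t}_\star$, matching the Lipschitz estimate in the same norm. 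Your proposed workarounds (extending $\cW$ to a symmetrized domain, invoking a product-of-balls structure) do not work as stated for an arbitrary convex $\cW$ of linear-algebraic dimension $D_\cW$; in fact, the cleanest way to \emph{prove} the volume-ratio inequality you invoke for general convex $\cW$ is precisely this dilation trick. After localizing, both arguments finish identically: apply the $2L$-Lipschitz bound (the paper's \pref{lem:multiclass_lipschitz}, your direct computation), then optimize $1-\theta$ (resp.\ $\epsilon$) to balance the logarithmic and linear terms.
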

Increasing the smoothing parameter $\mu$ only degrades the performance of \pref{alg:mixing_multiclass}. However, smoothing ensures that each prediction $\zh_t$ is bounded, which is important for our applications.

For the special case of multiclass prediction when $y \in [K]$, this algorithm enjoys a regret bound of $O(dK\log(\frac{BRn}{dK}+e))$. It thus provides a positive resolution to the open problem of \citet{mcmahan2012open} (in fact, with an exponentially better dependence on $B$ than what the open problem asked for), using improper predictions to circumvent the lower bound of \citet{hazan2014logistic}.

Turning to efficient implementation, it has been noted (e.g. \citep{hazan2007logarithmic}) that log-concave sampling or integration techniques \citep{lovasz2006fast, lovasz2007geometry} can be applied to compute the expectation in \pref{alg:mixing_multiclass} in polynomial time. The following proposition makes this idea rigorous\footnote{A subtlety is that since $\hat{z}_t$ is evaluated inside the nonlinear logistic loss we cannot exploit linearity of expectation.} and is proven formally in \pref{app:efficient}. We note that this is not a practical algorithm, however, and obtaining a truly practical algorithm with a modest polynomial dependence on the dimension is a significant open problem.
\begin{proposition}
\label{prop:alg_polytime}
\pref{alg:mixing_multiclass} can be implemented approximately so that the regret bound \pref{eq:regret_main} is obtained up to additive constants in time $\mathrm{poly}(d, n, B, R, K, L)$.
\end{proposition}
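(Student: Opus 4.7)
The plan is to compute the prediction $\zh_t = \mb{\sigma}^{+}(\smooth(\En_{W\sim P_t}[\mb{\sigma}(Wx_t)]))$ by Monte Carlo, drawing approximately i.i.d.\ samples from $P_t$ using a log-concave sampler. The key structural observation is that $P_t$ is log-concave on the convex set $\cW$: its unnormalized log-density $-\tfrac{1}{L}\sum_{s<t}\ls(Wx_s,y_s)$ is a sum of concave terms, because the multiclass logistic loss $\ls(\cdot,y)$ is convex in its first argument and $W\mapsto Wx_s$ is linear. I therefore invoke a polynomial-time log-concave sampler (e.g., the ball-walk or hit-and-run chains of \citet{lovasz2006fast, lovasz2007geometry}) to produce samples $W^{(1)},\ldots,W^{(N)}$ whose joint distribution is within total-variation distance $\delta$ of $P_t^{\otimes N}$, in time $\mathrm{poly}(d,K,t,B,R,L,\log(1/\delta))$.

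Given these samples, I form the empirical estimator $\hat p_t = \frac{1}{N}\sum_i \mb{\sigma}(W^{(i)} x_t)\in\Delta_K$ and use $\tilde z_t = \mb{\sigma}^{+}(\smooth(\hat p_t))$ as the approximate prediction. Since each coordinate of $\mb{\sigma}(Wx_t)$ lies in $[0,1]$, Hoeffding's inequality with a union bound over the $K$ coordinates gives $\nrm*{\hat p_t - \En_{W\sim P_t}[\mb{\sigma}(Wx_t)]}_\infty \leq \eps$ with probability $1-\delta$, provided $N = \Omega(\eps^{-2}\log(K/\delta))$; the TV error in sampling contributes an additional $\delta$.

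The remaining step is to propagate this estimation error into the regret. After smoothing, every coordinate of $\smooth(\hat p_t)$ is at least $\mu/K - \eps \geq \mu/(2K)$ (as long as $\eps\leq \mu/(2K)$), so $\mb{\sigma}^{+}=\log(\cdot)$ is $O(K/\mu)$-Lipschitz in $\ell_\infty$ on this region. A direct calculation from the representation $\ls(z,y)=\sum_j y_j\log(1+\sum_{k\neq j}e^{z_k-z_j})$ shows that $\ls(\cdot,y)$ is $O(\nrm*{y}_1)\leq O(L)$-Lipschitz in $\ell_\infty$. Composing, the per-round deviation is $|\ls(\tilde z_t,y_t)-\ls(\zh_t,y_t)|\leq O(LK\eps/\mu)$. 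Choosing $\eps = c\mu/(LKn)$ for a small constant $c$ and $\delta = 1/n^2$ makes the cumulative additional regret $O(1)$, and a union bound ensures the sampler succeeds in every round with constant probability. Since the regret bound is meaningful only when $\mu$ is at most an inverse polynomial in $n$ and $L$, the resulting $N$ and per-sample cost are both $\mathrm{poly}(d,n,B,R,K,L)$.

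The main obstacle, as flagged in the footnote, is exactly this propagation: because $\mb{\sigma}^{+}$ sits outside the expectation, sampling error cannot be absorbed by linearity and must be pushed through a composition that is a priori unbounded near the simplex boundary. The smoothing operator $\smooth$ is what rescues the argument — it guarantees the input to $\log$ stays bounded away from zero, which yields the $O(K/\mu)$ Lipschitz constant above and thereby converts a $\mathrm{poly}(1/n)$ additive error in the expectation estimate into a $\mathrm{poly}(1/n)$ additive loss per round.
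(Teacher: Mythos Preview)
Your proposal is correct and follows essentially the same approach as the paper: sample approximately from the log-concave density $P_t$, form a Monte Carlo estimate of $\En_{W\sim P_t}[\mb{\sigma}(Wx_t)]$, control the $\ell_\infty$ error via Hoeffding plus a union bound over the $K$ coordinates, and then propagate this error through the smoothed prediction with the observation that smoothing keeps each coordinate at least $\mu/K$ so the log is well-conditioned. The only cosmetic difference is that the paper bounds the per-round excess loss directly as $\sum_k y_k \log(p_k/\tilde{p}_k)\leq \veps KL/\mu$, whereas you factor it as a composition of an $O(K/\mu)$-Lipschitz map ($\log$ on $[\mu/(2K),1]$) with the $O(L)$-Lipschitz loss; both routes yield the same $O(LK\veps/\mu)$ bound and the same choice $\veps=\Theta(\mu/(LKn))$.
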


Finally, to conclude this section we state a lower bound, which shows that the $\log(B)$ factor in the regret bound in \pref{thm:multiclass_logistic_regret} cannot be improved for most values of $B$. This lower bound is by reduction to learning halfspaces with a margin in a Perceptron-type setting: We first show that \pref{alg:mixing_multiclass} can be configured to give a mistake bound of $O\prn*{d\log(\log(n)/\gamma)}$ for binary classification with halfspaces and margin $\gamma$,\footnote{It is a folklore result that this type of margin bound can be obtained by running a variant of the ellipsoid method online.} then give a lower bound against this type of rate. 

For simplicity, the lower bound is only stated in the binary outcome settting and we use the standard definition of the binary logistic loss, $\ls_\text{bin}$ from \pref{sec:prelims}. The proof is in \pref{app:proofs}.
\begin{theorem}[Lower bound]
\label{thm:logb_lower_bound}
Consider the binary logistic regression problem over the class of linear predictors with parameter set $\cW = \{w \in \R^{d}|\ \nrm*{w}_2 \leq B\}$ with $B=\Omega(\sqrt{d}\log(n))$. Then for any algorithm for prediction with the binary logistic loss, there is a sequence of examples $(x_t, y_t) \in \R^d \times \{-1, 1\}$ for $t \in [n]$ with $\nrm*{x_t}_2\leq{}1$ such that the regret of the algorithm is $\Omega\prn*{
d\log\prn*{
\frac{B}{\sqrt{d}\log(n)}
}
}$.
\end{theorem}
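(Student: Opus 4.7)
The plan is to reduce from online learning of $\gamma$-margin linear classifiers in $\R^d$ to binary logistic regression over the scale-$B$ class, and then invoke a volume-based mistake lower bound for the former.

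First I would show that any margin-$\gamma$ realizable halfspace sequence is a small-comparator-loss logistic regression instance. Given $(x_t,y_t) \in \R^d \times \{-1,+1\}$ with $\|x_t\|_2 \leq 1$ separable by some unit vector $w^*$ with $y_t \langle w^*, x_t\rangle \geq \gamma$, the scaled predictor $Bw^* \in \cW$ achieves per-round loss $\log(1+e^{-B\gamma})$. Choosing $B = \log(n)/\gamma$ makes this at most $1/n$, so the comparator total loss is at most $1$. Next, I would relate regret to mistakes: on any round where the learner's prediction $\zh_t$ has the wrong sign, $\ls_\text{bin}(\zh_t, y_t) \geq \log 2$, so if the algorithm has regret $R$ and makes $M$ mistakes, then $M\log 2 \leq 1 + R$, hence $M = O(R)$.

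Second, I would establish that any online algorithm must make $\Omega\!\prn*{d \log(1/(\gamma\sqrt d))}$ mistakes on some margin-$\gamma$ realizable sequence when $\gamma \leq c/\sqrt d$. The argument is the standard volume/ellipsoid lower bound: an adversary maintains the version space $V_t$ of unit vectors consistent with all past constraints at margin $\gamma$; on each round it picks $x_t$ in a direction where $V_t$ is sufficiently spread, observes $\zh_t$, and sets $y_t$ so that both (a) the learner is wrong and (b) the halfspace-slab cut $V_{t+1} = V_t \cap \{w : y_t \langle w, x_t\rangle \geq \gamma\}$ has volume at most a constant fraction of $\vol(V_t)$. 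Since any realizable $V_t$ must still contain a full $\Omega(\gamma)$-ball around some $w^*$, the total number of allowed halvings is $\Omega(d\log(1/(\gamma\sqrt d)))$.

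Combining these with $\gamma = \log(n)/B$ (which lies in the regime $\gamma \leq O(1/\sqrt d)$ precisely when $B = \Omega(\sqrt d \log n)$, matching the hypothesis) gives the desired regret lower bound $\Omega(d\log(B/(\sqrt d \log n)))$. The main obstacle is the second step: exhibiting a single adversary that simultaneously forces a mistake \emph{and} a constant-factor volume shrinkage on each round while keeping the entire sequence realizable by one margin-$\gamma$ classifier. The $\sqrt d$ inside the logarithm should emerge from the dimensional scaling of the ball of consistent classifiers around any $w^*$: once $\gamma$ exceeds $\Omega(1/\sqrt d)$ the relative volume of this ball in the unit ball is no longer small enough to support repeated halving, which is exactly why the theorem requires $B = \Omega(\sqrt d \log n)$.
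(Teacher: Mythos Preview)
Your reduction step is correct and is exactly what the paper does: choose $\gamma=\log(n)/B$, observe that the comparator loss over $\cW$ is at most $\sum_t\log(1+e^{-B\gamma})\le 1$, and lower bound the algorithm's logistic loss by $\log 2$ times the number of sign mistakes. The hypothesis $B=\Omega(\sqrt{d}\log n)$ translates to $\gamma=O(1/\sqrt{d})$, which is the regime needed for the mistake lower bound.

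Where you diverge from the paper is the mistake lower bound itself. The paper does not use a version-space volume argument. Instead it builds an explicit $\gamma$-shattered tree of depth $\Theta(d\log(1/(\gamma\sqrt d)))$ and applies the standard random-label argument (their Lemma on fat-shattering trees): it first constructs a one-dimensional depth-$\lfloor\log_2(2/\delta)\rfloor$ binary-search tree for thresholds, then concatenates $d$ independent copies across coordinates, and finally exhibits a single weight vector (with an explicit bias coordinate) that realizes every path with margin $\delta^2/(4\sqrt{5d})$. The $\sqrt d$ in the final bound arises transparently as the $\ell_2$ norm of this weight vector before normalization. Because the tree is shattered, the adversary can set $y_t=-\mathrm{sgn}(\zh_t)$ on every round and realizability by a \emph{single} margin-$\gamma$ classifier is guaranteed by construction; there is no tension between forcing mistakes and maintaining realizability.

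Your volume sketch, by contrast, leaves exactly this tension unresolved, and two of the supporting claims are shaky. First, ``any realizable $V_t$ must contain a full $\Omega(\gamma)$-ball around some $w^*$'' is not true as stated: if $w^*$ meets the constraints with margin exactly $\gamma$, no perturbation of $w^*$ does, so $V_t$ need not contain any ball at all. You would have to engineer the cuts so that some $w^*$ achieves margin strictly larger than $\gamma$ throughout. Second, even granting a terminal $\Theta(\gamma)$-ball, the naive volume ratio $\vol(B_2^d)/\vol(\gamma B_2^d)=\gamma^{-d}$ yields $d\log(1/\gamma)$ halvings, not $d\log(1/(\gamma\sqrt d))$; the $\sqrt d$ does not fall out of a pure volume count and your gesture toward ``dimensional scaling'' does not supply it. These gaps are fixable in principle, but the paper's shattered-tree route avoids them entirely.
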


\paragraph{Relation to Bayesian Model Averaging}

To the best of our knowledge, the mixability of the logistic loss has surprisingly not appeared in the literature. However, \pref{alg:mixing_multiclass} can be seen as an instance of Bayesian model averaging, and consequently the analysis of \cite{kakade2005online} can be applied to derive the same $O(d\log(Bn/d))$ regret bound as in \pref{thm:multiclass_logistic_regret} in the binary setting. Specifically, it suffices to apply their Theorem 2.2 with parameter $\nu^{2}=B^{2}/d$. This highlights that Bayesian approaches can have great utility even when analyzed outside of the Bayesian framework.


\section{Application: Bandit Multiclass Learning}
\label{sec:bandit_multiclass}
The now apply our techniques to the \emph{bandit multiclass} problem. This problem, first studied by \citet{kakade2008efficient}, considers the protocol of online multiclass learning in \pref{sec:prelims} with nature choosing $y_t \in [K]$ in each round, but with the added twist of bandit feedback: in each round, the learner predicts a class $\yh_t \sim p_t$ and receives feedback only on whether the prediction was correct or not, i.e. $\ind[\hy_t \neq y_t]$. The goal is to minimize regret with respect to a reference class of linear predictors, using some appropriate surrogate loss function for the 0-1 loss. 

\citet{kakade2009complexity} used the multiclass hinge loss $\ell_\text{hinge}(W, (x_t, y_t)) = \max_{k \in [K] \setminus \{y_t\}}[1 + \tri*{W_k, x_t} - \tri*{W_{y_t}, x_t}]_+$ and gave an algorithm based on the multiclass Perceptron algorithm achieving $O(n^{2/3})$ regret. For a Lipschitz continuous surrogate loss function, running the EXP4 algorithm \citep{auer2002nonstochastic} on a suitable discretization of the space of all linear predictors obtains $\tilde{O}(\sqrt{n})$ regret, albeit very inefficiently, i.e. with exponential dependence on the dimension. 
In COLT 2009, \citet{abernethyR09a} posed the open problem of obtaining an {\em efficient} algorithm for the problem with $O(\sqrt{n})$ regret. Specifically, they suggested the multiclass logistic loss as an appropriate surrogate loss function for the problem. \citet{hazan2011newtron} solved the open problem and obtained an algorithm, Newtron,  based on the Online Newton Step algorithm \citep{hazan2007logarithmic} with $\tilde{O}(\sqrt{n})$ regret for the case when norm of the linear predictors scales at most logarithmically in $n$. \citet{beygelzimerOZ17} also solved the open problem presenting a different algorithm called SOBA. SOBA is analyzed using a different family of surrogate loss functions parameterized by a scalar $\eta \in [0, 1]$ with $\eta = 0$ corresponding to the hinge loss and $\eta = 1$ corresponding to the squared hinge loss. For all values of $\eta \in [0, 1]$, SOBA simultaneously obtains relative bound mistake bounds of $\tilde{O}(\frac{1}{\eta}\sqrt{n})$ with the comparator's loss measured with respect to the corresponding loss function. 

Now we present an algorithm, OBAMA (for {\em Online Bandit Aggregation Multiclass Algorithm}), depicted in \pref{alg:bandit_multiclass} in \pref{app:bandit_multiclass_proofs}, that obtains an $\tilde{O}(\sqrt{n})$ relative mistake bound for the multiclass logistic loss, thus providing another solution to the open problem of \citet{abernethyR09a}. The mistake bound of OBAMA trumps that of Newtron, since both algorithms rely on the same loss function, and OBAMA obtains an $\tilde{O}(\sqrt{n})$ relative mistake bound on a larger range of parameter values compared to Newtron. While SOBA also has an $\tilde{O}(\sqrt{n})$ relative mistake bound, the two bounds are incomparable since they are relative to the comparator's loss measured using different loss functions.

\begin{theorem}
\label{thm:bandit_multiclass}
There is a setting of the smoothing parameter $\mu$ such that OBAMA enjoys the following mistake bound:
\[ 
\sum_{t=1}^{n}\ind\brk*{\yh_t\neq{}y_t}\leq\inf_{W \in \cW} \sum_{t=1}^n \ell(Wx_t, y_t) + O\left(\min\left\{dK^2e^{2BR}\log\prn*{\tfrac{BRn}{dK} + e},\ \sqrt{dK^2\log(\tfrac{BRn}{dK} + e)n}\right\}\right).
\]
\end{theorem}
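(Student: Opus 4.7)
The proof plan is to realize OBAMA as an invocation of \pref{alg:mixing_multiclass} on a sequence of importance-weighted pseudo-labels constructed from the bandit feedback, then combine its regret guarantee with the elementary link between the logistic loss and the $0$-$1$ loss. Concretely, OBAMA runs \pref{alg:mixing_multiclass} with smoothing parameter $\mu$ and weight bound $L=K/\mu$; by \pref{thm:multiclass_logistic_regret}, its predictions satisfy $\nrm*{\zh_t}_\infty\leq \log(K/\mu)$, so every coordinate of $\mb{\sigma}(\zh_t)$ is at least $\mu/K$. In each round, OBAMA samples $\yh_t\sim \mb{\sigma}(\zh_t)$, observes $\ind[\yh_t\neq y_t]$, and forms the pseudo-label $\tilde{y}_t \ldef \frac{\ind[\yh_t=y_t]}{\mb{\sigma}(\zh_t)_{y_t}}\, e_{y_t}\in\R_+^K$, which is unbiased ($\En[\tilde{y}_t\mid\cF_{t-1}] = e_{y_t}$) and satisfies $\nrm*{\tilde{y}_t}_1\leq K/\mu=L$. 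The pseudo-label is fed back to \pref{alg:mixing_multiclass} as the round-$t$ label, closing the loop.

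Applying \pref{thm:multiclass_logistic_regret} pathwise with $L=K/\mu$ and $D_\cW\leq dK$ yields
$$\sum_{t=1}^n \ls(\zh_t,\tilde{y}_t) - \inf_{W\in\cW}\sum_{t=1}^n \ls(Wx_t,\tilde{y}_t)\leq 5\tfrac{K}{\mu}\cdot dK\log\!\prn*{\tfrac{BRn}{dK}+e}+2\mu\sum_{t=1}^n \nrm*{\tilde{y}_t}_1.$$
Taking expectations and using linearity of $\ls(z,y)$ in $y$ together with the unbiasedness of $\tilde{y}_t$, we obtain $\En\ls(\zh_t,\tilde{y}_t)=\En\ls(\zh_t,y_t)$, $\En\ls(Wx_t,\tilde{y}_t)=\ls(Wx_t,y_t)$ for any fixed $W$, and $\En\nrm*{\tilde{y}_t}_1=1$; Jensen handles the comparator's $\inf_W$. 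Converting expected logistic loss to expected mistakes uses the pointwise identity $\Pr[\yh_t\neq y_t\mid \zh_t, y_t] = 1-\mb{\sigma}(\zh_t)_{y_t} = 1-e^{-\ls(\zh_t,y_t)}\leq \ls(\zh_t, y_t)$, giving $\En\sum_t\ind[\yh_t\neq y_t]\leq \inf_W\sum_t\ls(Wx_t,y_t) + 5dK^2\mu^{-1}\log\prn*{\tfrac{BRn}{dK}+e}+2\mu n$.

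The two branches of the $\min$ then arise from two choices of $\mu$. Balancing $(dK^2/\mu)\log$ against $\mu n$ by setting $\mu\asymp\sqrt{dK^2\log(BRn/dK+e)/n}$ yields the $\sqrt{dK^2 n\log(\cdot)}$ bound. For the other branch, one uses that any comparator satisfies $\nrm*{Wx_t}_\infty\leq BR$ and hence $\mb{\sigma}(Wx_t)_{y_t}\geq e^{-2BR}/K$, which allows a variant analysis running \pref{alg:mixing_multiclass} with $L = K e^{2BR}$ (controlling $\nrm*{\tilde{y}_t}_1$ via the comparator's softmax profile rather than through heavy smoothing) to deliver the bound $dK^2 e^{2BR}\log(\cdot)$ with only a negligible additive penalty.

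The main obstacle is the careful interleaving of randomness with the deterministic guarantee of \pref{thm:multiclass_logistic_regret}: the pseudo-labels $\tilde{y}_t$ fed to the algorithm are random and correlated with its state, so the posterior $P_{t+1}$ is itself a random measure, and \pref{thm:multiclass_logistic_regret} must be applied sample-pathwise before taking expectations. A secondary issue is that the statement of \pref{thm:bandit_multiclass} asserts the mistake bound as a deterministic $\leq$; this is conventionally read in expectation for bandit algorithms, and a pathwise/high-probability version follows by appending a Freedman-type martingale concentration to the centered sum $\sum_t\bigl(\ind[\yh_t\neq y_t] - \Pr[\yh_t\neq y_t\mid\cF_{t-1}]\bigr)$, noting that the conditional second moment is controlled by the same logistic loss.
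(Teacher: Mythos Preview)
Your approach is essentially the paper's: run \pref{alg:mixing_multiclass} on importance-weighted pseudo-labels, apply \pref{thm:multiclass_logistic_regret} pathwise, take expectations using linearity in $y$, and convert log loss to mistake probability via $-\log p \geq 1-p$. Two points deserve correction.

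First, your justification that $\mb{\sigma}(\zh_t)_k\geq \mu/K$ from the bound $\nrm{\zh_t}_\infty\leq\log(K/\mu)$ is wrong: an $\ell_\infty$ bound of $C$ on $z$ only gives $\mb{\sigma}(z)_k\geq e^{-2C}/K$, which here is $\mu^2/K^3$. The correct (and simpler) reason is that $\mb{\sigma}(\zh_t)$ \emph{equals} $\smooth(\En_{W\sim P_t}[\mb{\sigma}(Wx_t)])$ by construction, and smoothing guarantees every coordinate is at least $\mu/K$.

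Second, your handling of the $e^{2BR}$ branch is muddled: you attribute the lower bound on $p_t$ to ``the comparator's softmax profile,'' but it has nothing to do with the comparator. The point is that $P_t$ is supported on $\cW$, and for every $W\in\cW$ one has $\nrm{Wx_t}_\infty\leq BR$, hence $\mb{\sigma}(Wx_t)_k\geq e^{-2BR}/K$ and so $\En_{W\sim P_t}[\mb{\sigma}(Wx_t)]_k\geq e^{-2BR}/K$. The paper exploits this by taking a single $L=K/((1-\mu)e^{-2BR}+\mu)$ that interpolates both regimes: setting $\mu=0$ gives $L=Ke^{2BR}$ and the first branch directly (no ``variant analysis'' needed), while setting $\mu\asymp\sqrt{dK^2\log(\cdot)/n}$ gives the second. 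Your two-invocation presentation works too, but the unified $L$ is cleaner.
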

This bound significantly improves upon that of Newtron~\citep{hazan2011newtron},
which is of order $O(dK^3\min\{\exp(BR)\log(n), BRn^{\frac{2}{3}}\})$ under the same setting and surrogate loss. The proof of \pref{thm:bandit_multiclass} appears in \pref{app:bandit_multiclass_proofs}.


\section{Application: Online Multiclass Boosting}
\label{sec:online_boosting}

Another application of our techniques is to derive adaptive online boosting algorithms with optimal sample complexity,
which improves the AdaBoost.OL algorithm of~\citet{beygelzimer2015optimal} for the binary classification setting
as well as its multiclass extension AdaBoost.OLM of~\citet{jung2017onlinemulticlass}.
We state our improved online boosting algorithm in the multiclass setting for maximum generality,
following the exposition and notation of~\citet{jung2017onlinemulticlass} fairly closely.

We consider the following online multiclass prediction setting with 0-1 loss.
In each round $t$, $t=1,\ldots,n$, the learner receives an instance $x_t\in\cX$, then selects a class $\yh_t \in \brk{K}$ ,
and finally observes the true class $y_t \in\brk{K}$. The goal is to minimize the total number of mistakes
$
\sum_{t=1}^{n}\ind\crl*{\yh_t\neq{}y_t}.
$

In the boosting setup, we are interested in obtaining strong mistake bounds with the help of \emph{weak learners}.
Specifically, the learner is given access to $N$ copies of a weak learning algorithm for a cost-sensitive classification task.
Each weak learner $i\in\brk{N}$ works in the following protocol:
for time $t=1,\ldots,n$, 1) receive $x_t\in\cX$ and cost matrix $C_{t}^{i} \in \cC$;
2) predict class $l_t^{i}\in\brk{K}$;
3) receive true class $y_t\in\brk{K}$ and suffer loss $C_{t}^i(y_t, l_t^i)$. Here $\cC$ is some fixed cost matrices class and we follow~\citep{jung2017onlinemulticlass} to restrict to $\mc{C} = \crl*{C\in\bbR_+^{K\times{}K}\mid{} \forall{}y \in [K], C(y,y) = 0 \text{ and }  \nrm*{C(y, \cdot)}_{1}\leq{}1\;}$.

To state the weak learning condition, we define a randomized baseline $u_{\gamma,y}\in\Delta_{K}$ 
for some edge parameter $\gamma \in [0,1]$ and some class $y \in\brk{K}$,
so that $u_{\gamma,y}(k) = (1-\gamma)/K$ for $k\neq{}y$
and $u_{\gamma,y}(k) = (1-\gamma)/K + \gamma$ for $k=y$.
In other words, $u_{\gamma,y}$ puts equal weight to all classes except for the class $y$ which gets $\gamma$ more weight.
The assumption we impose on the weak learners is then that their performance is comparable to that of 
a baseline which always picks the true class with slightly higher probability than the others, formally stated below.
\begin{definition}[Weak Learning Condition~\citep{jung2017onlinemulticlass}]
\label{def:WLC}
An environment and a learner outputting $(l_t)_{t\leq{}n}$ satisfy the multiclass weak learning condition 
with edge $\gamma$ and sample complexity $S$ if for all outcomes $(y_t)_{t\leq{}n}$ and cost matrices $(C_{t})_{t\leq{}n}$ from the set $\cC$
adaptively chosen by the environment, we have%
\footnote{This is in fact a weaker weak learning condition than that of~\citep{jung2017onlinemulticlass}, which also allows weights.}
$\sum_{t=1}^{n}C_{t}(y_t, l_t) \leq{} \sum_{t=1}^{n}\En_{k\sim{}u_{\gamma, y_{t}}}\brk*{C_{t}(y_t, k)} + S$.
\end{definition}

\subsection{AdaBoost.OLM++}
The high level idea of our algorithm is similar to that of AdaBoost.OL and AdaBoost.OLM:
find a weighted combination of weak learners to minimize some version of the logistic loss in an online manner.
The key difference is that previous works use simple gradient descent to find the weight for each weak learner via proper learning,
while we translate the problem into the framework discussed in \pref{sec:logistic} and deploy the proposed improper learning techniques 
to obtain an improvement on the regret for learning these weights, which then leads to better and in fact optimal sample complexity.

Another difference compared to~\citep{jung2017onlinemulticlass} is that the logistic loss we use
here is more suitable for the multiclass problem than the one they use.\footnote{%
The loss~\citet{jung2017onlinemulticlass} use moves the sum over the incorrect classes outside the log, that is, $\ls(z, y) = \sum_{k\neq y}\log\prn*{1 + e^{z_k-z_y}}$.
}
This simple modification leads to exponential improvement in the number of classes $K$ for the number of weak learners required.

We now describe our algorithm, called AdaBoost.OLM++, in more detail (see \pref{alg:boosting_multiclass} in \pref{app:online_boosting}). 
We denote the $i$-th weak learner as $\wl^{i}$, which is seen as a stateful object and supports two operations:
$\wl^{i}.\predict(x, C)$ predicts a class given an instance and a cost matrix but does not update its internal state;
$\wl^{i}.\update(x, C, y)$ updates the state given an instance, a cost matrix and the true class $y$. 
To keep track of the state we use the notation $\wl^{i}_{t}$ to imply that it has been updated for $t-1$ times.

For each weak learner, the algorithm also maintains an instance of \pref{alg:mixing_multiclass}, denoted by $\logistic^i$,
to improperly learn the aforementioned weight for this weak learner.
Similarly, we use $\logistic^i.\predict(x)$ to denote the prediction step (step 4) in \pref{alg:mixing_multiclass} and $\logistic^i.\update(x, y)$ to denote the update step (i.e. step 5).
The notation $\logistic^i_t$ again implies that the state has been updated for $t-1$ times.

Our algorithm maintains a variable $s_t^i \in \bbR^{K}$ which stands for the weighted accumulated scores of the first $i$ weak learners for instance $x_t$.
When updating $s_t^{i}$ from $s_t^{i-1}$ given the prediction $l_t^i \in [K]$ of weak learner $i$,
our goal is to have the total loss $\sum_{t=1}^n \ls(s_t^{i}, y_t)$ close to $\sum_{t=1}^n \ls(s_t^{i-1}+\alpha e_{l_t^i} , y_t)$
for the best $\alpha$ within some range ($\brk*{-2,2}$ suffices).
Previous works therefore try to learn this weight $\alpha$ via standard online learning approaches.
However, realizing $s_t^{i-1}+\alpha e_{l_t^i}$ can be written as $W\wt{x}_{t}^{i}$
for $W = (\alpha{}I_{K\times{}K}, I_{K\times{}K}) \in \bbR^{K\times{}2K}$
and $\wt{x}_{t}^{i} = (e_{l_{t}^{i}}, s_{t}^{i-1})\in\bbR^{2K}$,
in light of \pref{thm:multiclass_logistic_regret}
we can in fact apply \pref{alg:mixing_multiclass} to learn $s_t^{i}$ if we let the decision set be
$
\cW = \crl*{ (\alpha{}I_{K\times{}K}, I_{K\times{}K}) \in \bbR^{K\times{}2K}\mid{} \alpha\in\brk*{-2,2}}.
$
To make sure that $\wt{x}_{t}^{i}$ has bounded norm, we also set the smoothing parameter $\mu$ to be $1/n$.

With the weighted score $s_t^i$, the prediction coming from the first $i$ weak learner is naturally define as $\yh_t^i = \argmax_{k}s_{t}^{i}(k)$,
the class with the largest score. 
As in AdaBoost.OL and AdaBoost.OLM, 
these predictions $(\yh_t^i)_{i\leq N}$ are treated as $N$ experts and 
the final prediction $y_t$ is determined by the classic Hedge algorithm~\citep{freund1997decision} 
over these experts (Lines~\ref*{line:sample} and~\ref*{line:multiplicative_weights}).

Finally, the cost matrices fed to the weak learners are closely related to the gradient of the loss function.
Formally, define the auxiliary cost matrix $\hC_{t}^{i}$ such that $\hC_{t}^{i}(y, k) = \frac{\partial \ls(z, y)}{\partial z_k} |_{z=s_t^{i-1}}$,
which is simply $\mb{\sigma}(s_{t}^{i-1})_k$ for $k\neq y$ and $\mb{\sigma}(s_{t}^{i-1})_y - 1$ otherwise.
The actual cost matrix is then a translated and scaled version of $\hC_{t}^{i}(y, k)$ so that it belongs to the class $\cC$:
\begin{equation}
\label{eq:boosting_cost_matrix}
C_{t}^{i}(y, k) = \frac{1}{K}\left(\hC_{t}^{i}(y, k) - \hC_{t}^{i}(y, y)\right) \in \cC.
\end{equation}

We now give a mistake bound for AdaBoost.OLM++, which holds even without the weak learning condition and is adaptive to the empirical edge of the weak learners.\footnote{We use notation $\tilde{O}$ and $\tilde{\Omega}$ to hide dependence logarithmic in $n, N, K$ and $1/\delta$.
} All proofs in this section appear in \pref{app:online_boosting}.
\begin{theorem}
\label{thm:multiclass_boosting}
With probability at least $1-\delta$, the predictions $(\yh_t)_{t\leq{}n}$ generated by \pref{alg:boosting_multiclass} satisfy
\begin{equation}
\label{eq:boosting_regret}
\sum_{t=1}^{n}\ind\crl*{\yh_{t}\neq{}y_t} 
= \tilde{O}\prn*{\frac{n}{\sum_{i=1}^{N}\gamma_{i}^{2}}  + \frac{N}{\sum_{i=1}^{N}\gamma_{i}^{2}}},
\end{equation}
where $\gamma_i = \frac{\sum_{t=1}^{n}\hC_{t}^{i}(y_t, l_t^{i})}{\sum_{t=1}^{n}\hC_{t}^{i}(y_t, y_t)} \in [-1, 1]$ is the empirical edge of weak learner $i$.
\end{theorem}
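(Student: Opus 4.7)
My plan is to follow the analytical template of AdaBoost.OLM~\citep{jung2017onlinemulticlass}, but to leverage the exponentially improved regret of \pref{alg:mixing_multiclass} in place of each per-weak-learner gradient step. At the top level, I decompose
\[
\sum_{t=1}^n \ind\crl*{\yh_t\neq y_t} \;\leq\; \min_{i\in[N]}\sum_{t=1}^n \ind\crl*{\yh_t^i\neq y_t}\;+\;(\text{Hedge regret})\;+\;(\text{sampling noise}),
\]
so it suffices to control (a) the mistake count of the best expert, which is the main technical content, and (b) the two error terms, which contribute only $\tilde{O}(\sqrt{n\log N})$ via standard expert analysis applied to the deterministic Hedge update on \pref{line:multiplicative_weights}.

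To bound the mistakes of expert $i$, I use the deterministic inequality $\ind\{\argmax_k s_k \neq y\}\leq \ls(s,y)/\log 2$ and track the potential $\Phi_i \ldef \sum_{t=1}^n \ls(s_t^i, y_t)$, starting from $\Phi_0 = n\log K$ (since $s_t^0 = 0$). I then apply \pref{thm:multiclass_logistic_regret} to the $i$-th inner instance $\logistic^i$, whose decision set $\cW^i=\crl*{(\alpha I, I)\mid \alpha \in [-2,2]}$ has linear-algebraic dimension $D_{\cW^i}=1$, row-norm bound $B=O(1)$, label norm $L=1$, and for which the smoothing step with $\mu=1/n$ guarantees $\nrm*{s_t^i}_\infty \leq \log(Kn)$, hence feature-norm bound $R = \tilde{O}(1)$. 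The per-weak-learner regret is therefore $\tilde{O}(1)$, yielding
\[
\Phi_i \;\leq\; \inf_{\alpha\in[-2,2]}\sum_{t=1}^n \ls(s_t^{i-1}+\alpha e_{l_t^i},\, y_t)\;+\;\tilde{O}(1).
\]

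The heart of the argument is a one-dimensional boosting lemma for the softmax-cross-entropy loss. Expanding $\ls(s_t^{i-1}+\alpha e_{l_t^i}, y_t)$ as a function of $\alpha$, the first derivative at $\alpha=0$ equals $\hC_t^i(y_t,l_t^i)$ by construction of $\hC$, while the second derivative is uniformly $O(1)$ by direct computation with softmax. Summing over $t$ and optimizing over $\alpha\in[-2,2]$, and then translating the empirical edge $\gamma_i$ (defined via $\hC$) into the resulting improvement using the rescaling in \pref{eq:boosting_cost_matrix}, this yields a per-stage decrement of the form $\Phi_i \leq \Phi_{i-1} - \Omega(\gamma_i^2\,\Phi_{i-1}^2/n) + \tilde{O}(1)$. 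Manipulating this recursion through the substitution $\Psi_i = n/\Phi_i$ (which linearizes the right-hand side) and taking the minimum over $i$ gives $\min_i \Phi_i = \tilde{O}\prn*{(n+N)/\sum_j \gamma_j^2}$, and hence the same bound for $\min_i \sum_t \ind\{\yh_t^i\neq y_t\}$. The high-probability mistake bound then follows by combining with the Hedge regret bound over the $N$ experts and a Freedman-type concentration inequality applied to the martingale difference sequence $\ind\{\yh_t\neq y_t\}-\Pr_{\yh_t\sim p_t}[\yh_t\neq y_t]$.

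The main obstacle is the boosting lemma above: it must be verified that the softmax-cross-entropy form of $\ls$ used here (rather than the componentwise form in~\citep{jung2017onlinemulticlass}) admits a Taylor expansion in the single-coordinate direction $e_{l_t^i}$ whose first-order term matches $\hC_t^i(y_t, l_t^i)$ exactly, and whose second-order term is bounded uniformly in $s_t^{i-1}$. The latter is delicate because $\mb{\sigma}(s_t^{i-1})_{y_t}$ can in principle approach $0$; however, the smoothing with $\mu=1/n$ together with the restriction $\alpha\in[-2,2]$ controls the relevant softmax entries and keeps the second derivative at scale $O(1)$. A secondary technical issue is the high-probability conversion, since Hedge and the inner $\logistic^i$ are both naturally analyzed in expectation; this is handled via Freedman's inequality with attention to keeping polylogarithmic factors absorbed into $\tilde O$.
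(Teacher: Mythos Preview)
Your outline has a genuine gap in the ``one-dimensional boosting lemma.'' Bounding the second derivative of $\alpha\mapsto\ls(s_t^{i-1}+\alpha e_{l_t^i},y_t)$ by a uniform $O(1)$ constant and optimizing the resulting quadratic gives a per-stage decrement of only
\[
\Delta_i \;\leq\; -\Omega\!\prn*{\frac{(\gamma_i w^i)^2}{n}} + \tilde O(1),
\qquad w^i \ldef -\sum_{t}\hC_t^i(y_t,y_t),
\]
which translates (via $w^i\gtrsim \Phi_{i-1}/\log(nK)$) into your recursion $\Phi_i\leq\Phi_{i-1}-\tilde\Omega(\gamma_i^2\Phi_{i-1}^2/n)+\tilde O(1)$. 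But this recursion has an attracting level set at $\Phi\approx\tilde\Theta(\sqrt n/\gamma)$, where the quadratic gain balances the additive $\tilde O(1)$ regret; the substitution $\Psi_i=n/\Phi_i$ does \emph{not} linearize it once that additive term is present. Concretely, take $\gamma_i\equiv 1$ and $N=n$: the theorem claims $\min_i M_i=\tilde O(1)$, yet your recursion cannot drive $\Phi_i$ below $\tilde\Theta(\sqrt n)$. Equivalently, summing your $\Delta_i$ bound directly yields only $(\min_i M_i)^2\lesssim n(n+N)/\sum_i\gamma_i^2$, which is the square root of the target.

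What the paper does instead is exploit that the curvature along $e_l$ is \emph{self-bounded by the gradient}: for $l\neq y$ one has the exact inequality $\ls(z+\alpha e_l,y)-\ls(z,y)\leq (e^{\alpha}-1)\,\mb{\sigma}(z)_l$, and an analogous bound for $l=y$ (their \pref{lem:logistic_ub}). Combined with $\inf_{\alpha\in[-2,2]}[A(e^\alpha-1)+B(e^{-\alpha}-1)]\leq -(A-B)^2/2$ (\pref{lem:logistic_inf}), this yields the \emph{linear}-in-$w^i$ decrement $\Delta_i\leq -\tfrac12 w^i\gamma_i^2+\tilde O(1)$. Now a single telescoping sum, together with the elementary fact $w^i\geq M_{i-1}/2$ (if $\argmax_k s_t^{i-1}(k)\neq y_t$ then $\mb{\sigma}(s_t^{i-1})_{y_t}\leq 1/2$), gives
\[
-n\log K \;\leq\; -\tfrac14\min_i M_i\cdot\sum_i\gamma_i^2 + \tilde O(N),
\]
from which the bound follows directly --- no recursion on $\Phi_i$ is needed. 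A secondary point: your top-level Hedge term of $\tilde O(\sqrt{n\log N})$ would likewise spoil the bound in the same regime; the paper instead combines the multiplicative expert guarantee with Freedman's inequality (their \pref{lem:multiplicative_weights_conc}) to obtain $\sum_t\ind\{\yh_t\neq y_t\}\leq 4\min_i M_i + 2\log(N/\delta)$ with no $\sqrt n$ term.
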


We can now relate the empirical edges to the edge defined in the weak learning condition.
\begin{proposition}
\label{prop:weak_learning_edge}
Suppose all weak learners satisfy the weak learning condition with edge $\gamma$ and sample complexity $S$ (\pref{def:WLC}). 
Then with probability at least $1-\delta$, the predictions $(\yh_t)_{t\leq{}n}$ generated by \pref{alg:boosting_multiclass} satisfy
\begin{equation}
\label{eq:boosting_regret_weak}
\sum_{t=1}^{n}\ind\crl*{\yh_{t}\neq{}y_t} 
= \tilde{O}\prn*{\frac{n}{N\gamma^{2}}  + \frac{1}{\gamma^{2}} + \frac{KS}{\gamma}}.
\end{equation}
Thus, to achieve a target error rate $\veps$, it suffices to take $N=\tilde{\Omega}\prn*{\frac{1}{\veps\gamma^{2}}}$ and 
$n = \tilde{\Omega}(\frac{1}{\veps\gamma^{2}} +  \frac{KS}{\veps\gamma})$.
\end{proposition}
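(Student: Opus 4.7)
The plan is to relate the empirical edges $\gamma_i$ appearing in \pref{eq:boosting_regret} to the weak learning edge $\gamma$ via \pref{def:WLC}, and then carry out a case analysis on the residual quantities $M_i := -\sum_{t=1}^n \hC_t^i(y_t,y_t) = \sum_{t=1}^n(1-\mb{\sigma}(s_t^{i-1})_{y_t})$ to extract the claimed mistake bound. The sample complexity statements are an immediate algebraic consequence: set each of the three terms to $\veps n$ and solve for $N$ and $n$.

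The first step is to derive the per-learner inequality $\gamma_i \geq \gamma - KS/M_i$. I apply \pref{def:WLC} to weak learner $i$ with the algorithm-supplied cost matrix $C_t^i$ from \pref{eq:boosting_cost_matrix}. Unpacking $C_t^i = \tfrac{1}{K}\bigl(\hC_t^i(y,\cdot) - \hC_t^i(y,y)\bigr)$, the $\hC_t^i(y_t,y_t)$ shift is constant in $k$ and so cancels on both sides of the weak-learning inequality (since $u_{\gamma,y_t}$ is a probability distribution), leaving $\sum_t \hC_t^i(y_t, l_t^i) \leq \sum_t \En_{k\sim u_{\gamma,y_t}}[\hC_t^i(y_t,k)] + KS$. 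The crucial identity $\sum_k \hC_t^i(y_t,k) = \sum_k \mb{\sigma}(s_t^{i-1})_k - 1 = 0$---which is precisely why this particular gradient-based cost matrix was chosen---then simplifies the baseline expectation to $\gamma\, \hC_t^i(y_t,y_t)$. Dividing by $-M_i < 0$ flips the inequality and gives $\gamma_i \geq \gamma - KS/M_i$.

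The second step is a case analysis on the profile of the $M_i$'s. Case (a): if $M_i \geq 4KS/\gamma$ for every $i\in [N]$, then $\gamma_i \geq \gamma/2$ for all $i$, so $\sum_i \gamma_i^2 \geq N\gamma^2/4$, and substituting into \pref{thm:multiclass_boosting} directly yields $\tilde O(n/(N\gamma^2)+1/\gamma^2)$. Case (b): otherwise there is some $i^*$ with $M_{i^*} < 4KS/\gamma$, and because $1-\mb{\sigma}(s_t^{i^*-1})_{y_t} \geq 1/2$ on every round where $\hy_t^{i^*-1}\neq y_t$ (the correct class cannot be the argmax while carrying more than half the softmax mass), expert $i^*-1$ makes at most $8KS/\gamma$ mistakes. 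The Hedge aggregation in \pref{alg:boosting_multiclass}, analyzed as a randomized Weighted-Majority-style bound of $O(m^* + \log N)$ on expected mistakes and then combined with a standard martingale concentration for the high-probability statement, ensures the overall algorithm makes $\tilde O(KS/\gamma)$ mistakes in this case. Taking the worst of the two cases yields \pref{eq:boosting_regret_weak}.

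The main obstacle is Case (b): the Hedge aggregation must transfer the few-mistakes guarantee of the best expert to the final algorithm with only additive $\tilde O(1)$ slack rather than additive $\tilde O(\sqrt{n})$ slack, since otherwise the $KS/\gamma$ term in \pref{eq:boosting_regret_weak} would be contaminated by a $\sqrt{n}$ contribution that cannot be absorbed into the other two terms. This requires tuning the Hedge step as a multiplicative-regret (mistake-bound-model) algorithm and controlling the randomization error via a Freedman-type inequality to preserve the high-probability guarantee; the rest of the argument is bookkeeping.
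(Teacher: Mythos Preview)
Your proposal is correct and reaches the same conclusion, but by a genuinely different route than the paper.

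Both proofs share the first step: from the weak learning condition and the zero-sum identity $\sum_k \hC_t^i(y_t,k)=0$ you correctly derive $\gamma_i \geq \gamma - KS/w^i$ (your $M_i$ is the paper's $w^i := -\sum_t \hC_t^i(y_t,y_t)$; beware that the paper reserves $M_i$ for the mistake count of expert $i$).

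Where you diverge is in how this inequality is used. The paper never performs a case split. Instead it observes that $\gamma_i \geq \gamma - KS/w^i$ implies $\gamma_i^2 \geq \gamma^2 - 2\gamma KS/w^i$, and substitutes this directly into the \emph{intermediate} inequality from the proof of \pref{thm:multiclass_boosting}, namely $-n\log K \leq -\tfrac12\sum_i w^i\gamma_i^2 + \tilde O(N)$. The $w^i$ in the denominator of the correction term cancels against the $w^i$ weight in the sum, leaving a clean additive $\gamma KSN$ term; rearranging then bounds $\min_i M_i$ and the Hedge lemma finishes. This is a one-shot algebraic move, at the cost of reopening the theorem's proof rather than using it as a black box.

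Your case analysis is more modular: Case (a) invokes \pref{thm:multiclass_boosting} purely as a black box, and Case (b) bypasses it entirely via the multiplicative Hedge/Freedman bound (exactly the paper's \pref{lem:multiplicative_weights_conc}). One small boundary case you should handle explicitly: in Case (b), the expert with few mistakes is expert $i^\star-1$, which is not in the Hedge pool when $i^\star=1$. This is harmless because $w^1 = n(K-1)/K \geq n/2$, so $w^1 < 4KS/\gamma$ forces $n = O(KS/\gamma)$ and the trivial bound $\sum_t \ind\{\hy_t\neq y_t\}\leq n$ already suffices.
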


\paragraph{Comparison with prior algorithms}
Compared to~\citep{jung2017onlinemulticlass},
our sample complexity on $n$ improves the dependence on $K$ (for OnlineMBBM) and also $\veps$ and $\gamma$ (for AdaBoost.OLM),
and is in fact optimal according to their lower bound (Theorem~4).
Our bound on the number of weak learners, on the other hand, 
is weaker compared to the non-adaptive algorithm OnlineMBBM (which has a logarithmic dependence on $1/\veps$),
but is still much stronger than that of AdaBoost.OLM since it improves the dependence on $K$ from linear to $\log(K)$.
Although not stated explicitly, our results also apply to the binary setting considered in~\citep{beygelzimer2015optimal}
and improve the sample complexity of their AdaBoost.OL algorithm to the optimal bound $\tilde{\Omega}(\frac{1}{\veps\gamma^{2}} +  \frac{S}{\veps\gamma})$.
Overall, our results significantly reduce the gap between optimal and adaptive online boosting algorithms.

As a final remark, the same technique used here also readily applies to the online boosting setting for the multi-label ranking problem
recently studied by~\citet{jung2017online}. Details are omitted.


\section{High-Probability Online-to-Batch Conversion}
\label{sec:online_to_batch}

Before the present work, the issue of improving on the $O(e^{B})$ fast rate for logistic regression was not addressed even in the batch statistical learning setting. This is perhaps not surprising since the proper lower bound proven by \cite{hazan2014logistic} applies in this setting as well.

Using our improved online algorithm as a starting point, we will show that it is possible to obtain a predictor with excess risk bounded in \emph{high-probability} by $O(d\log(Bn)/n)$ for the batch logistic regression problem. While it is quite straightforward to show that the standard online-to-batch conversion technique applied to \pref{alg:mixing_multiclass} provides a predictor that obtains such an excess risk bound in expectation, obtaining a high-probability bound is far less trivial, as we must ensure that deviations scale at most as $O(\log(B))$. Indeed, a different algorithm is necessary, and our approach is to use a modified version of the ``boosting the confidence'' scheme proposed by \cite{mehta2016fast} for exp-concave losses. Our main result for linear classes is \pref{thm:high_prob_logn} below. For notational convenience will use the shorthand $\En_{(x, y)}[\cdot]$ to denote $\En_{(x, y) \sim \cD}[\cdot]$ where $\cD$ is an unknown distribution over $\cX \times [K]$.

\begin{theorem}[High-probability excess risk bound]
\label{thm:high_prob_logn}
Let $\cD$ be an unknown distribution over $\cX \times [K]$. For any $\delta > 0$ and $n$ samples $\{(x_t, y_t)\}_{t=1}^n$ drawn from $\cD$, we can construct  $g: \cX \rightarrow \bbR^{K}$ such that w.p. at least $1-\delta$, the excess risk $\En_{(x, y)}[\ls(g(x), y)]-\inf_{W\in\cW} \En_{(x, y)}\brk*{\ls(Wx, y)}$ is bounded by
\begin{align*}
O\prn*{\frac{dK\log\prn*{\frac{BRn}{\log(1/\delta)dK} + e}\log\prn*{\frac{1}{\delta}} + \log(Kn)\log\prn*{\frac{\log(n)}{\delta}}}{n}}.
\end{align*}
\end{theorem}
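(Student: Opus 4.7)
The strategy is to combine the online regret bound of \pref{thm:multiclass_logistic_regret} with the ``boosting the confidence'' framework of \citet{mehta2016fast}, specialized to the logistic loss and exploiting its $1$-mixability (\pref{prop:unweighted-mixability}). Throughout, fix the smoothing parameter to $\mu = 1/n$ in \pref{alg:mixing_multiclass}, so that by \pref{thm:multiclass_logistic_regret} every prediction $\zh_t$ satisfies $\nrm*{\zh_t}_{\infty} \leq \log(Kn)$, and the induced losses lie in $[0, O(\log(Kn))]$. The bounded range is what will enable a Bernstein-type concentration in the selection step; the $\log(Kn)$ factor in the second term of the bound will come directly from this range.

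\textbf{Step 1 (in-expectation online-to-batch).} Run \pref{alg:mixing_multiclass} on an iid sample of size $m$, producing prediction rules $\hat f_1, \ldots, \hat f_m$ (where $\hat f_t$ depends on the history up to round $t-1$), and output the averaged function $g(x) = \frac{1}{m}\sum_{t=1}^m \hat f_t(x)$. Convexity of $\ls(\cdot, y)$ via Jensen together with \pref{thm:multiclass_logistic_regret} and the standard online-to-batch argument yields $\En_{(x,y)}[\ls(g(x),y)] - \inf_{W \in \cW}\En_{(x,y)}[\ls(Wx,y)] \leq O\bigl(dK \log(BRm/(dK))/m\bigr)$ in expectation over the sample.

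\textbf{Step 2 (confidence amplification by Markov).} Partition the $n$ samples into $k = \Theta(\log(1/\delta))$ disjoint training blocks of size $m = n/k$, together with a separate validation block of size $\Theta(n)$. Applying Step 1 on each training block produces independent candidates $g_1, \ldots, g_k$. By Markov's inequality each candidate has excess risk at most twice its expectation with probability at least $1/2$; by independence across blocks, with probability at least $1 - 2^{-k} \geq 1 - \delta/2$ at least one candidate $g_{i^\star}$ satisfies
\[
\En_{(x,y)}[\ls(g_{i^\star}(x), y)] - \inf_{W \in \cW}\En_{(x,y)}[\ls(Wx,y)] \leq O\bigl(dKk \log(BRn/(k\,dK))/n\bigr),
\]
which (with $k=\Theta(\log(1/\delta))$) matches the first term of the claimed bound.

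\textbf{Step 3 (fast-rate selection).} On the validation block, select $\hat i$ minimizing empirical risk among $\{g_1,\ldots,g_k\}$. Since $\ls$ is $1$-mixable and hence $1$-exp-concave, and the losses $\ls(g_i(x), y)$ are bounded by $M = O(\log(Kn))$, the fast-rate selection guarantee of \citet{mehta2016fast} for exp-concave losses over a finite class gives, with probability at least $1 - \delta/2$,
\[
\En_{(x,y)}[\ls(g_{\hat i}(x),y)] - \En_{(x,y)}[\ls(g_{i^\star}(x),y)] \leq O\bigl(M \log(k/\delta)/n\bigr) = O\bigl(\log(Kn)\log(\log(1/\delta)/\delta)/n\bigr),
\]
matching the second term. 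A union bound over Steps 2 and 3 yields the stated rate.

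\textbf{Main obstacle.} The crux is the selection step: a naive Hoeffding bound on the validation empirical risks would only give $O(M/\sqrt{n})$ deviations, destroying the fast rate. Achieving the target $O(M\log(k/\delta)/n)$ requires a Bernstein-type inequality together with the self-bounding variance/mean inequality that exp-concave losses satisfy on a \emph{bounded} prediction domain --- this is precisely why the smoothing parameter $\mu = 1/n$ is necessary (unsmoothed predictions from \pref{alg:mixing_multiclass} are unbounded) and why the $\log(Kn)$ range bound appears in the final rate. A secondary (routine) care is in arranging statistical independence between the training blocks, the Markov boosting step, and the validation stage, which the disjoint block partition handles directly.
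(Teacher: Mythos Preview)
Your overall architecture --- split into $\Theta(\log(1/\delta))$ blocks, apply Markov on each to guarantee one good candidate, then select on held-out data --- matches the paper's strategy, and Steps~1--2 are essentially correct. The gap is Step~3: ERM over the finite set $\{g_1,\ldots,g_k\}$ does \emph{not} achieve excess risk $O(M\log(k/\delta)/n)$ relative to the best candidate $g_{i^\star}$, and \citet{mehta2016fast} provides no such guarantee for proper selection over a non-convex class. The Bernstein-type self-bounding you invoke, $\mathrm{Var}[\ls(g_i)-\ls(g_{i^\star})]\lesssim M\cdot\En[\ls(g_i)-\ls(g_{i^\star})]$, holds when $g_{i^\star}$ minimizes risk over the \emph{convex hull} of the candidates, but can fail when $g_{i^\star}$ is merely best of a finite set. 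Concretely: with $K=2$, true $p(1\mid x)=1/2$, and two candidates predicting $3/4$ and $1/4+\veps$ respectively, the excess risk of the former over the latter is $\Theta(\veps)$ while the variance of their loss difference is $\Theta(1)$; taking $\veps=n^{-3/5}$, validation ERM on $\Theta(n)$ samples picks the worse candidate with constant probability, yielding excess risk $\Theta(n^{-3/5})\gg M/n$.

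The paper's fix is to make the selection step \emph{improper} as well: it runs EWOO on the simplex $\Delta_k$ with the portfolio-type loss $q\mapsto -\log\sum_i q_i\,(\tilde h_i(x))_y$, which is $1$-exp-concave in $q$, and outputs the mixture $g=\sum_i\bar q_i\,\tilde h_i$ rather than a single $g_{\hat\imath}$. High-probability online-to-batch for this exp-concave problem (Corollary~2 of \citet{mehta2016fast}) then delivers the second term in the bound. The improperness here is essential, not cosmetic --- it lets the output land near the convex-hull minimizer and thereby sidesteps the ERM failure above, mirroring exactly the role improperness plays in circumventing the \citet{hazan2014logistic} lower bound in the main algorithm. (Aside: your claim ``$1$-mixable and hence $1$-exp-concave'' inverts the true implication; the logistic loss is only $e^{-B}$-exp-concave in $z$, which is the paper's entire premise.)
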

\pref{thm:high_prob_logn} is a consequence of the more general \pref{thm:o2b_high_prob}---stated and proved in \pref{app:o2b_proof}---concerning prediction with the log loss $\ls_\text{log}: \Delta_K \times [K] \rightarrow \R$ defined as $\ls_\text{log}(p, y) = -\log(p_y)$. The theorem asserts that we can convert any online algorithm for multiclass learning with log loss that predicts distributions in $\Delta_K$ for any given input into a predictor for the batch problem with an excess bound essentially equal to the average regret with high probability.


\section{Beyond Linear Classes}
\label{sec:general_class}

We now turn to the question of extending our techniques to general, non-linear predictors. We characterize the minimax regret for learning with the unweighted multiclass logistic loss\footnote{We only consider the unweighted case in this section to avoid excessive notation.} for a general class $\F$ of predictors $f: \cX \rightarrow \R^K$ and abstract instance space $\cX$. This is the same setting as in \pref{sec:prelims}, but with the benchmark class $\crl*{x\mapsto{}Wx\mid{}W\in\cW}$ replaced with an arbitrary class $\cF$, where the loss of a predictor $f\in\cF$ on an example $(x, y) \in \cX \times [K]$ is given by $\ls(f(x), y) = -\log(\mb{\sigma}(f(x))_y)$. The bounds we present in this section---based on sequential covering numbers---substantially increase the scope of results from earlier sections. We note however that they are purely information-theoretic results in the vein of \cite{RakSriTew14jmlr, RakSri14a,RakSri15}, not algorithmic.

Recall that the minimax regret---the best regret bound achievable against the worst-case adaptively chosen sequence of examples---is given by
\begin{equation} \label{eq:minimax_def}
\mathcal{V}_n(\F) = \dtri*{\sup_{x_t\in\cX} \inf_{\zh_t\in\bbR^{K}} \max_{y_t \in [K]}}_{t=1}^n\left[ \sum_{t=1}^n \ell(\zh_t,y_t) - \inf_{f \in \mathcal{F}} \sum_{t=1}^n \ell(f(x_t),y_t)\right],
\end{equation}
where, following \cite{RakSriTew14jmlr}, the $\dtri*{\star}_{t=1}^n$ notation indicates sequential application of the operators contained within $n$ times.

Our bounds on $\cV_n(\cF)$ exploit that the logistic loss can be viewed in two complementary ways: since the loss is $1$-mixable, one can attain a bound of $O(\log |\F|)$ for finite function classes $\F$ using the Aggregating Algorithm, and since the loss is $2$-Lipschitz (in the $\ell_\infty$ norm), for more complex classes one can obtain bounds using sequential complexity measures such as sequential Rademacher complexity \citep{RakSriTew14jmlr}. Our analysis uses both properties simultaneously.

Here is a sketch of the idea for a special case in which we make the simplifying assumption that $\cF$ admits a pointwise cover. 
Recall that a pointwise cover for $\F$ at scale $\gamma$ is a set $V$ of functions $g: \cX \rightarrow \R^K$ such that for any $f \in \F$, there is a $g \in V$ such that for all $x \in \X$, $\|f(x) - g(x)\|_{\infty} \leq \gamma$. Let $N(\gamma)$ be the size of a minimal such cover. For every $g \in V$, let $\F_g = \{f \in \F\mid \sup_{x\in\cX}\nrm*{f(x)-g(x)}_{\infty} \leq \gamma\}$. Now consider the following two-level algorithm. Within each $\F_g$, run the minimax online learning algorithm for this set, then aggregate the predictions for these algorithms over all $g \in V$ using the Aggregating Algorithm to produce the final prediction $\zh_t$.

For each $g \in V$, the regret of the minimax optimal online learning algorithm competing with $\F_g$ can be bounded by the sequential Rademacher complexity of $\F_g$, which can in turn be bounded by the Dudley integral complexity using that the loss is $2$-Lipschitz and that the $L_\infty$ ``radius'' of $\F_g$ is at most $\gamma$ \citep{RakSriTew14jmlr}. The Aggregating Algorithm, via $1$-mixability, ensures a regret bound of $\log N(\gamma)$ against any sub-algorithm. This algorithm has the following regret bound:
\begin{align}
  \label{eq:chaining_simple}
\sum_{t=1}^{n}\ls(\zh_t, y_t) - \inf_{f \in \F}\sum_{t=1}^{n}\ls(f(x_t), y_t) \le \inf_{\gamma >0}  \left\{ \log N(\gamma) + \inf_{\alpha > 0}\left\{8 \alpha n + 24 \sqrt{n} \int_{\alpha}^\gamma \sqrt{\log N(\delta)} d \delta  \right\} \right\}.
\end{align}
This procedure already yields the same bound for the $d$-dimensional linear setting explored earlier: For a class $x\mapsto{}Wx$ with $\nrm*{W}_{2}\leq{}B$ it holds that $N(\gamma) \leq{} \left(\frac{B}{\gamma}\right)^{Kd}$, and we can use this bound in conjunction with \pref{eq:chaining_simple} and the setting $\alpha = \gamma = 1/n$ to get the desired regret bound of $O(dK \log(Bn/dK))$ on the minimax regret.

Unfortunately, this simple approach fails on classes $\F$ for which the pointwise cover is infinite. This can happen for well-behaved function classes that have small \emph{sequential covering number}, even though bounded sequential covering number is sufficient for learnability in the online setting \citep{RakSriTew14jmlr}. We now provide a bound that replaces the pointwise covering number in the argument above with the sequential covering number. The definition of the $L_2$ covering number $\mathcal{N}_2(\alpha,\ell \circ \F)$ that appears in the statement of the theorem below is based on a multiclass generalization of a sequential cover and appears in \pref{app:general_class} due to space limitations.
\begin{theorem}
  \label{thm:logistic_minimax}
Any function class $\cF$ that is uniformly bounded\footnote{Boundedness is required to apply the minimax theorem, but does not explicitly enter our quantitative bounds.} over $\cX$ enjoys the minimax value bound:
\begin{equation}
\label{eq:logistic_minimax}
\cV_{n}(\cF) \leq{} \inf_{\gamma >0}  \left\{ \log \mathcal{N}_2(\gamma, \ell \circ \mathcal{F}) + \inf_{\gamma\geq \alpha > 0}\left\{8 \alpha n + 24 \sqrt{n} \int_{\alpha}^\gamma \sqrt{\log \prn*{\mathcal{N}_2(\delta,\ell \circ \mathcal{F})\cdot{}n}} d \delta  \right\} \right\} + 4.
\end{equation}
\end{theorem}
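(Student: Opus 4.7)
The plan is to generalize the two-level scheme sketched immediately before the theorem, replacing pointwise $L_\infty$ covers (which may not exist) by sequential $L_2$ covers of $\ls \circ \cF$. The first step is to use the uniform boundedness hypothesis to invoke the sequential minimax theorem of \citet{RakSriTew14jmlr}, rewriting $\cV_n(\cF)$ as a symmetrized, tree-based quantity. Fix a ``coarse'' scale $\gamma>0$ and let $V$ be a minimal sequential $L_2$ cover of $\ls\circ\cF$ at scale $\gamma$, so $|V|=\mathcal{N}_2(\gamma,\ls\circ\cF)$.

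For the coarse step, aggregate ``sub-strategies'' indexed by elements of $V$ using the Aggregating Algorithm. A sequential cover element is a tree rather than a fixed function, but once the adversary's history is fixed it produces a concrete scalar prediction, so the $1$-mixability of the multiclass logistic loss (\pref{prop:unweighted-mixability}) yields regret at most $\log\mathcal{N}_2(\gamma,\ls\circ\cF)$ against the best such sub-strategy; this supplies the first summand of \eqref{eq:logistic_minimax}.

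For the fine step, the residual regret---how well the best sub-strategy tracks the best $f\in\cF$ inside its cell---is controlled by the sequential Rademacher complexity of a class whose sequential $L_2$ loss-radius is at most $\gamma$. Applying the Dudley-style sequential chaining bound of \citet{RakSriTew14jmlr}, together with the $2$-Lipschitzness of $\ls$ in $\ell_\infty$ (so that loss-space and function-space covers are comparable up to constants), produces the integral $24\sqrt{n}\int_{\alpha}^{\gamma}\sqrt{\log(\mathcal{N}_2(\delta,\ls\circ\cF)\cdot n)}\,d\delta$. The factor $n$ inside the log arises from the standard union-bound cost of promoting a sequential cover into a guarantee that holds uniformly along a random path of length $n$; the $8\alpha n$ term is the trivial slack below the stopping scale $\alpha$, the $4$ absorbs rounding constants, and optimizing over $\gamma$ and $\alpha$ yields \eqref{eq:logistic_minimax}.

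The main obstacle is making the coarse step fully rigorous, because the Aggregating Algorithm was designed for a finite class of \emph{concrete} predictors while a sequential cover is a set of trees. I would resolve this by first passing, via the minimax theorem, to a distributional adversary and then invoking the relaxation framework of \citet{RakSri14a}: at each round the learner effectively chooses among the $|V|$ predictions obtained by evaluating the cover-element trees along the current adversary path, so the mixability argument carries over on this virtual finite class. A secondary subtlety is ensuring that the chaining at finer scales does not interfere with the top-level mixed predictor; this should follow from the standard telescoping structure of sequential chaining but requires careful bookkeeping, particularly in verifying that the $2$-Lipschitz step used to pass between function-space and loss-space covers does not introduce extra constants beyond those already absorbed into \eqref{eq:logistic_minimax}.
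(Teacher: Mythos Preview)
Your overall architecture matches the paper's---minimax swap to a tree representation, a coarse Aggregating-Algorithm layer over a $\gamma$-scale sequential cover $V$, and a fine Rademacher/chaining layer inside each ``cell''---but the proposal has a real gap at the step you flag as a ``secondary subtlety,'' and it is not resolved by standard telescoping.

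The difficulty is that a sequential $L_2$ cover only guarantees, for each $f$ and each \emph{path} $y$, the existence of some $\vv\in V$ close to $f$ along that path. Different paths may select different $\vv$ for the same $f$, so the naive cell $\{f:\ f\text{ is covered by }\vv\}$ does not have sequential radius $\gamma$ in any usable sense, and you cannot directly run Rademacher-plus-chaining on it with the Dudley integral truncated at $\gamma$. The paper solves this by an explicit construction: for each $\vv\in V$ it builds a tree class $\cF_\vv$ whose elements are hybrids $\uu_{f,y}$ that equal $f\circ\x$ along the single path $y$ on which $f$ is $\gamma$-close to $\vv$, and equal $\vv$ everywhere else. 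Two properties are then proved (\pref{lem:fv_properties}): (i) $\cN_2(\gamma,\ls\circ\cF_\vv)\leq 1$, so the class genuinely has radius $\gamma$; and (ii) $\cN_2(\alpha,\ls\circ\cF_\vv)\leq n\cdot\cN_2(\alpha,\ls\circ\cF,\x)$ for every $\alpha$. The factor $n$ here---which becomes the $n$ inside the logarithm in \eqref{eq:logistic_minimax}---is not a ``union bound over random paths'' as you suggest, but the cost of also indexing each cover element by a switching time $\tau\in\{2,\ldots,n{+}1\}$ at which the hybrid tree transitions from $f\circ\x$ to $\vv$.

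A second point worth noting: in the paper the coarse Aggregating Algorithm is run not over the cover elements $\vv$ themselves, but over the minimax-optimal \emph{strategies} $\yhtree^{\vv}$ for each $\cF_\vv$ (these exist deterministically by a Rademacher-style argument, \pref{lem:rademacher_strategy}). This is what makes the two levels compose cleanly: the top layer competes with $\min_\vv \sum_t\ls(\yhtree^\vv_t(y),y_t)$, and the bottom layer certifies $\sum_t\ls(\yhtree^\vv_t(y),y_t)-\inf_{\uu\in\cF_\vv}\sum_t\ls(\uu_t(y),y_t)$ is bounded by the truncated Dudley integral, uniformly in $\vv$ and $y$. Your sketch conflates these two objects, which is why the interaction you call ``careful bookkeeping'' is actually where the main construction lives.
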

This rate overcomes several shortcomings faced when trying to apply previously developed minimax bounds for general function classes to the logistic loss. Specifically, \cite{RakSriTew14jmlr} applies to our logistic loss setup but ignores the curvature of the loss and so cannot obtain fast rates, while \cite{RakSri15} obtain fast rates but scale with $e^{B}$, where $B$ is a bound on the magnitude of the predictions, because they use exp-concavity.

Our general function class bound is especially interesting in light of rates obtained in \cite{RakSri14a} for the square loss, which are also based on sequential covering numbers. In the binary case the bound \pref{eq:logistic_minimax} precisely matches the general class bound of \cite[Lemma 5]{RakSri14a} in terms of dependence on the sequential metric entropy. However, \pref{eq:logistic_minimax} does not depend on $B$ explicitly, whereas their Lemma 5 bound for the  square loss explicitly scales with $B^{2}$. In other words, compared to other common curved losses the logistic loss has a desirable property:
\begin{center}\emph{The minimax rate for logistic regression only depends on scale through capacity of the class $\cF$.}\end{center}

Let us examine some rates obtained from this bound for concrete settings. These examples are based on sequential covering bounds that appeared in \cite{RakSri14a,RakSri15}.

\begin{example}[Sparse linear predictors]
Let $\mathcal{G} = \{g_1,\ldots, g_M \}$ be a set of $M$ functions $g_i:\mathcal{X} \mapsto [-B,B]$. Define $\F$ to be the set of all convex combinations of at most $s$ out of these $M$ functions.  The sequential covering number can be easily upper bounded: We can choose $s$ out of $M$ functions in ${M \choose s}$ ways. For each choice, the sequential covering number for the set of all convex combinations of these $s$ bounded functions at scale $\beta$ is bounded as $\frac{B^s}{\beta^s}$. Hence, using that the logistic loss is Lipschitz, we conclude that $\mathcal{N}_2(\F,\beta) = O\prn*{\left(\frac{e M}{s}\right)^s \cdot \beta^{-s } B^s}$. Using this bound with \pref{thm:logistic_minimax} we obtain $\mathcal{V}_n(\F) \le O\left(s \log(BMn/s)\right)$.
\end{example}
The bounds from \cite{RakSriTew14jmlr, RakSri14a,RakSri15} either pay $O(B\sqrt{n})$ or $O(e^{B})$ on this example, whereas the new bound from \pref{eq:logistic_minimax} correctly obtains $O(\log(B))$ scaling.

\begin{example}[Besov classes]
Let $\X$ be a compact subset of $\mathbb{R}^d$. Let $\F$ be the ball of radius $B$ in Besov space $B^s_{p,q}(\X)$. When $s > d/p$ it can be shown that the pointwise log covering number of the space at scale $\beta$ is of order $(B/\beta)^{d/s}$. When $p \ge 2$ one can obtain a sequential covering number bound of order $(B/\beta)^p$ \citep[Section 5.8]{RakSri15a}. These bounds imply:
\begin{enumerate}
 \item If $s \ge d/2$, then $\mathcal{V}_n(\F) \le \tilde{O}\left(B^{\frac{2d}{d + 2s}} n^{\frac{d}{d + 2s}} \right)$.
 \item $s < d/2$, then: if $p > 1 + d/2s$ then $\mathcal{V}_n(\F) \le \tilde{O}\left(B n^{1 - \frac{s}{d}} \right)$; if not, $\mathcal{V}_n(\F)  \le \tilde{O}(B n^{1 - 1/p})$.
\end{enumerate}
\end{example}
\begin{remark}
Using the machinery from the previous section, we can generically lift the general function class bounds given by \pref{thm:logistic_minimax} to high-probability bounds for the i.i.d. batch setting.
\end{remark}


\section{General Function Class Bounds for Log Loss}
\label{sec:log_loss}

In this section we show that our analysis techniques 
can also be used to obtain improved rates for prediction with the \emph{log loss} $\logloss:\Delta_{K}\times{}\brk*{K}\to\bbR$, defined via $\logloss(p, y)=-\log(p_{y})$. Characterizing optimal rates for online prediction with the log loss is a fundamental problem \citep{mf-up-98}, but there have been very few successful attempts to provide rates for general classes of functions. \cite{CesLug99} studied the multiclass case,\footnote{In literature on log loss the class size $K$ we use is typically referred to as the \emph{alphabet size}.} but provide bounds only in terms of pointwise covering numbers; this can lead to vacuous bounds even for well-behaved classes such as Hilbert spaces. More recently, \cite{RakSri15} provided a bound for general classes in terms of sequential covering numbers, but their bound is known to not be tight for certain classes (see the discussion in their Section 6). We improve on their rates uniformly.

Note that the problems of learning with the logistic loss and learning with the log loss can easily be mapped onto each other to provide coarse rates.
One can trivially write $\logloss(p,y)$ as 
$\ell(\mb{\sigma}^{+}(p),y)$ for any distribution $p \in \Delta_K$, and likewise it holds that $\ls(z,y)=\logloss(\mb{\sigma}(z),y)$ for any $z\in\bbR^{K}$. To obtain rates for competing with a class $\cF:\cX\to\Delta_{K}$ under the log loss, we can use this relationship to get a bound by applying \pref{thm:logistic_minimax} with the class $\mb{\sigma}^{+} \circ \F$. This bound improves over \cite{RakSri15} in the low complexity regime, though it is worse for high complexity classes.

By combining the style of proof in \pref{thm:logistic_minimax} with key technical observations from \cite{RakSri15}, we provide a bound on minimax rate for log loss that both uniformly improves on the rate in \cite{RakSri15} for binary outcome case and also extends in general to $K>2$. For brevity we present results only for the binary case. In this case we can restrict to real-valued outputs: We let $\logloss:\brk*{0,1}\times{}\crl*{0,1}\to\bbR$ be defined by $\logloss(p, y) = -y\log(p)-(1-y)\log(1-p)$, and take both $\cF$ and the learner's predictions to be $\brk*{0,1}$-valued. The minimax regret for learning with the log loss is given by

\begin{equation} \label{eq:minimax_logloss}
\mathcal{V}^{\mathrm{log}}_n(\F) = \dtri*{\sup_{x_t\in\cX} \inf_{\lpred_t\in\brk*{0,1}} \max_{y_t \in \crl*{0,1}}}_{t=1}^n\left[ \sum_{t=1}^n \ell(\lpred_t,y_t) - \inf_{f \in \mathcal{F}} \sum_{t=1}^n \ell(f(x_t),y_t)\right].
\end{equation}

The following theorem provides an upper bound on the minimax regret in terms of $L_\infty$ covering numbers $\mathcal{N}_{\infty}(\alpha,\F)$ (definition deferred to \pref{app:logloss}).
\begin{theorem}
\label{thm:logloss_minimax}
For any class $\cF\subseteq{}\brk*{0,1}^{\cX}$ and any $\delta\in(0,1/2]$, $\mathcal{V}^{\mathrm{log}}_n(\F)$ is bounded by
\[
\tilde{O}\prn*{\inf_{\gamma\geq{}\alpha>0}\crl*{ \log\cN_{\infty}(\gamma, \cF)
    +     \frac{\alpha{}n}{\delta}
      + \sqrt{\frac{n}{\delta}}\int_{\alpha}^{\gamma}\sqrt{\log\cN_{\infty}(\rho,\cF)}d\rho
      + \frac{1}{\delta}\int_{\alpha}^{\gamma}\log\cN_{\infty}(\rho,\F)d\rho}
     + \delta{}n
    }.\]
where $\tilde{O}$ supresses $\log(n)$ and $\log(1/\delta)$ factors.
\end{theorem}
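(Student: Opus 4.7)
The plan is to mirror the covering-and-aggregation strategy from the proof of \pref{thm:logistic_minimax}, with a preliminary clipping step that tames the unboundedness of $\logloss$. Concretely, let $\Fclip = \crl*{x\mapsto\clip(f(x)) : f\in\F}$, where $\clip$ truncates to $\deltarange$. On this clipped class, $\logloss$ is bounded by $\log(1/\delta)$, $(1/\delta)$-Lipschitz in the prediction, and still $1$-mixable (by the same calculation as \pref{prop:unweighted-mixability} with $K=2$). The first step is a comparison argument: since $\logloss(\clip(p), y) \le \logloss(p, y) + 2\delta$ uniformly in $p\in\brk*{0,1}$ and $y\in\crl*{0,1}$ whenever $\delta\le 1/2$, a learner competing against $\Fclip$ attains the same regret bound against $\F$ up to an additive $2\delta n$; this accounts for the isolated $\delta n$ term sitting outside the infimum in the final bound.

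Next, I would control the regret against $\Fclip$ using the two-level cover-and-aggregate algorithm from the proof of \pref{thm:logistic_minimax}. At the outer level, fix a sequential $L_\infty$ cover $V$ of $\Fclip$ at scale $\gamma$ and run Vovk's Aggregating Algorithm over $V$, exploiting $1$-mixability; this contributes the $\log\cN_{\infty}(\gamma,\F)$ term, using that clipping does not inflate the covering number. Within each cover ball $\crl*{f\in\Fclip:\sup_x \abs*{f(x)-g(x)}\le \gamma}$ for $g\in V$, the plan is to run the minimax-optimal online algorithm and bound its regret by a chaining argument terminating at a scale $\alpha$. The residual cover discretization at scale $\alpha$, multiplied by the $(1/\delta)$-Lipschitz constant of the clipped loss, produces the $\alpha n/\delta$ term.

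The two integral terms come from a combined Bernstein/offset-Rademacher chaining, following the framework of \cite{RakSri15}. Concretely, after sequential symmetrization inside a cover ball, the resulting tree process has increments whose conditional second moment satisfies a Bernstein-type bound coming from the curvature of $\logloss$ on $\deltarange$ (roughly, $\En_y\brk*{(\logloss(p,y)-\logloss(q,y))^{2}}\lesssim\abs*{p-q}^{2}/\delta$), while the absolute increments are bounded by $\abs*{p-q}/\delta$ from Lipschitzness. Invoking Freedman's inequality scale by scale, the ``variance'' half of the Bernstein bound yields the $\sqrt{n/\delta}\int_{\alpha}^{\gamma}\sqrt{\log\cN_{\infty}(\rho,\F)}\,d\rho$ contribution and the ``maximum increment'' half yields the $(1/\delta)\int_{\alpha}^{\gamma}\log\cN_{\infty}(\rho,\F)\,d\rho$ contribution.

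The main obstacle I anticipate is executing the chaining inside each cover ball so that the mixability-based offset, the Lipschitz estimate, and the Bernstein variance estimate are propagated consistently across all scales, since $\logloss$ is neither uniformly exp-concave nor uniformly Lipschitz independently of $\delta$. The log-loss increments are asymmetric in $y$, so a conditional symmetrization argument as in \cite{RakSri15} will be needed, and the clipping parameter must be threaded carefully through every step of the chaining recursion so that all three types of estimates remain valid at every scale. Finally, $\delta$ itself is a free parameter, optimized after chaining to balance the $\delta n$ clipping cost against the $1/\delta$ factors in the chaining terms, which is why it appears as an exogenous parameter in the final statement rather than being absorbed into the infimum over $\gamma$ and $\alpha$.
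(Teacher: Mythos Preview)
Your proposal is correct and mirrors the paper's proof: clip to $\Fclip$ (yielding the $\delta n$ term), aggregate over a sequential $L_\infty$ $\gamma$-cover at the outer level via mixability (yielding $\log\cN_\infty(\gamma,\cF)$), and invoke the log-loss chaining bound of \cite{RakSri15} within each cover element---the paper black-boxes this last step as \pref{lem:logloss_covering} rather than re-deriving it via Freedman. The one subtlety you gloss over is that sequential cover elements are trees rather than functions, so the ``cover balls'' cannot be taken as $\{f\in\Fclip:\sup_x|f(x)-g(x)|\le\gamma\}$; the paper realizes them instead as the tree classes $\cF_\vv^\delta$ (constructed exactly as $\cF_\vv$ in the proof of \pref{thm:logistic_minimax}), with \pref{lem:fv_properties_logloss} establishing that each has $L_\infty$ radius at most $\gamma$ and covering number bounded by $n\cdot\cN_\infty(\alpha,\Fclip,\x)$.
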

Comparing to \cite[Theorem 4]{RakSri15}, the only difference is that their bound has an extra $\frac{1}{\delta}$ factor in the leading $\log\cN_{\infty}(\gamma,\cF)$ term above. \pref{thm:logloss_minimax} is strictly better for low-complexity classes, e.g. when $\log\cN_{\infty}(\gamma,\cF)\asymp\prn*{\frac{C}{\gamma}}^{p}$ for $p\leq{}1$.

 \section{Discussion}
 \label{sec:conclusions}

 We have shown that the simple observation that the logistic loss is $1$-mixable opens the door to significant improvements for various applications via an improper learning algorithm based on Vovk's Aggregating Algorithm, thereby providing solutions to a number of open problems. An important research question left open from this work is that of a truly efficient implemention. While the core algorithm described in this paper can be implemented in polynomial time, it is not a practical algorithm. Obtaining a truly practical algorithm with a modest polynomial dependence on the dimension would be a significant achievement. There is precedent for this kind of algorithm: the Online Newton Step algorithm of \citet{hazan2007logarithmic} was developed as a practically efficient alternative to Cover's Universal Portfolios algorithm, which can also be viewed as an instance of the Aggregating Algorithm.

\subsection*{Acknowledgements}
We thank Sham Kakade for pointing out the connection to Bayesian model averaging. DF thanks Matus Telgarsky for sparking an interest in logistic regression through a series of talks at the Simons Institute. KS acknowledges support from the NSF under grants CDS\&E-MSS 1521544 and NSF CAREER Award 1750575.  MM acknowledges support under NSF grants CCF-1535987 and IIS-1618662. DF is supported in part by the NDSEG PhD fellowship.

{\small
\bibliography{refs}
}

\appendix

\section{Proofs}
\label{app:proofs}

\subsection{Proofs from \pref{sec:logistic}}

\begin{lemma}
\label{lem:multiclass_lipschitz}
The generalized multiclass logisitic loss is $2L$-Lipschitz with respect to $\ls_{\infty}$ norm.
\end{lemma}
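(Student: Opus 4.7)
The plan is to establish the Lipschitz bound by computing the gradient of $\ls$ in the first argument and bounding it in the dual norm. Since we want Lipschitzness with respect to $\nrm*{\cdot}_{\infty}$, the relevant dual norm is $\nrm*{\cdot}_{1}$, so it suffices to show $\nrm*{\nabla_{z}\ls(z,y)}_{1}\leq 2L$ uniformly in $z$ and $y\in\cY$.

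First I would rewrite the loss in a more convenient form. Using $\log\mb{\sigma}(z)_{k}=z_{k}-\log\sum_{j\in[K]}e^{z_{j}}$, we obtain
\begin{equation*}
\ls(z,y)=-\sum_{k\in[K]}y_{k}z_{k}+\nrm*{y}_{1}\log\sum_{j\in[K]}e^{z_{j}}.
\end{equation*}
Differentiating term by term gives $\partial_{z_{i}}\ls(z,y)=-y_{i}+\nrm*{y}_{1}\mb{\sigma}(z)_{i}$, i.e. $\nabla_{z}\ls(z,y)=\nrm*{y}_{1}\mb{\sigma}(z)-y$.

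Next I would bound the $\ell_{1}$ norm of the gradient by the triangle inequality:
\begin{equation*}
\nrm*{\nabla_{z}\ls(z,y)}_{1}\leq\nrm*{y}_{1}\nrm*{\mb{\sigma}(z)}_{1}+\nrm*{y}_{1}=2\nrm*{y}_{1}\leq 2L,
\end{equation*}
where we used $\nrm*{\mb{\sigma}(z)}_{1}=1$ since $\mb{\sigma}(z)\in\Delta_{K}$, and $\nrm*{y}_{1}\leq L$ by definition of $\cY$. Applying the mean value theorem (or integrating the gradient along the line segment between any two points $z,z'$) then yields $\abs*{\ls(z,y)-\ls(z',y)}\leq 2L\nrm*{z-z'}_{\infty}$ for every $y\in\cY$, which is the claimed Lipschitz bound.

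There is no real obstacle here; the computation is a one-line gradient followed by a triangle inequality. The only subtlety worth flagging is that the factor of $2$ (rather than $1$) comes from the fact that $y$ itself need not be a probability vector, only an element of the scaled simplex $\cY$, so both $y$ and $\nrm*{y}_{1}\mb{\sigma}(z)$ contribute $\nrm*{y}_{1}$ to the $\ell_{1}$ norm bound.
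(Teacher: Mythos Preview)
Your proof is correct and essentially identical to the paper's own argument: compute $\nabla_{z}\ls(z,y)=\nrm*{y}_{1}\mb{\sigma}(z)-y$, bound its $\ell_{1}$-norm by $2\nrm*{y}_{1}\leq 2L$, and invoke duality. The only difference is that you supply a bit more detail (rewriting the loss before differentiating, and citing the mean value theorem explicitly), whereas the paper simply states the gradient identity and appeals to duality.
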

\begin{proof}
It is straightforward to verify the identity
\[
\grad{}_z\ls(z,y) = \prn*{\sum_{k}y_k}\mb{\sigma}(z) - y.
\]
It follows that $\nrm*{\grad{}_z\ls(z,y)}_{1}\leq{}\nrm*{y}_{1}\nrm*{\mb{\sigma}(z)}_{1} + \nrm*{y}_1\leq{}2L$. By duality, this implies $2L$-Lipschitzness with respect to $\ls_{\infty}$.
\end{proof}

\begin{lemma}
\label{lem:monomial_concave}
The function $f(x) = \prod_{k\in\brk{d}}x_{k}^{\alpha_{k}}$ is concave over $\bbR_{+}^{d}$ whenever $\alpha_k\geq{}0\;\forall{}k$ and $\sum_{k\in\brk{d}}\alpha_{k}\leq{}1$.
\end{lemma}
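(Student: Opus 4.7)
The plan is to verify concavity via the Hessian on the open positive orthant and then extend by continuity. A direct computation gives
\[
\partial_i f(x) = \alpha_i \frac{f(x)}{x_i}, \qquad \partial_i \partial_j f(x) = \alpha_i \alpha_j \frac{f(x)}{x_i x_j} - \delta_{ij}\, \alpha_i \frac{f(x)}{x_i^2},
\]
so for any $v \in \mathbb{R}^d$ and any $x$ in the interior of $\mathbb{R}_+^d$,
\[
v^\top \nabla^2 f(x)\, v \;=\; f(x)\left[\Big(\textstyle\sum_i \alpha_i \tfrac{v_i}{x_i}\Big)^{\!2} - \textstyle\sum_i \alpha_i \tfrac{v_i^2}{x_i^2}\right].
\]
Since $f(x) \ge 0$, it suffices to show the bracketed quantity is nonpositive.

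The main step is a one-line Cauchy--Schwarz: applying it to the vectors with entries $\sqrt{\alpha_i}$ and $\sqrt{\alpha_i}\, v_i/x_i$ yields
\[
\Big(\textstyle\sum_i \alpha_i \tfrac{v_i}{x_i}\Big)^{\!2} \;\le\; \Big(\textstyle\sum_i \alpha_i\Big)\Big(\textstyle\sum_i \alpha_i \tfrac{v_i^2}{x_i^2}\Big) \;\le\; \textstyle\sum_i \alpha_i \tfrac{v_i^2}{x_i^2},
\]
where the second inequality uses $\sum_i \alpha_i \le 1$ together with $\alpha_i \ge 0$ (so $\sum \alpha_i v_i^2/x_i^2 \ge 0$). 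Hence $\nabla^2 f(x) \preceq 0$ throughout the interior, and $f$ is concave on $\mathbb{R}_{++}^d$.

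The only minor obstacle is the boundary of $\mathbb{R}_+^d$, where some $x_k$ may be zero and the Hessian is not defined (the factors $1/x_i$ can blow up when the corresponding $\alpha_i$ is small). To finish, I would note that $f$ is continuous on all of $\mathbb{R}_+^d$ under the convention $0^{\alpha_k}=0$ for $\alpha_k>0$ and $0^0=1$ (both forced by $\alpha_k \ge 0$), and that concavity extends from the interior of a convex set to its closure by continuity: given $x, y \in \mathbb{R}_+^d$ and $\lambda \in [0,1]$, take interior approximants $x^{(n)} \to x$, $y^{(n)} \to y$, apply interior concavity, and pass to the limit. This completes the proof.
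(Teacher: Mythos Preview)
Your proof is correct and follows essentially the same route as the paper: compute the Hessian, factor out $f(x)$, and bound the remaining quadratic form via Cauchy--Schwarz together with $\sum_k \alpha_k \le 1$. Your added remark about extending to the boundary by continuity is a nice touch that the paper omits.
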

\begin{proof}
We will prove that the Hessian of $f$ is negative semidefinite. The Hessian can be written as
\[
\grad^{2}f(x) = f(x)\cdot{}G(x),
\]
where the matrix $G(x)\in\bbR^{d\times{}d}$ is given by $G(x)_{ii} = \alpha_i(\alpha_i-1)x_{i}^{-2}$ and $G(x)_{ij}=\alpha_i\alpha_jx_i^{-1}x_j^{-1}$. Since $f$ is nonnegative, it suffices to show that $G$ is negative semidefinite. Using the reparameterization $y_i=x_i^{-1}$ and the notation $\had$ for the element-wise product, we can write
\[
G(y) = (\alpha\had{}y)^{\tens{}2} - \diag(\alpha\had{}y^{2}).
\]
For any fixed $y\in\bbR^{d}_{+}$ and any $v\in\bbR^{d}$, we have
\begin{align*}
\tri*{v, G(y)v} &= \prn*{\sum_{k=1}^{d}\alpha_ky_kv_k}^{2} - \sum_{k=1}^{d}\alpha_{k}y_k^{2}v_k^{2} \\
&\leq{} \prn*{\sum_{k=1}^{d}\alpha_ky_k^{2}v_k^{2}}\prn*{\sum_{k=1}^{d}\alpha_k} - \sum_{k=1}^{d}\alpha_{k}y_k^{2}v_k^{2}\\
&\leq{} 0.
\end{align*}
The first inequality above uses Cauchy-Schwarz and the second uses that $\sum\alpha_k\leq{}1$.
\end{proof}

\begin{proof}[\pfref{prop:generalized_multiclass_log_mixable}]
We first show that the generalized multiclass log loss $\ls_{\textrm{log}}(p, y)\ldef-\sum_{k\in\brk{K}}y_{k}\log(p_k)$ is $1/L$-mixable over predictions $p \in \Delta_K$ and outcomes $y \in \cY$. Recall that to show $\eta$-mixability it is sufficient to demonstrate that $\ls$ is $\eta$-exp-concave with respect to $p$ (e.g. \citep{PLG}) for any $y \in \cY$.

Observe that we have
\[
e^{-\eta{}\ls(p,y)} = \prod_{k\in\brk{K}}p_{k}^{\eta{}y_k}.
\]

When $\eta\leq{}1/L$, we have $\sum_{k\in\brk{K}} \eta{}y_k\leq{}1$. Since $p \in \Delta_K$ and by the definition of $\cY$, \pref{lem:monomial_concave} implies the function $p\mapsto\prod_{k\in\brk{K}}p_{k}^{\eta{}y_k}$ is concave, which proves the result.

Exp-concavity implies that for any distribution ${\tilde{\pi}}$ over $\Delta_K$, the predicition $p_{\tilde{\pi}}=\En_{p\sim{}{\tilde{\pi}}}\brk*{p}$ certifies the inequality
\[
\En_{p \sim {\tilde{\pi}}}[\exp(-\eta{}\ls_{\textrm{log}}(p, y))] \leq \exp(-\eta\ls_{\textrm{log}}(p_{\tilde{\pi}}, y)) 
\quad{}y\in\cY.
\]
Now, turning to the multiclass logistic loss $\ls: \R^K \times \cY \rightarrow \R$ defined as $\ls(z, y) = -\sum_{k\in\brk{K}}y_{k}\log(\mb{\sigma}(z)_{k})$, let $\pi$ be any distribution on $\R^K$. Let $\tilde{\pi}$ be the induced distribution on $\Delta_K$ via the softmax function, i.e. a sample from $\tilde{\pi}$ is generated by sampling $z \sim \pi$ and computing $p = \mb{\sigma}(z)$. Then define $z_\pi = \mb{\sigma}^{+}\prn*{\En_{z \sim \pi}\brk*{\mb{\sigma}(z)}}$. Since $\mb{\sigma}(z_\pi) = \En_{z \sim \pi}\brk*{\mb{\sigma}(z)} = p_{\tilde{\pi}}$ and $\ls(z, y) = \ls_{\textrm{log}}(\mb{\sigma}(z), y)$, the above inequality implies that
\[
\En_{z \sim \pi}[\exp(-\eta{}\ls(z, y))] \leq \exp(-\eta\ls(z_{\pi}, y)) 
\quad{}y\in\cY.
\]
\end{proof}

\begin{lemma}
  \label{lem:logistic_bounded}
  Suppose a strategy $(\zt_{t}))_{t\leq{}n}$ guarantees a regret inequality
  \[
    \sum_{t=1}^{n}\ls(\zt_t,y_t) - \inf_{f\in\cF}\sum_{t=1}^{n}\ls(f(x_t),y_t) \leq{} \mathbf{R}.
  \]
  Then for $0\leq\mu\leq{}1/2$ the strategy $\hat{z}_{t} \ldef \mb{\sigma}^{+}\prn*{\smooth\prn*{\mb\sigma(\zh_t)}}$ guarantees
  \[
    \sum_{t=1}^{n}\ls(\zt_t,y_t) - \inf_{f\in\cF}\sum_{t=1}^{n}\ls(f(x_t),y_t) \leq{} \mathbf{R} + 2\mu\sum_{t=1}^{n}\nrm*{y_t}_{1}.
  \]
  and satisfies $\nrm*{\hat{z}_{t}}_{\infty}\leq\log(K/\mu)$.

\end{lemma}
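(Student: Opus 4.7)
The plan is to prove both conclusions directly by pointwise comparison of $\ls(\hat z_t, y_t)$ with $\ls(\tilde z_t, y_t)$, using nothing more than the definition of the smoothing operator and a standard estimate on $-\log(1-\mu)$. Let me write $p_t \ldef \mb{\sigma}(\tilde z_t) \in \Delta_K$ and $q_t \ldef \smooth(p_t) = (1-\mu)p_t + \mu\ones/K$, so that $\hat z_t = \mb{\sigma}^{+}(q_t)$ and $\mb{\sigma}(\hat z_t) = q_t$ (by the left-inverse property of $\mb{\sigma}^{+}$ noted in \pref{sec:prelims}).

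For the sup-norm bound, I would observe that every coordinate $q_{t,k}$ satisfies $\mu/K \leq q_{t,k} \leq 1$, so $\mb{\sigma}^{+}(q_t)_k = \log(q_{t,k}) \in [-\log(K/\mu), 0]$, giving $\nrm*{\hat z_t}_{\infty} \leq \log(K/\mu)$ immediately.

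For the regret bound, the key pointwise inequality I would establish is $\ls(\hat z_t, y_t) \leq \ls(\tilde z_t, y_t) + 2\mu\nrm*{y_t}_{1}$. Using that $q_{t,k} \geq (1-\mu) p_{t,k}$ coordinatewise (since $\mu\ones/K \geq 0$), I would write
\begin{align*}
\ls(\hat z_t, y_t) - \ls(\tilde z_t, y_t)
&= \sum_{k} y_{t,k}\brk*{\log(p_{t,k}) - \log(q_{t,k})} \\
&\leq \sum_{k} y_{t,k}\brk*{\log(p_{t,k}) - \log((1-\mu)p_{t,k})}
= -\log(1-\mu)\nrm*{y_t}_{1}.
\end{align*}
The elementary inequality $-\log(1-\mu) \leq 2\mu$ for $\mu \in [0,1/2]$ (verified e.g.\ by comparing the power series $\mu + \mu^2/2 + \ldots$ with $2\mu$ when $\mu \leq 1/2$) then yields the claimed per-round gap. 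Summing over $t$ and combining with the hypothesized regret bound $\mathbf R$ on the sequence $(\tilde z_t)$ gives the conclusion.

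There is no real obstacle: the proof is a direct calculation once one notices that smoothing decreases each coordinate of $p_t$ by at most a factor $(1-\mu)$. The only subtlety worth flagging is the typo in the statement, where the left-hand side $\sum_t \ls(\tilde z_t, y_t)$ should instead read $\sum_t \ls(\hat z_t, y_t)$ for the inequality to be nontrivial; the proof above targets this corrected statement.
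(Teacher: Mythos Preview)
Your proof is correct and follows essentially the same route as the paper: both arguments use the coordinatewise lower bound $q_{t,k}\ge(1-\mu)p_{t,k}$ to get the per-round gap $-\log(1-\mu)\nrm{y_t}_1\le 2\mu\nrm{y_t}_1$, and both derive the $\ell_\infty$ bound from $q_{t,k}\ge\mu/K$. Your remark about the typo in the displayed conclusion (it should read $\sum_t \ls(\hat z_t,y_t)$) is also on point; the paper's proof is written for that corrected statement.
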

\begin{proof}[\pfref{lem:logistic_bounded}]

 We write regret as
\begin{align*}
&\sum_{t=1}^{n}\ls(\zh_t,y_t) - \inf_{f\in\cF}\sum_{t=1}^{n}\ls(f(x_t),y_t) \\ &= 
\sum_{t=1}^{n}\ls(\zt_t,y_t) - \inf_{f\in\cF}\sum_{t=1}^{n}\ls(f(x_t),y_t) + \sum_{t=1}^{n}\ls(\zh_t,y_t) - \sum_{t=1}^{n}\ls(\zt_t,y_t)\\
&\leq{} \mathbf{R} + \sum_{t=1}^{n}\ls(\zh_t,y_t) - \sum_{t=1}^{n}\ls(\zt_t,y_t).
\end{align*}
For the last two terms, fix any round $t$ and define $\tilde{p} = \mb{\sigma}(\zt_t)$. Since $\mb{\sigma}(\zh_t) = (1-\mu) \tilde{p} + \mu \mb{1}/K$, we have
\[
\ls(\zh_t,y_t) - \ls(\zt_t,y_t)
= \sum_{k\in\brk{K}}y_{t,k}\log\prn*{
\frac{\tilde{p}_k}{(1-\mu)\tilde{p}_k + \mu/K}
} \leq{} \log\prn*{
\frac{1}{1-\mu}
}\sum_{k\in\brk{K}}y_{t,k} \leq{} 2\mu\nrm*{y_t}_{1}.
\]
The last inequality uses that $\log(1/(1-x))\leq{}2x$ for $x\leq{}1/2$. Summing up over all rounds $t$ gives us the desired regret bound.

To establish boundedness of the predictions, recall that $\mb{\sigma}_{k}^{+}(p) = \log(p_k)$. Letting $p = (1-\mu)\En_{W\sim{}P_t}\brk*{\mb{\sigma}(Wx_t)} + \mu\ones{}/K$, it clearly holds that $p_{k}\geq{}\mu/K$, and so $|\mb{\sigma}_{k}^{+}(p)|\leq{} \log(K/\mu)$.
\end{proof}

\begin{proof}[\pfref{thm:multiclass_logistic_regret}]
Let $\eta=1/L$. Let $\zt_{t}=\mb{\sigma}^{+}\prn*{\En_{W\sim{}P_t}\brk*{\mb{\sigma}(Wx_t)}}$ --- that is, the prediction for the setting $\mu=0$. We will first establish a regret bound for the case $\mu=0$, then reduce the general case to it by approximation.

First observe that due to mixability for $\eta\leq{}1/L$ (from \pref{prop:generalized_multiclass_log_mixable}), we have
\[
\sum_{t=1}^{n}\ls(\zt_t,y_t) \leq{} -\frac{1}{\eta}\sum_{t=1}^{n}\log\prn*{\int_{\cW}\exp(-\eta\ls(Wx_t, y_t))dP_t(W)}.
\]
Let $Z_{t}=\int_{\cW}\exp\prn{-\eta\sum_{s=1}^{t}\ls(Wx_s, y_s)}dW$ with the convention $Z_0=\int_{\cW}dW$. Using the definition of $P_t$, the right-hand-side in the displayed equation above is then equal to
\[
-\frac{1}{\eta}\sum_{t=1}^{n}\log(Z_{t}/Z_{t-1}) = -\frac{1}{\eta}\log(Z_{n}/Z_{0}) = -\frac{1}{\eta}\log\prn*{\int_{\cW}\exp\prn*{-\eta\sum_{t=1}^{n}\ls(Wx_t, y_t)}dW} + \frac{1}{\eta}\log(\vol(\cW))
\]
We will focus on coming up with an upper bound on the term $-\log\prn*{\int_{\cW}\exp\prn*{-\eta\sum_{t=1}^{n}\ls(Wx_t, y_t)}dW}$. Let $W^{\star}=\argmin_{W\in{}\cW}\sum_{t=1}^{n}\ls(Wx_t,y_t)$. Fix $\theta\in[0,1)$ and let $S=\crl*{\theta{}W^{\star} + (1-\theta)W\mid{}W\in{}\cW}\subseteq{}\cW$. To upper bound the negative-log-integral term, we will lower bound the integral appearing inside.
\begin{align*}
&\int_{\cW}\exp\prn*{-\eta\sum_{t=1}^{n}\ls(Wx_t, y_t)}dW \geq \int_{S}\exp\prn*{-\eta\sum_{t=1}^{n}\ls(Wx_t, y_t)}dW.
\intertext{Using a change of variables and noting that since $W\in\bbR^{K\times{}d}$ the Jacobian of the mapping $W\mapsto (1-\theta)W + \theta{}W^{\star}$ has determinant $(1-\theta)^{\wdim}$, the right-hand-side above equals}
&= (1-\theta)^{\wdim}\int_{\cW}\exp\prn*{-\eta\sum_{t=1}^{n}\ls((\theta{}W^{\star} + (1-\theta)W)x_t, y_t)}dW.
\intertext{Observe that $\nrm*{(\theta{}W^{\star} + (1-\theta)W)x_t - W^{\star}x_t}_{\infty} = (1-\theta)\max_{k\in\brk{K}}\abs*{\tri*{W^{\star}_{k}-W_{k},x_t}}\leq{} 2(1-\theta)B\nrm*{x_t}_{\star}$. 
Using this observation with the $2L$-Lipschitzness of $\ls$ with respect to $\ls_{\infty}$ from \pref{lem:multiclass_lipschitz} implies that the above displayed expression is at most
}
&(1-\theta)^{D_{\cW}}\int_{\cW}\exp\prn*{-\eta\sum_{t=1}^{n}\ls(W^{\star}x_t, y_t) - 4(1-\theta)BL\eta\sum_{t=1}^{n}\nrm*{x_t}_{\star}}dW. \\
&= (1-\theta)^{D_{\cW}}\cdot\mathrm{Vol}(\cW)\cdot\exp\prn*{-\eta\sum_{t=1}^{n}\ls(W^{\star}x_t, y_t)}\cdot\exp\prn*{ - 4(1-\theta)BL\eta\sum_{t=1}^{n}\nrm*{x_t}_{\star}}.
\end{align*}

Combining all of the observations so far, we have proven the following regret bound:
\begin{align*}
&\sum_{t=1}^{n}\ls(\yh_t,y_t) - \sum_{t=1}^{n}\ls(W^{\star}x_t,y_t) \\ &
\begin{aligned}
\leq{}& \frac{1}{\eta}\log(\vol(\mathcal{W})) - \sum_{t=1}^{n}\ls(W^{\star}x_t,y_t) \\ &+ \frac{1}{\eta}\underbrace{\prn*{
D_{\cW}\log\prn*{\frac{1}{1-\theta}} - \log(\vol(\mathcal{W}))
+ \eta\sum_{t=1}^{n}\ls(W^{\star}x_t,y_t)
+ 4(1-\theta)BL\eta\sum_{t=1}^{n}\nrm*{x_t}_{\star}
}}_{\text{Bound on negative log-integral-exp.}}
\end{aligned}
\\
&= \frac{D_{\cW}}{\eta}\log\prn*{\frac{1}{1-\theta}} + 4(1-\theta)BL\sum_{t=1}^{n}\nrm*{x_t}_{\star}.
\end{align*}
To conclude, we choose $\theta$ to satisfy $1-\theta=\min\crl*{D_{\cW}/(B\sum_{t=1}^{n}\nrm*{x_t}_{\star}), 1}$. Note that regardless of which argument obtains the minimum, we have $4(1-\theta)BL\sum_{t=1}^{n}\nrm*{x_t}_{\star} \leq{} 4D_{\cW}L$. The choice of $\theta$ also means that $\log\prn*{\frac{1}{1-\theta}} = \log\prn*{1\vee{}B\sum_{t=1}^{n}\nrm*{x_t}_{\star}/D_{\cW}}$. This leads to a final bound of
\[
D_{\cW}L\cdot{}\log\prn*{1\vee{}\frac{B\sum_{t=1}^{n}\nrm*{x_t}_{\star}}{D_{\cW}}} + 4D_{\cW}L.
\]
To simplify we upper bound this by
\[
5D_{\cW}L\cdot{}\log\prn*{\frac{B\sum_{t=1}^{n}\nrm*{x_t}_{\star}}{D_{\cW}} + e} = 5D_{\cW}L\cdot{}\log\prn*{\frac{BRn}{D_{\cW}} + e}.
\]

To handle the general case where $\mu>0$ we simply appeal to \pref{lem:logistic_bounded} and use that $\mb{\sigma}(\mb{\sigma}^{+}(p)) = p\;\forall{}p\in\Delta_{K}$.

\end{proof}

We now state the proof of \pref{thm:logb_lower_bound}. This proof is a simple corollary of \pref{thm:margin_lb}, a lower bound on mistakes for online binary classification with a margin. \pref{thm:margin_lb} is proven in the remainder of this section of the appendix. To begin, we need the following definition:
\begin{definition}
Let $\cF:\cX\to\brk{-1,1}$ be some function class. A dataset $(x_1,y_1),\ldots,(x_n,y_n)\in \cup_{t=1}^{n}\cX\times{}\pmo$ is shattered with $\gamma$ margin if there exists $f\in\cF$ such that
\[
f(x_t)y_t\geq{}\gamma.
\]
\end{definition}

\begin{proof}[\pfref{thm:logb_lower_bound}]
Let $\zh_t$ for $t \in [n]$ be the sequence of predictions made by the algorithm for a sequence of examples $(x_t, y_t)$, for $t \in [n]$. It is easy to check that
\begin{align*}
	\sum_{t=1}^{n}\ls_\text{bin}(\zh_t, y_t) \geq{} \log(2)\sum_{t=1}^{n}\ind\crl*{\sgn(\zh_t)\neq{}y_t}.
\end{align*}

Let $1/\gamma=B/\log(n)$. From \pref{thm:margin_lb}, it holds that whenever $\gamma\leq{}O(1/\sqrt{d})$, there exists an adversarial sequence $(x_t, y_t)$, for $t \in [n]$, for which 
\[
\sum_{t=1}^{n}\ind\crl*{\sgn(\yh_t)\neq{}y_t}\geq{}\frac{d}{4}\floor*{\log_{2}\prn*{\frac{1}{5\gamma{}d^{1/2}}}},
\]
and for which the dataset is $\gamma$-shattered by some $w\in\bbR^{d}$ with $\nrm*{w}_2\leq{}1$. Since the dataset is $\gamma$-shattered we also have
\[
\inf_{w:\nrm*{w}_2\leq{}B}\sum_{t=1}^{n}\ls_\text{bin}(\tri*{w,x_t}, y_t)\leq{}\sum_{t=1}^{n}\log(1+e^{-\gamma{}B})=\sum_{t=1}^{n}\log\prn*{1+\frac{1}{n}}\leq{}1.
\]
This yields the desired lower bound on the regret.
\end{proof}

\begin{theorem}
\label{thm:margin_lb}
Fix a margin $\gamma\in(0,\frac{1}{4\sqrt{5d}}]$. Then for any randomized strategy $(\yh_t)_{t\leq{}n}$ there exists an adversary $(x_t)_{t\leq{}n}$, $(y_t)_{t\leq{}n}$ with $\nrm*{x_t}_2\leq{}2$ for which
\begin{equation}
\En\brk*{\sum_{t=1}^{n}\ind\crl*{\sgn(\yh_t)\neq{}y_t}} \geq{} \frac{d}{4}\floor*{\log_{2}\prn*{\frac{1}{5\gamma{}d^{1/2}}}},
\end{equation}
and the data sequence is realizable by a unit vector $w\in\bbR^{d+1}$ with margin $\gamma$.
\end{theorem}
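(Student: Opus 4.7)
The plan is to construct an explicit adaptive adversary that performs a coordinate-wise binary search on the target vector $w$, extending the standard halving lower bound to respect the margin constraint. The examples $x_t$ and labels $y_t$ will be chosen so that at each round the learner errs with probability at least $1/2$, while the entire sequence remains realizable with margin $\gamma$ by a fixed linear predictor.

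First I would set up the version space: work in $\bbR^{d+1}$ and fix $w_{d+1} = 1/2$, which acts as a bias. For each coordinate $k\in[d]$ maintain an interval $I_{k,j}=[a_{k,j},b_{k,j}]$ for the remaining feasible values of $w_k$, initialized to $I_{k,0}=[-1/(2\sqrt{d}),\,1/(2\sqrt{d})]$ with width $W_0 = 1/\sqrt{d}$. Group the rounds into $d$ blocks of $L := \lfloor \log_2(1/(5\gamma\sqrt{d})) \rfloor$ rounds each. In round $(k,j)$, with $m_{k,j}$ the midpoint of $I_{k,j}$, the adversary plays $x_t = e_k - 2m_{k,j} e_{d+1}$, so that any $w$ with $w_{d+1}=1/2$ satisfies $\langle w, x_t\rangle = w_k - m_{k,j}$. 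Upon observing the learner's predictive distribution $p_t = \Pr[\sgn(\yh_t) = +1 \mid \text{history}]$, the adversary sets $y_t = -1$ if $p_t \geq 1/2$ and $y_t = +1$ otherwise, forcing $\Pr[\sgn(\yh_t) \neq y_t] \geq 1/2$. The interval is then updated to the surviving half, shrunk by $\gamma$ to enforce the margin: $I_{k,j+1} = [m_{k,j}+\gamma, b_{k,j}]$ if $y_t=+1$ and $[a_{k,j}, m_{k,j}-\gamma]$ otherwise.

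The main technical step is verifying that throughout this process \emph{both} candidate halves stay non-empty, so that the adversary is free to pick either label while preserving $\gamma$-margin realizability. The updated width satisfies $W_{j+1}=W_j/2-\gamma$; the substitution $V_j := W_j + 2\gamma$ linearizes this to $V_{j+1} = V_j/2$, yielding the closed form
\[
W_j = \frac{1/\sqrt{d}+2\gamma}{2^{j}} - 2\gamma.
\]
The $L$-th bisection is admissible iff $W_{L-1} \geq 2\gamma$, which rearranges to $2^{L}-1 \leq 1/(2\gamma\sqrt{d})$; the choice $L = \lfloor \log_2(1/(5\gamma\sqrt{d}))\rfloor$ gives $2^L \leq 1/(5\gamma\sqrt{d})$, so $2^L - 1 < 1/(5\gamma\sqrt{d}) < 1/(2\gamma\sqrt{d})$, satisfying the constraint with room to spare. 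This is where the specific constant $5$ in the theorem statement enters.

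To finish I would verify the norm bounds and aggregate. Since $|m_{k,j}| \leq 1/(2\sqrt{d})$, we get $\nrm*{x_t}_2^2 \leq 1 + 4m_{k,j}^2 \leq 1 + 1/d \leq 4$, so $\nrm*{x_t}_2 \leq 2$. Any $w$ with $w_{d+1}=1/2$ and $w_k$ in the final $I_{k,L}$ satisfies all margin constraints, and since $|w_k| \leq 1/(2\sqrt{d})$ for each $k\in[d]$ it has $\nrm*{w}_2 \leq \sqrt{1/4 + 1/4} = 1/\sqrt{2} < 1$; rescaling to a unit vector only increases the margin, so $\gamma$-realizability by a unit $w$ is preserved. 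Across all $dL$ rounds the adversary forces $\Pr[\sgn(\yh_t) \neq y_t] \geq 1/2$, so linearity of expectation gives $\En\brk*{\sum_{t=1}^n \ind\{\sgn(\yh_t) \neq y_t\}} \geq dL/2 \geq (d/4)\lfloor \log_2(1/(5\gamma\sqrt{d}))\rfloor$, matching the claimed bound.
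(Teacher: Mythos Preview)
Your proof is correct and follows the same high-level strategy as the paper---coordinate-wise binary search in $\bbR^{d+1}$ with the last coordinate acting as a bias---but the two constructions differ in how they enforce the margin. The paper first proves an abstract Littlestone-style lemma (\pref{lem:fat_shattering}) reducing the mistake lower bound to exhibiting a $\gamma$-shattered tree, then builds that tree by concatenating $d$ copies of the standard dyadic threshold instance on $[0,1]$; the margin there is obtained \emph{indirectly} via the ratio $|\mathbf{z}_\tau/\mathbf{z}^\star-1|\ge\delta/2$ and a rescaling argument with $\gamma\sim\delta^2/\sqrt{d}$. You instead run the adaptive adversary explicitly and bake the margin into the interval update itself, shrinking each half by $\gamma$ at every bisection and tracking the resulting width recursion $W_{j+1}=W_j/2-\gamma$ in closed form. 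This is more elementary: the feasibility condition $W_{L-1}\ge 2\gamma$ drops out directly, the margin is satisfied by construction rather than by a separate calculation, and you actually recover the sharper constant $dL/2$ rather than $dL/4$. The paper's route has the advantage that the shattered-tree abstraction cleanly separates the combinatorial lower bound from the geometric construction, which may be reusable elsewhere; your route is shorter and makes the role of the constant $5$ more transparent. One small point worth stating explicitly: your adversary is adaptive in the learner's past randomization (since $y_t$ depends on $p_t$), so the produced sequence is random; every realization is $\gamma$-realizable and the expectation bound follows by linearity, which matches the paper's use of a random-$\eps$ adversary in \pref{lem:fat_shattering}. You should also note, as the paper does, that for $n$ larger than $dL$ the adversary can simply repeat the last example.
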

\begin{remark}
This lower bound only applies in the regime where $\frac{1}{\gamma^{2}}\geq{}d$, meaning that it does not contradict the dimension-independent Perceptron bound.
\end{remark}
To prove \pref{thm:margin_lb}, we first state a standard lower bound based on Littlestone's dimension.

\begin{definition}
An $\cX$-valued tree is a sequence of mappings $\mb{x}_{t}:\pmo^{t-1}\to\cX$ for $1\leq{}t\leq{}n$.
\end{definition}
We use the abbreviation of $\mb{x}_{t}(\eps) = \mb{x}_{t}(\eps_{1},\ldots,\eps_{t-1})$ for such a tree, where $\eps\in\pmo^{n}$.

\begin{lemma}
\label{lem:fat_shattering}
Let $\cF:\cX\to\brk{-1,1}$ be some function class. Suppose there exists a $\cX$-valued tree $\mb{x}$ of depth $D_{\gamma}$ such that
\begin{equation}
\label{eq:tree_shattered}
\forall{}\eps\in\pmo^{D_{\gamma}}\;\;\exists{}f\in\cF\;\;\;\textrm{s.t.}\;\;\; f(\mb{x}_t(\eps))\eps_{t}\geq{}\gamma.
\end{equation}
Then
\[
    \inf_{q_1,\ldots,q_n}\sup_{\substack{(x_1, y_1),\ldots, (x_n,y_n)\\\textnormal{separable with $\gamma$ margin}}}\En_{\yh_1\sim{}q_t,\ldots,\yh_n\sim{}q_n}\brk*{
    \sum_{t=1}^{n}\ind\crl*{\yh_t\neq{}y_t}
    } \geq{} \frac{1}{2}\min\crl*{D_{\gamma}, n},
\]
where the infimum and supremum above are understood to range over policies.

\end{lemma}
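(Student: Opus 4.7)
The plan is to prove this lower bound via a randomized-adversary argument in the style of Littlestone, then invoke Yao's minimax principle to transfer the bound to the worst-case adversary. Let $T = \min\{D_{\gamma}, n\}$. I would define the randomized adversary as follows: draw $\eps_{1},\ldots,\eps_{T}\in\pmo$ i.i.d.\ uniform, and for each $t\leq{}T$ present the example $(x_{t}, y_{t}) = (\mb{x}_{t}(\eps_{1:t-1}), \eps_{t})$; for $t>T$ (if any) present an arbitrary fixed example. By the tree-shattering hypothesis \pref{eq:tree_shattered}, for every realization of $\eps$ there exists $f_{\eps}\in\cF$ with $f_{\eps}(x_{t})y_{t} = f_{\eps}(\mb{x}_{t}(\eps))\eps_{t}\geq\gamma$ for all $t\leq T$, so the generated sequence is always $\gamma$-separable (for $t>T$ we can extend $f_{\eps}$ arbitrarily, or assume the learner gets a free pass on those rounds).

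Next I would lower bound the expected number of mistakes of any randomized learner against this adversary. Any strategy can be written as a sequence of distributions $q_{t}$ over $\pmo$ measurable with respect to the history $(x_{1},y_{1},\ldots,x_{t-1},y_{t-1},x_{t})$. The key observation is that under the adversary above, this history is a deterministic function of $\eps_{1:t-1}$ alone (since $x_{t}=\mb{x}_{t}(\eps_{1:t-1})$ and $y_{s}=\eps_{s}$ for $s<t$), while $y_{t}=\eps_{t}$ is independent of $\eps_{1:t-1}$ and uniform on $\pmo$. Therefore, conditionally on the history,
\[
\Pr\brk*{\yh_{t}\neq{}y_{t}} = \tfrac{1}{2}q_{t}(+1) + \tfrac{1}{2}q_{t}(-1) = \tfrac{1}{2}.
\]
Summing over $t\leq{}T$ gives $\En\brk[\big]{\sum_{t=1}^{n}\ind\{\yh_{t}\neq{}y_{t}\}} \geq T/2 = \tfrac{1}{2}\min\{D_{\gamma},n\}$.

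Finally I would apply Yao's principle to conclude. Since the expected loss of any fixed randomized learner against the random adversary is at least $\tfrac{1}{2}\min\{D_{\gamma},n\}$, there must exist at least one realization of $\eps$ (hence a deterministic $\gamma$-separable sequence $(x_{1},y_{1}),\ldots,(x_{n},y_{n})$) against which the learner makes that many mistakes in expectation over its internal randomness. Taking the supremum over separable sequences yields the claim.

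I don't anticipate any real obstacle: the only subtlety is confirming that the random adversary always produces a $\gamma$-separable sequence, which is precisely the content of the tree-shattering hypothesis, and ensuring $\nrm*{x_{t}}$ stays bounded (which is inherited from the range of $\mb{x}$ in the instantiation used by \pref{thm:margin_lb}). Everything else is the standard Rademacher/Littlestone mistake-bound argument.
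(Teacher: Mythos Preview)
Your proposal is correct and follows essentially the same argument as the paper: both use a randomized adversary that traverses the shattered tree via i.i.d.\ Rademacher signs, observe that $y_t=\eps_t$ is independent of the learner's information at time $t$ so each round contributes $1/2$ in expectation, and then pass to a worst-case deterministic sequence. The only place you are slightly looser than the paper is the handling of rounds $t>T$: you cannot ``extend $f_{\eps}$ arbitrarily'' since $f_{\eps}$ is a fixed element of $\cF$, and a ``free pass'' is not part of the formal statement; the paper simply repeats the last example $(x_{T},y_{T})$ for all $t>T$, which is already $\gamma$-separated by the same $f_{\eps}$, so the full sequence remains separable.
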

\begin{proof}[\pfref{lem:fat_shattering}]
Suppose that $n\leq{}D_{\gamma}$. We will sample Rademacher random variables $\eps\in\pmo^{n}$ and play $y_t=\eps_t$ and $x_t=\mb{x}_t(\eps_{1:t-1})$. This immediately implies that the expected number of mistakes is equal to $\frac{n}{2}$. Moreover, since $n\leq{}D_{\gamma}$, the assumption in the statement of the lemma implies that  there exists $f\in\cF$ such that $f(\mb{x}_t(\eps))y_t\geq{}\gamma$, so the data is indeed separable with $\gamma$ margin.

If $n>D_{\gamma}$ we can follow the strategy above, then continue to play $(x_{D_{\gamma}}, y_{D_{\gamma}})$ for all $t>D_{\gamma}$.
\end{proof}

\begin{proof}[\pfref{thm:margin_lb}]
By \pref{lem:fat_shattering} it suffices to exhibit a tree $\mb{x}$ for which \pref{eq:tree_shattered} is satisfied with $D_{\gamma}=\Omega(d\log(1/(\sqrt{d}\gamma)))$.

We first restate a well-known tree instance for the one-dimensional case. Consider a class of thresholds $\cF_{\textrm{thresh}}=\crl*{f_{\theta}:\brk*{0,1}\to\pmo}$ defined by $f_{\theta}(z)=1-2\ind\crl*{x<\theta}$. The claim is as follows: For any $\delta\in(0,1]$, there exists a $\brk*{0,1}$-valued tree $\mb{z}$ of depth $D_{\delta}\ldef\floor*{\log_{2}(2/\delta)}$ such that
\begin{enumerate}
\item 
$\forall{}\eps\in\pmo^{D_{\delta}}\;\;\exists{}\theta\;\;\;\textrm{s.t.}\;\;\; f_{\theta}(\mb{z}_t(\eps))\eps_{t}=1$.
\item $\abs*{\mb{z}_t(\eps) - \mb{z}_s(\eps)}\geq{}\delta\;\;\forall{}s\neq{}t$.
\end{enumerate}
The construction is as follows. Let $u_{1}=1$, $l_1=0$. Recursively for $t=1,\ldots,n$:
\begin{itemize}
\item $\mb{z}_{t}(\eps_{1:t-1})=\frac{l_t + u_t}{2}$. 
\item If $\eps_t=-1$ set $l_{t+1}=\mb{z}_t(\eps_{1:t-1})$ and $u_{t+1}=u_t$, else set $u_{t+1}=\mb{z}_t(\eps_{1:t-1})$ and $l_{t+1}=l_t$.
\end{itemize}
Under this construction the sequence $\mb{z}_1(\cdot),\ldots, \mb{z}_{D_{\delta}}(\eps_{1:D_{\delta}-1})$ can always be shattered. Furthermore $\mb{z}^{\star}(\eps)\ldef\mb{z}_{D_{\delta}+1}(\eps_{1:D_{\delta}})$ satisfies the additional property that $\mb{z}_t>\mb{z}^{\star}(\eps)\implies{}\eps_t=1$ and $\mb{z}_t<\mb{z}^{\star}(\eps)\implies\eps_t=-1$. Also, $\abs*{\mb{z}^{\star}-\mb{z}_t}\geq\frac{\delta}{2}\;\forall{}t\leq{}D_{\delta}$.

We now show how to extend this instance to $d+1$ dimensions for any $d\geq{}1$. The approach is to concatenate $d$ instances of the $\mb{z}$ tree constructed above, one for each of the first $d$ coordinates. The final coordinate is left as a constant so that a bias can be implemented.

Let $n=d\cdot{}D_{\delta}$ be the tree depth for our $d+1$-dimensional instance. For any time $t$, let $k\in\brk{d}$ and $\tau\in\brk{D_{\delta}}$ be such that $t=(k-1)D_{\delta} + \tau$. Let any sequence $\eps\in\pmo^{n}$ be partitioned as $(\mb{\eps}^{1},\ldots,\mb{\eps}^{d})$ with each $\mb{\eps}^{k}\in{}\pmo^{D_{\delta}}$. Letting $e_k$ denote the $k$th standard basis vector, we define a shattered tree $\mb{x}$ as follows:
\[
\mb{x}_{t}(\eps_{1:t-1}) = e_{d+1} + e_{k}\mb{z}_{\tau}(\mb{\eps}^{k}_{1:\tau-1}).
\]
We construct a vector $w\in\bbR^{d+1}$ whose sign correctly classifies each $\mb{x}_t$ as follows:
\begin{itemize}
\item $w_{d+1}=-\delta$.
\item $w_{k}=\delta/\mb{z}^{\star}(\mb{\eps}^{k})$.
\end{itemize}
For any $t=(k-1)D_{\delta} + \tau$ this choice gives
\[
\tri*{w, \mb{x}_{t}(\eps)}\eps_t = \delta(\mb{z}_{\tau}(\mb{\eps}^{k}_{1:\tau-1})/\mb{z}^{\star}(\mb{\eps}^{k})-1)\eps_t.
\]
As described above, $\mb{z}_t>\mb{z}^{\star}(\eps)\implies{}\eps_t=1$ and $\mb{z}_t<\mb{z}^{\star}(\eps)\implies\eps_t=-1$, which immediately implies that the inner product is always non-negative, and so the dataset is shattered. Using that $\abs*{\mb{z}^{\star}(\eps)-\mb{z}_t(\eps)}\geq{}\frac{\delta}{2}$ and that both numbers lie in $\brk*{0,1}$, we can lower bound the magnitude with which the shattering takes place:
\[
\abs*{\mb{z}_{\tau}(\eps^{k}_{1:\tau-1})/\mb{z}^{\star}(\mb{\eps}^{k})-1} = \frac{1}{\mb{z}^{\star}(\mb{\eps}^{k})}\abs*{\mb{z}_{\tau}(\eps^{k}_{1:\tau-1})-\mb{z}^{\star}(\mb{\eps}^{k})}\geq{}\frac{1}{\mb{z}^{\star}(\mb{\eps}^{k})}\frac{\delta}{2}\geq{}\frac{\delta}{4},
\]
and so the shattering takes place with margin at least $\delta^{2}/4$.

Lastly, the norm of $w$ is given by
\[
\nrm*{w}_{2}=\sqrt{\delta^{2} + \sum_{k=1}^{d}\prn*{\frac{\delta}{\mb{z}^{\star}(\mb{\eps}^{k})}}^{2}}
\leq{} \sqrt{\delta^{2} + 4d} \leq{} \sqrt{5d},
\]
where the first inequality uses that $\mb{z}^{\star}(\eps)\geq{}\delta/2$ and the second uses that $d\geq{}1$

Rescaling, we have that the vector $w/\nrm*{w}_{2}$ shatters the tree with margin at least $\frac{\delta^{2}}{4\sqrt{5d}}$. To rephrase the result as a function of a desired margin: For any margin $\gamma\in(0,\frac{1}{4\sqrt{5d}}]$, setting $\delta=\sqrt{\gamma4\sqrt{5d}}\leq{}1$, we have constructed a tree of depth $\floor*{\log_{2}(2/\sqrt{\gamma4\sqrt{5d}})}$ that can be shattered with margin $\gamma$.

\end{proof}


\subsection{OBAMA Algorithm and Proof of \pref{thm:bandit_multiclass}}

\begin{algorithm}[h]
\caption{}
\label{alg:bandit_multiclass}
\begin{algorithmic}[1]
\Procedure{OBAMA}{decision set $\cW$, smoothing parameter $\mu$.}
\State Let $\cA$ be \pref{alg:mixing_multiclass} initialized with $\cW$ and $\mu$.
\For{$t=1,\ldots,n$}
\State Obtain $x_t$, pass it to $\cA$ and let $\zh_t \in \R_K$ be the output of $\cA$.
\State Play $\yh_t \sim p_t := \mb{\sigma}(\zh_t)$ and obtain $\ind[\yh_t \neq y_t]$.
\State Define $\tilde{y}_t \in \R^K$ as $\tilde{y}_t(k) := \frac{\ind[k = \yh_t]\ind[\hy_t = y_t]}{p_t(\yh_t)}$ for $k \in [K]$ and pass it as feedback to $\cA$.
\EndFor
\EndProcedure
\end{algorithmic}
\end{algorithm}

\label{app:bandit_multiclass_proofs}
\begin{proof}[Proof of \pref{thm:bandit_multiclass}]
First, note that an easy calculation on the softmax function $\mb{\sigma}$ implies that for all $k \in [K]$, $p_t(k) \geq \frac{(1-\mu) \exp(-2BR) + \mu}{K}$. So, defining $L = \frac{K}{(1-\mu) \exp(-2BR) + \mu}$, we have $\|\tilde{y}_t\|_1 \leq L$. Thus, \pref{thm:multiclass_logistic_regret} applied to $\cA$ guarantees that for any $W \in \cW$, 
\[\sum_{t=1}^{n}\ls(\zh_t,\tilde{y}_t) - \sum_{t=1}^{n}\ls(Wx_t,\tilde{y}_t) \leq{} 5LdK\cdot{}\log\prn*{\tfrac{BRn}{dK} + e} + 2\mu{}\sum_{t=1}^n \|\tilde{y}_t\|_1.\]
Fix a round $t$ and let $\En_t[\cdot]$ denote expecation conditioned on $\yh_1, \yh_2, \ldots, \yh_{t-1}$. The construction of the feedback vectors $\tilde{y}_t$ via importance weighting guarantees $\En_t[\tilde{y}_t] = \mb{1}_{y_t}$, where $\mb{1}_k$ denotes the indicator vector supported on coordinate $k$. Hence, $\En_t[\ell(\zh_t, \tilde{y}_t)] = \ell(\zh_t, y_t) = -\log(p_t(y_t))$ and $\En_t[\ell(Wx_t, \tilde{y}_t)] = \ell(Wx_t, y_t)$. Furthermore, it is easy to check that $\En_t[\|\tilde{y}_t\|_1] = 1$. Thus, we conclude that
\[\sum_{t=1}^{n}\En[-\log(p_t(y_t))] - \sum_{t=1}^{n}\ls(Wx_t,y_t) \leq{} 5LdK\cdot{}\log\prn*{\tfrac{BRn}{dK} + e} + 2\mu{}n.\]

Now if we set $\mu = 0$, then the right-hand side is bounded by $O(dK^2\exp(2BR)\log\prn*{\tfrac{BRn}{dK} + e})$. If we set $\mu = \sqrt{\frac{dK^2\log\prn*{\tfrac{BRn}{dK} + e}}{n}}$, the right-hand side is bounded by $O\left(\sqrt{dK^2\log(\tfrac{BRn}{dK} + e)n}\right)$. Choosing the setting of $\mu$ that gives the smaller upper bound, and the fact that the log loss upper bounds the probability of making a mistake (because $-\log(p_t(y_t)) \geq 1 - p_t(y_t)$), we get the stated bound on the expected number of mistakes.
\end{proof}



\subsection{Pseudocode and Proofs from \pref{sec:online_boosting}}
\label{app:online_boosting}

\begin{algorithm}[h!]
\caption{AdaBoost.OLM++}
\label{alg:boosting_multiclass}
\begin{algorithmic}[1]
\Procedure{AdaBoost.OLM++}{weak learners $\wl^{1}, \ldots, \wl^{N}$}
\State{For all $i\in[N]$, set $v_{1}^{i}\gets{} 1$, initialize weak learner $\wl_{1}^{i}$, and initialize logistic learner $\logistic_{1}^{i}$
with $\cW = \crl*{ (\alpha{}I_{K\times{}K}, I_{K\times{}K}) \in \bbR^{K\times{}2K}\mid{} \alpha\in\brk*{-2,2}}$ and $\mu = 1/n$.}
\For{$t=1,\ldots,n$}
\State{Receive instance $x_{t}$.}
\State{$s_{t}^{0} \gets{} 0\in\bbR^{K}$.}
\For{$i=1,\ldots,N$}
\State{Compute cost matrix $C_{t}^{i}$ from $s_{t}^{i-1}$ using \pref{eq:boosting_cost_matrix}.}
\State{$l_{t}^{i}\gets{}\wl_{t}^{i}.\predict(x_{t}, C_{t}^{i})$.}
\State{$\wt{x}_{t}^{i}\gets{}(e_{l_{t}^{i}}, s_{t}^{i-1})\in\bbR^{2K}$.}
\State{$s_{t}^{i}\gets{}\logistic_{t}^{i}.\predict(\wt{x}_t^{i})$.}
\State{$\yh_{t}^{i}\gets{}\argmax_{k}s_{t}^{i}(k)$.}
\EndFor
\State{Sample $i_{t}$ with $\Pr(i_{t}=i)\propto{}v_{t}^{i}$. \label{line:sample}}
\State{Predict $\yh_{t}=\yh_{t}^{i_t}$ and receive true class $y_{t}\in\brk*{K}$.}
\For{$i=1,\ldots,N$}
\State{$\wl_{t+1}^{i}\gets{}\wl_{t}^{i}.\update(x_t, C_{t}^{i}, y_t)$.}
\State{$\logistic_{t+1}^{i}\gets{}\logistic_{t}^{i}.\update(\wt{x}_{t}^{i}, \mb{1}_{y_t})$.}
\State{$v_{t+1}^{i}\gets{}v_{t}^{i}\cdot\exp\prn*{-\ind\crl*{\yh_{t}^{i}\neq{}y_t}}$.\label{line:multiplicative_weights}}
\EndFor
\EndFor
\EndProcedure
\end{algorithmic}
\end{algorithm}

\begin{proof}[\pfref{thm:multiclass_boosting}] 
Denote the number of mistakes of the $i$-th expert (which is the combination of the first $i$ weak learners) by
\[
M_{i} = \sum_{t=1}^{n}\ind\crl*{\yh_{t}^{i}\neq{}y_t}=\sum_{t=1}^{n}\ind\crl*{\argmax_{k}s_{t}^{i}(k)\neq{}y_t},
\]
with the convention that $M_{0}=n$. The weights $v_{t}^{i}$ simply implement the multiplicative weights strategy, and so \pref{lem:multiplicative_weights_conc}, which gives a concentration bound based on Freedman's inequality implies that with probability at least $1-\delta$,\footnote{%
Note that previous online boosting works \citep{beygelzimer2015optimal,jung2017onlinemulticlass} use a simpler Hoeffding bound at this stage, which picks up an extra $\sqrt{n}$ term. For their results this is not a dominant term, but in our case it can spoil the improvement given by improper logistic regression, and so we use Freedman's inequality to remove it.
}
\begin{equation}
\label{eq:multiplicative_weights_bound}
\sum_{t=1}^{n}\ind\crl*{\yh_{t}\neq{}y_t} \leq{} 4\min_{i}M_{i} + 2\log(N/\delta).
\end{equation}

Note that if $k^{\star}\ldef\argmax_{k}s_{t}^{i-1}(k)\neq{}y_t$, then $\mb{\sigma}(s_{t}^{i-1})_{k^{\star}}\geq{}\mb{\sigma}(s_{t}^{i-1})_{y_t}$ and $\mb{\sigma}(s_{t}^{i-1})\in\Delta_{K}$ imply $\mb{\sigma}(s_{t}^{i-1})_{y_t}\leq{}1/2$, which then implies $\sum_{k\neq{}y_t}\mb{\sigma}(s_{t}^{i-1})_{k}\geq{}1/2$ and finally
\begin{equation}
\label{eq:cost_matrix_mistakes}
-\sum_{t=1}^{n}\hC_{t}^{i}(y_t, y_t) = \sum_{t=1}^n\sum_{k\neq{}y_t}\mb{\sigma}(s_{t}^{i-1})_{k} \geq{} \frac{M_{i-1}}{2}.
\end{equation}
This also holds for $i=1$ because $s_{t}^{0}=0$ and $-C_{t}^{1}(y_t, y_t)=(K-1)/K\geq{}1/2$.

We now examine the regret guarantee provided by each logistic regression instance. For each $i\in\brk{N}$ we have
\begin{equation*}
\sum_{t=1}^{n}\ls(s_{t}^{i},y_t) - \inf_{W\in\cW}\sum_{t=1}^{n}\ls(W\wt{x}^{i}_t,y_t) \leq{} O\prn*{\log\prn*{n\log(nK)}}
\end{equation*}
This follows from \pref{thm:multiclass_logistic_regret} using $L = 1$, $D_\cW = 1$, $B=3$ for $\ls_{1}$ norm, $\nrm*{y_t}_{1}=1$, $\mu=1/n$, and $\nrm*{\wt{x}_{t}^{i}}_{\infty} \leq \log(nK)$,
where the last fact is implied by the second statement of \pref{thm:multiclass_logistic_regret}: $\nrm*{s_{t}^{i}}_{\infty}\leq{}\log(K/\mu) = \log(nK)$ and thus $\nrm*{\wt{x}_{t}^{i}}_{\infty} = \nrm*{(e_{l_{t}^{i}}, s_{t}^{i-1})}_{\infty} \leq{} \log(nK)$. 
Now define the difference between the total loss of the $i$-th and $(i-1)$-th expert to be
\[
\Delta_{i} = \sum_{t=1}^{n}\ls(s_{t}^{i}, y_t) - \ls(s_{t}^{i-1}, y_t).
\]
Since $\inf_{W\in\cW}\sum_{t=1}^{n}\ls(W\wt{x}^{i}_t,y_t) = \inf_{\alpha\in\brk*{-2,2}}\sum_{t=1}^{n}\ls(\alpha{}e_{l_{t}^{i}} + s_{t}^{i-1},y_t)$, the regret bound above implies
\[
\Delta_{i} \leq{} \inf_{\alpha\in\brk*{-2,2}}\brk*{\sum_{t=1}^{n}\ls(\alpha{}e_{l_{t}^{i}} + s_{t}^{i-1},y_t) - \ls(s_{t}^{i-1}, y_t)} + O\prn*{\log\prn*{n\log(nK)}}.
\]
By \pref{lem:logistic_ub} each term in the sum above satisfies
\[
\ls(\alpha{}e_{l_{t}^{i}} + s_{t}^{i-1},y_t) - \ls(s_{t}^{i-1}, y_t)
\leq{} \left\{
\begin{array}{ll}
(e^{\alpha}-1)\mb{\sigma}(s_{t}^{i-1})_{l_{t}^{i}} = (e^{\alpha}-1)\hC_{t}^{i}(y_t, l_{t}^{i}),\quad &l_{t}^{i} \neq{}y_t,\\
(e^{-\alpha}-1)(1-\mb{\sigma}(s_{t}^{i-1})_{y_t}) = -(e^{-\alpha}-1)\hC_{t}^{i}(y_t, y_t),\quad{} &l_{t}^{i} =y_t.
\end{array}
\right.
\]
With notation $w^{i}=-\sum_{t=1}^{n}\hC_{t}^{i}(y_t, y_t)$, $c_{+}^{i}=-\frac{1}{w^i}\sum_{t:l_{t}^{i}=y_t}\hC_{t}^{i}(y_t, y_t)$, and $c_{-}^{i}=\frac{1}{w^i}\sum_{t:l_{t}^{i}\neq{}y_t}\hC_{t}^{i}(y_t, l_t^{i})$, we rewrite 
\[
\inf_{\alpha\in\brk*{-2,2}}\brk*{\sum_{t=1}^{n}\ls(\alpha{}e_{l_{t}^{i}} + s_{t}^{i-1},y_t) - \ls(s_{t}^{i-1}, y_t)} 
= w^{i}\cdot\inf_{\alpha\in\brk*{-2,2}}\brk*{(e^{\alpha}-1)c_{-}^{i} + (e^{-\alpha}-1)c_{+}^{i}}.
\]
One can verify that $w^{i}>0$,  $c_{-}^{i}, c_{+}^{i}\geq{}0$, $c_{+}^{i}-c_{-}^{i}=\gamma_{i}\in\brk*{-1,1}$ and $c_{+}^{i} + c_{-}^{i}\leq{}1$. 
By \pref{lem:logistic_inf}, it follows that
\[
w^{i}\cdot\inf_{\alpha\in\brk*{-2,2}}\brk*{(e^{-\alpha}-1)c_{-}^{i} + (e^{\alpha}-1)c_{+}^{i}}
\leq{} -\frac{w^{i}\gamma_{i}^{2}}{2}.
\]

Summing $\Delta_{i}$ over $i\in [N]$, we have
\begin{equation}
\label{eq:delta_sum}
\sum_{t=1}^{n}\ls(s_{t}^{N}, y_t) - \sum_{t=1}^{n}\ls(s_{t}^{0}, y_t)  = \sum_{i=1}^{N}\Delta_i
\leq{} -\frac{1}{2}\sum_{i=1}^{N}w^{i}\gamma_{i}^{2} + O(N\log(n\log(nK))).
\end{equation}
We lower bound the left hand side as
\[
\sum_{t=1}^{n}\ls(s_{t}^{N}, y_t) - \sum_{t=1}^{n}\ls(s_{t}^{0}, y_t)
\geq{} - \sum_{t=1}^{n}\ls(s_{t}^{0}, y_t) = -n\log(K),
\]
where the inequality uses non-negativity of the logistic loss and the equality is a direct calculation from $s_{t}^{0}=0$.
Next we upper bound the right-hand side of \pref{eq:delta_sum}. Since $w^{i}=-\sum_{t=1}^{n}\hC_{t}^{i}(y_t, y_t)$, Eq.~\pref{eq:cost_matrix_mistakes} implies
\begin{align*}
-\frac{1}{2}\sum_{i=1}^{N}w^{i}\gamma_{i}^{2} 
\leq  -\frac{1}{4}\sum_{i=1}^{N}M_{i-1}\gamma_{i}^{2} 
\leq{} -\min_{i\in\brk*{N}}M_{i-1}\cdot\frac{1}{4}\sum_{i=1}^{N}\gamma_{i}^{2} 
\leq{} -\min_{i\in\brk*{N}}M_{i}\cdot\frac{1}{4}\sum_{i=1}^{N}\gamma_{i}^{2}.
\end{align*}

Combining our upper and lower bounds on $\sum_{i=1}^{N}\Delta_{i}$ now gives
\begin{equation}
\label{eq:multiclass_boosting_combined}
-n\log(K) \leq{} -\frac{1}{2}\sum_{i=1}^{N}w^{i}\gamma_{i}^{2} + O(N\log(n\log(K)))\leq{} -\min_{i\in\brk*{N}}M_{i}\cdot\frac{1}{4}\sum_{i=1}^{N}\gamma_{i}^{2} + O(N\log(n\log(nK))).
\end{equation}
Rearranging, we have
\[
\min_{i\in\brk*{N}}M_{i} \leq{} O\prn*{\frac{n\log(K)}{\sum_{i=1}^{N}\gamma_{i}^{2}}} + O\prn*{\frac{N\log(n\log(nK))}{\sum_{i=1}^{N}\gamma_{i}^{2}}}.
\]
Returning to \pref{eq:multiplicative_weights_bound}, this implies that with probability at least $1-\delta$,
\[
\sum_{t=1}^{n}\ind\crl*{\yh_{t}\neq{}y_t} \leq{} O\prn*{\frac{n\log(K)}{\sum_{i=1}^{N}\gamma_{i}^{2}}} + O\prn*{\frac{N\log(n\log(nK))}{\sum_{i=1}^{N}\gamma_{i}^{2}}} + 2\log(N/\delta),
\]
which finishes the proof.
\end{proof}

\begin{proof}[\pfref{prop:weak_learning_edge}]
By the definition of the cost matrices, the weak learning condition
\[
\sum_{t=1}^{n}C_{t}^{i}(y_t, l_{t}^{i}) \leq{} \sum_{t=1}^{n}\En_{k\sim{}u_{\gamma, y_t}}\brk*{C_{t}^{i}(y_t, k)} + S
\]
implies
\[
\sum_{t=1}^{n}\hC_{t}^{i}(y_t, l_{t}^{i}) \leq{} \sum_{t=1}^{n}\En_{k\sim{}u_{\gamma, y_t}}\brk*{\hC_{t}^{i}(y_t, k)} + KS
\]
Expanding the definitions of $u_{\gamma, y_t}$ and $\hC_{t}^{i}$, we have
\[
\En_{k\sim{}u_{\gamma, y_t}}\brk*{\hC_{t}^{i}(y_t, k)} = \prn*{\frac{1-\gamma}{K}}\prn*{(\mb{\sigma}(s_{t}^{i-1})_{y_t}-1) + \sum_{k\neq{}y_t}\mb{\sigma}(s_{t}^{i-1})_{k}} + \gamma{}(\mb{\sigma}(s_{t}^{i-1})_{y_t}-1) = \gamma{}\hC_{t}^{i}(y_t, y_t).
\]
So we have
\[
\sum_{t=1}^{n}\hC_{t}^{i}(y_t, l_{t}^{i}) \leq{} \gamma\sum_{t=1}^{n}\hC_{t}^{i}(y_t, y_t) + KS, 
\]
or, since $\hC_{t}^{i}(y_t, y_t)<0$,
\[
\gamma_{i} \geq{} \gamma - \frac{KS}{w^{i}},
\]
where $w^{i}=-\sum_{t=1}^{n}C_{t}^{i}(y_t, y_t)$ as in the proof of \pref{thm:multiclass_boosting}. 
Since $a \geq b - c$ implies $a^2 \geq b^2 - 2bc$ for non-negative $a, b$ and $c$, 
we further have $\gamma_{i}^{2}\geq{}\gamma^{2}-2\frac{\gamma{}KS}{w^{i}}$.

Returning to the inequality~\pref{eq:multiclass_boosting_combined}, the bound we just proved implies
\begin{align*}
-n\log(K) &\leq{} -\frac{1}{2}\sum_{i=1}^{N}w^{i}\gamma^{2} + \gamma{}KSN + O(N\log(n\log(nK))) \\
&\leq{} -\frac{\gamma^2}{4}\sum_{i=1}^{N}M_{i-1}+ \gamma{}KSN + O(N\log(n\log(nK)))  \tag{by \pref{eq:cost_matrix_mistakes}}\\
&\leq{} -\min_{i\in\brk*{N}}M_{i} \cdot \frac{\gamma^{2}N}{4} + \gamma{}KSN + O(N\log(n\log(nK))).
\end{align*}
From here we proceed as in the proof of \pref{thm:multiclass_boosting} to get the result.
\end{proof}

\begin{lemma}[Freedman's Inequality \citep{beygelzimer2011contextual}]
\label{lem:freedman}
Let $(Z_t)_{t\leq{}n}$ be a real-valued martingale difference sequence adapted to a filtration $(\cJ_t)_{t\leq{}n}$ with $\abs{Z_t}\leq{}R$ almost surely. For any $\eta\in[0, 1/R]$, with probability at least $1-\delta$,
\begin{equation}
\label{eq:freedman}
\sum_{t=1}^{n}Z_t \leq{} \eta(e-2)\sum_{t=1}^{n}\En\brk*{Z_t^{2} \mid{} \cJ_t} + \frac{\log(1/\delta)}{\eta}
\end{equation}
for all $\eta\in\brk*{0, 1/R}$.
\end{lemma}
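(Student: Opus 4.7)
The plan is to give a standard proof of Freedman's inequality via the exponential supermartingale method, which is a classical technique going back to Freedman's original paper. The target inequality has the telltale form of a Bernstein-type concentration bound, and the constant $(e-2)$ in front of the conditional second moment is a dead giveaway that the proof will use the elementary bound $e^{x} \leq 1 + x + (e-2)x^{2}$ valid for all $x \leq 1$.

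First I would verify this scalar inequality. Defining $\varphi(x) = e^{x} - 1 - x - (e-2)x^{2}$, one checks $\varphi(1) = 0$ and $\varphi(-\infty) \to -\infty$ along with the appropriate convexity/monotonicity of $\varphi'$ to conclude $\varphi(x) \leq 0$ for $x \leq 1$. This is the one analytic estimate driving the entire argument; the rest is bookkeeping with conditional expectations.

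Next, for $\eta \in [0, 1/R]$ so that $|\eta Z_{t}| \leq 1$ almost surely, I would apply the scalar inequality pointwise to $\eta Z_{t}$ and take conditional expectations:
\[
\En\bigl[e^{\eta Z_{t}} \bigm| \cJ_{t}\bigr] \leq 1 + \eta\En[Z_{t}\mid\cJ_{t}] + (e-2)\eta^{2}\En[Z_{t}^{2}\mid\cJ_{t}] = 1 + (e-2)\eta^{2}\En[Z_{t}^{2}\mid\cJ_{t}] \leq \exp\bigl((e-2)\eta^{2}\En[Z_{t}^{2}\mid\cJ_{t}]\bigr),
\]
using the martingale property $\En[Z_{t}\mid\cJ_{t}] = 0$ and $1+u \leq e^{u}$. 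Then I would define
\[
M_{t} = \exp\Bigl(\eta\sum_{s=1}^{t}Z_{s} - (e-2)\eta^{2}\sum_{s=1}^{t}\En[Z_{s}^{2}\mid\cJ_{s}]\Bigr),
\]
and observe that $\En[M_{t}\mid\cJ_{t}] \leq M_{t-1}$ by the display above (the second-moment term is $\cJ_{t}$-measurable and so factors out). Hence $(M_{t})$ is a nonnegative supermartingale with $\En[M_{n}] \leq M_{0} = 1$.

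Finally I would apply Markov's inequality to $M_{n}$: $\Pr(M_{n} \geq 1/\delta) \leq \delta\En[M_{n}] \leq \delta$. Taking logarithms of the complementary event gives
\[
\eta\sum_{t=1}^{n}Z_{t} - (e-2)\eta^{2}\sum_{t=1}^{n}\En[Z_{t}^{2}\mid\cJ_{t}] \leq \log(1/\delta)
\]
with probability at least $1-\delta$, and dividing by $\eta$ yields exactly~\pref{eq:freedman}. There is no real obstacle here since the result is classical; the only subtle point to be careful about is ensuring $|\eta Z_{t}| \leq 1$ so that the scalar exponential bound applies with the $(e-2)$ constant, which is precisely why we restrict $\eta \in [0, 1/R]$.
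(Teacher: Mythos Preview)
Your proof is correct and is the standard exponential-supermartingale argument for Freedman's inequality. The paper does not actually prove this lemma; it is simply stated with a citation to \citet{beygelzimer2011contextual} and used as a black box, so there is no in-paper proof to compare against.
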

\begin{lemma}
\label{lem:multiplicative_weights_conc}
With probability at least $1-\delta$, the predictions $(\yh_t)_{t\leq{}n}$ generated by \pref{alg:boosting_multiclass} satisfy
\[
\sum_{t=1}^{n}\ind\crl*{\yh_{t}\neq{}y_t} \leq{} 4\min_{i}\sum_{t=1}^{n}\ind\crl*{\yh_{t}^{i}\neq{}y_t} + 2\log(N/\delta).
\]
\end{lemma}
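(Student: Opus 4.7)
\textbf{Plan for proof of \pref{lem:multiplicative_weights_conc}.} The statement is a Hedge-style mistake bound for the multiplicative weights update on Line~\ref*{line:multiplicative_weights}, but with a high-probability rather than in-expectation guarantee. The proof proceeds in two stages: first a deterministic potential-function argument bounds the conditional probability of mistake summed across rounds, and second Freedman's inequality (\pref{lem:freedman}) is used to convert this to a high-probability bound on the actual number of mistakes without incurring a spurious $\sqrt{n}$ term.

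\textbf{Stage 1 (Deterministic Hedge potential).} Let $\cJ_t$ denote the $\sigma$-algebra generated by everything available to the algorithm before $i_t$ is sampled in round $t$, so that the predictions $(\yh_t^i)_{i\in[N]}$ and true label $y_t$ are $\cJ_t$-measurable. Write $V_t = \sum_{i=1}^N v_t^i$ and $p_t(i) = v_t^i/V_t$, and set
\[
q_t = \En\brk*{\ind\crl*{\yh_t\neq y_t}\mid \cJ_t} = \sum_{i=1}^N p_t(i)\,\ind\crl*{\yh_t^i\neq y_t}.
\]
The update on Line~\ref*{line:multiplicative_weights} gives $V_{t+1} = V_t\brk{1-(1-e^{-1})q_t}$, so that $\log V_{t+1}-\log V_t \leq -(1-e^{-1})q_t$. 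Summing across $t$ and using $V_{n+1}\geq v_{n+1}^i = e^{-M_i}$ together with $V_1 = N$ (where $M_i = \sum_t \ind\{\yh_t^i\neq y_t\}$) yields
\[
\sum_{t=1}^n q_t \;\leq\; \frac{e}{e-1}\prn*{\min_{i}M_i + \log N}.
\]

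\textbf{Stage 2 (Freedman conversion).} Define the martingale difference sequence $Z_t = \ind\{\yh_t\neq y_t\}-q_t$ adapted to $(\cJ_t)$. Then $|Z_t|\leq 1$ and $\En\brk{Z_t^2\mid\cJ_t} = q_t(1-q_t)\leq q_t$. Applying \pref{lem:freedman} with a suitable constant $\eta\in(0,1]$ yields, with probability at least $1-\delta$,
\[
\sum_{t=1}^n \ind\crl*{\yh_t\neq y_t} \;\leq\; \sum_{t=1}^n q_t + \eta(e-2)\sum_{t=1}^n q_t + \frac{\log(1/\delta)}{\eta}.
\]
Combining with Stage 1, the variance proxy $\sum_t q_t$ is already bounded by a multiple of $\min_i M_i + \log N$, so the right-hand side simplifies to a bound of the form $c_1\min_i M_i + c_2\log(N/\delta)$. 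Choosing $\eta$ (e.g.\ $\eta=1$) and absorbing numerical constants into the values $4$ and $2$ used in the statement gives the claimed inequality.

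\textbf{Main obstacle.} The subtle step is using Freedman rather than Hoeffding at Stage 2. A Hoeffding-type concentration would yield an additive deviation of order $\sqrt{n\log(1/\delta)}$, which is harmless in earlier boosting analyses such as \citep{beygelzimer2015optimal,jung2017onlinemulticlass} but would overwhelm the $\log(n)$-type regret improvement obtained from improper logistic regression in \pref{thm:multiclass_boosting}. Using Freedman is crucial because the conditional variance is controlled by $q_t$ itself, so the deviation term can be absorbed into the Hedge potential estimate of $\sum_t q_t$ and contributes only an additive $\log(1/\delta)$ rather than $\sqrt{n\log(1/\delta)}$. Once this observation is in place the rest is bookkeeping to land on the stated constants.
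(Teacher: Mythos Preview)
Your proposal is correct and follows essentially the same two-stage argument as the paper: first the deterministic multiplicative-weights potential bound on $\sum_t q_t$ (the paper simply cites \cite{PLG} rather than writing out the telescoping), then Freedman's inequality with $\eta=1$ to absorb the variance term $\sum_t\En[Z_t^2\mid\cJ_t]\leq\sum_t q_t$ back into the Hedge bound. Your explanation of why Freedman is necessary over Hoeffding matches the paper's own footnote; the only nitpick is that your Stage~1 constant $\tfrac{e}{e-1}$ combined with Stage~2 yields a coefficient slightly larger than $2$ on the $\log(N/\delta)$ term, whereas the paper invokes the Hedge bound in the form $\sum_t q_t\leq 2\min_i M_i+\log N$ to land exactly on $4$ and $2$---but this is immaterial for the downstream $\tilde{O}$ result.
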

\begin{proof}
Define a filtration $(\cJ_{t})_{t\leq{}n}$ via 
\[
\cJ_{t}=\sigma((x_1, (l_{1}^i)_{i\leq{}N}, y_1, i_1),\ldots,(x_{t-1}, (l_{t-1}^i)_{i\leq{}N}, y_{t-1}, i_{t-1}), x_t, (l_{t}^i)_{i\leq{}N}).
\]

Since Line~\ref*{line:multiplicative_weights} of \pref{alg:boosting_multiclass} implements the multiplicative weights strategy with learning rate $1$, the standard analysis (e.g. \cite{PLG}) implies that the conditional expectations under this strategy enjoy a regret bound of
\[
\sum_{t=1}^{n}\En\brk*{\ind\crl*{\yh_{t}\neq{}y_t}\mid{}\cJ_{t}} \leq{} 2\min_{i}\sum_{t=1}^{n}\ind\crl*{\yh_{t}^{i}\neq{}y_t} + \log(N).
\]
Let $Z_{t}=\ind\crl*{\yh_{t}\neq{}y_t}-\En\brk*{\ind\crl*{\yh_{t}\neq{}y_t}\mid{}\cJ_{t}}$. \pref{lem:freedman} applied with $\eta=1$ shows that with probability at least $1-\delta$, 
\[
\sum_{t=1}^{n}Z_t \leq{} \sum_{t=1}^{n}\En\brk*{Z_t^{2} \mid{} \cJ_t} + \log(1/\delta).
\]
Since variance is bounded by second moment, we have
\[
\sum_{t=1}^{n}\En\brk*{Z_t^{2} \mid{} \cJ_t} \leq{} \sum_{t=1}^{n}\En\brk*{(\ind\crl*{\yh_{t}\neq{}y_t})^{2} \mid{} \cJ_t}
= \sum_{t=1}^{n}\En\brk*{\ind\crl*{\yh_{t}\neq{}y_t} \mid{} \cJ_t}.
\]
Rearranging, we have proved that with probability $1-\delta$,
\[
\sum_{t=1}^{n}\ind\crl*{\yh_{t}\neq{}y_t} \leq{}  2\sum_{t=1}^{n}\En\brk*{\ind\crl*{\yh_{t}\neq{}y_t} \mid{} \cJ_t} + \log(1/\delta) \leq{} 
4\min_{i}\sum_{t=1}^{n}\ind\crl*{\yh_{t}^{i}\neq{}y_t} + 2\log(N/\delta).
\]
\end{proof}

\begin{lemma}\label{lem:logistic_ub}
The multiclass logistic loss satisfies for any $z \in \bbR^K$ and $y\in [K]$,
\[
\ls(z + \alpha{}e_{l}, y) - \ls(z,y)
\leq{} \left\{
\begin{array}{ll}
(e^{\alpha}-1)\mb{\sigma}(z)_{l},\quad &l \neq{}y,\\
(e^{-\alpha}-1)(1-\mb{\sigma}(z)_{y}),\quad{} &l=y.
\end{array}
\right.
\]
\end{lemma}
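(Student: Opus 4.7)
The plan is to reduce both cases to the elementary inequality $\log(1+x)\le{}x$ after direct algebraic manipulation of the log-sum-exp form of $\ls$. Writing $\ls(z,y)=\log\prn*{\sum_{k}e^{z_k}} - z_y$ will make all the cancellations transparent.

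First I would handle the case $l\ne{}y$. Since $\alpha{}e_{l}$ does not affect the $-z_y$ term, the difference $\ls(z+\alpha{}e_l,y)-\ls(z,y)$ reduces to a ratio of partition functions, and pulling the unchanged sum out gives
\[
\ls(z+\alpha{}e_l,y)-\ls(z,y) = \log\prn*{\frac{\sum_{k\ne{}l}e^{z_k} + e^{\alpha}e^{z_l}}{\sum_{k}e^{z_k}}} = \log\prn*{1 + (e^{\alpha}-1)\mb{\sigma}(z)_{l}},
\]
and applying $\log(1+x)\le{}x$ delivers the first bound.

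Next I would handle $l=y$, where now both the log-partition and the linear term $-z_y$ change. The same manipulation yields
\[
\ls(z+\alpha{}e_y,y)-\ls(z,y) = \log\prn*{1 + (e^{\alpha}-1)\mb{\sigma}(z)_{y}} - \alpha.
\]
The key trick is to absorb the $-\alpha$ into the logarithm by dividing the argument by $e^{\alpha}$; this converts the expression into $\log\prn*{e^{-\alpha} + (1-e^{-\alpha})\mb{\sigma}(z)_y}$, which rewrites as $\log\prn*{1 + (e^{-\alpha}-1)(1-\mb{\sigma}(z)_y)}$. A second application of $\log(1+x)\le{}x$ gives the second bound.

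The only nontrivial step is the algebraic rewriting in the $l=y$ case; everything else is bookkeeping. No probabilistic or convexity argument is needed beyond $\log(1+x)\le{}x$, which is valid for all $x>-1$ (and the arguments above always satisfy this, since $1+(e^{\alpha}-1)\mb{\sigma}(z)_l>0$ and $1+(e^{-\alpha}-1)(1-\mb{\sigma}(z)_y)>0$ hold for arbitrary $\alpha\in\bbR$).
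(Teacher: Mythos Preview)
Your proposal is correct and follows essentially the same route as the paper: both reduce each case to the form $\log(1+x)$ and apply $\log(1+x)\le x$. The only cosmetic difference is that the paper works with the representation $\ls(z,y)=\log\prn*{1+\sum_{k\ne y}e^{z_k-z_y}}$, so in the $l=y$ case it arrives at $\log\prn*{1+(e^{-\alpha}-1)(1-\mb{\sigma}(z)_y)}$ directly rather than via your absorb-$\alpha$-into-the-log step.
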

\begin{proof}
When $l\neq{}y$ we have
\begin{align*}
\ls(z + \alpha{}e_{l}, y) - \ls(z,y) &= \log\prn*{
\frac{1+\sum_{k\neq{}y, l}e^{z_k - z_y} + e^{z_l + \alpha - z_y} }
{1+\sum_{k\neq{}y}e^{z_k - z_y}}
} \\
&= \log\prn*{
1 + (e^{\alpha}-1)\frac{e^{z_l-z_y}}{1+\sum_{k\neq{}y}e^{z_k-z_y}}
} \\
&= \log\prn*{
1 + (e^{\alpha}-1)\mb{\sigma}(z)_{l}
} \\
&\leq{} (e^{\alpha}-1)\mb{\sigma}(z)_{l} \tag{$\log(1+x)\leq{}x$}.
\end{align*}

When $l=y$ we have
\begin{align*}
\ls(z + \alpha{}e_{l}, y) - \ls(z,y) &= \log\prn*{
\frac{1+e^{-\alpha}\sum_{k\neq{}y}e^{z_k - z_y}}
{1+\sum_{k\neq{}y}e^{z_k - z_y}}
} \\
&= \log\prn*{
1+(e^{-\alpha}-1)\frac{\sum_{k\neq{}y}e^{z_k - z_y}}{1 + \sum_{k\neq{}y}e^{z_k - z_y}}
} \\
&= \log\prn*{
1 + (e^{-\alpha}-1)\sum_{k\neq{}y}\mb{\sigma}(z)_{k}
} \\
&= \log\prn*{
1 + (e^{-\alpha}-1)(1-\mb{\sigma}(z)_{y})
} \\
&\leq{} (e^{-\alpha}-1)(1-\mb{\sigma}(z)_{y}). \tag{$\log(1+x)\leq{}x$}
\end{align*}
\end{proof}

\begin{lemma}[\cite{jung2017onlinemulticlass}]
\label{lem:logistic_inf}
For any $A,B\geq{}0$ with $A-B\in\brk*{-1, +1}$ and $A+B\leq{}1$,
\[
\inf_{\alpha\in\brk*{-2,2}}\brk*{A(e^{\alpha}-1) + B(e^{-\alpha}-1)} \leq{} -\frac{(A-B)^{2}}{2}.
\]
\end{lemma}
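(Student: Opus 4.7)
The strategy is to split into two cases based on whether the unconstrained minimizer of $f(\alpha) \ldef A(e^{\alpha}-1) + B(e^{-\alpha}-1)$ lies in the feasible interval $[-2,2]$. Since $f$ is strictly convex, setting $f'(\alpha) = Ae^{\alpha} - Be^{-\alpha} = 0$ yields the unique stationary point $\alpha^\star = \tfrac12 \log(B/A)$ (with the conventions $\alpha^\star = -\infty$ if $B=0$ and $\alpha^\star = +\infty$ if $A=0$). By the symmetry $A \leftrightarrow B$, $\alpha \to -\alpha$, I may assume without loss of generality that $B \le A$, so $\alpha^\star \le 0$. The degenerate case $A=B=0$ is trivial.

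Case 1 ($\alpha^\star \in [-2,0]$). The unconstrained optimum lies in $[-2,2]$, so by a direct substitution the constrained infimum equals $f(\alpha^\star) = 2\sqrt{AB} - A - B = -(\sqrt{A}-\sqrt{B})^2$. It then suffices to show $(\sqrt{A}-\sqrt{B})^2 \ge (A-B)^2/2$. Factoring $(A-B)^2 = (\sqrt{A}-\sqrt{B})^2(\sqrt{A}+\sqrt{B})^2$, this reduces to $(\sqrt{A}+\sqrt{B})^2 \le 2$, which follows from
\[
(\sqrt{A}+\sqrt{B})^2 \;=\; (A+B) + 2\sqrt{AB} \;\le\; 2(A+B) \;\le\; 2,
\]
where the first inequality is AM--GM and the second uses the hypothesis $A+B \le 1$. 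This case is the clean algebraic heart of the lemma and captures equality.

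Case 2 ($\alpha^\star < -2$, equivalently $A > e^{4}B$). Since $f$ is convex with minimum at $\alpha^\star$, it is strictly increasing on $[-2,2]$, so the constrained infimum is attained at $\alpha=-2$, giving $f(-2) = -A(1-e^{-2}) + B(e^{2}-1)$. Applying the bound $B < A e^{-4}$ collapses this to
\[
f(-2) \;\le\; A\bigl(-1 + 2e^{-2} - e^{-4}\bigr) \;<\; -\tfrac{3}{4} A.
\]
On the other hand, since $0 \le A - B \le A \le 1$, we have $(A-B)^2 \le A-B \le A$, so $-(A-B)^2/2 \ge -A/2 \ge -3A/4 \ge f(-2)$, as desired.

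The main obstacle is purely bookkeeping in Case 2: the combinatorial symmetry argument and convexity pin down the maximizer location, but one has to track numerical constants coming from $e^{\pm 2}, e^{\pm 4}$ carefully enough to beat the quadratic $(A-B)^2/2$. The key observation that makes this painless is that the regime $\alpha^\star \notin [-2,2]$ forces $B$ to be exponentially smaller than $A$, so the crude estimates $(A-B)^2 \le A$ and $f(-2) \lesssim -A$ both scale with the same quantity $A$ and comparison is immediate.
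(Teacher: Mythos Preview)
The paper does not prove this lemma itself; it merely cites \cite{jung2017onlinemulticlass}. So there is no in-paper argument to compare your approach to, and your case split on whether the unconstrained minimizer lies in $[-2,2]$ is a perfectly natural way to handle it.

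Case~1 is clean and correct. In Case~2 there is a small numerical slip: you assert $f(-2) \le A(-1+2e^{-2}-e^{-4}) < -\tfrac{3}{4}A$, but $-1+2e^{-2}-e^{-4} = -(1-e^{-2})^2 \approx -0.7476$, which is \emph{larger} than $-3/4$, so the displayed strict inequality is false. Fortunately the argument does not actually need $-\tfrac34 A$: since $(A-B)^2 \le A-B \le A$ (using $0\le A-B\le 1$) you only need $f(-2) \le -A/2$, and $-(1-e^{-2})^2 < -1/2$ holds comfortably. Replacing the constant $\tfrac34$ by, say, $0.7$ (or just writing $-(1-e^{-2})^2$ directly) repairs the chain without changing anything else.
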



\subsection{Proof from \pref{sec:online_to_batch}}
\label{app:o2b_proof}

\begin{theorem}
\label{thm:o2b_high_prob}
Let $\cF$ be a class of functions $f: \cX \rightarrow \Delta_K$. Suppose there is an online multiclass learning algorithm over $\cF$ using the log loss that for any data sequence $(x_t, y_t) \in \cX \times [K]$ for $t = 1, 2, \ldots, n$ produces distributions $p_t \in \Delta_K$ such that the following regret bound holds:
\[ \sum_{t=1}^n \logloss(p_t, y_t) - \inf_{f \in \cF}\sum_{t=1}^n \logloss(f(x_t), y_t) \leq R(n).\]
Here $R(n)$ is some function of $n$ and other relevant problem dependent parameters. Then for any given $\delta > 0$ and any (unknown) distribution $\cD$ over $\cX \times [K]$, it is possible to construct a predictor $g: \cX \rightarrow \Delta_K$ using $n$ samples $\{(x_t, y_t)\}_{t=1}^n$ drawn from $\cD$ such that with probability at least $1-\delta$, the excess risk of $g$ is bounded as
\[ \En_{(x, y)}[\logloss(g(x), y)] \leq \inf_{f \in \cF} \En_{(x, y)}[\logloss(f(x), y)] + O\prn*{\frac{\log\prn*{\frac{1}{\delta}}R\prn*{\frac{n}{\log(1/\delta)}} + \log(Kn)\log\prn*{\frac{\log(n)}{\delta}}}{n}}.
\]
\end{theorem}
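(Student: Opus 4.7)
The plan is to adapt the ``boosting the confidence'' scheme of \cite{mehta2016fast}, which chunks the data and amplifies a constant-probability guarantee via independence, combined with a validation-based selection step that exploits the curvature of the log loss. First I would modify the given online algorithm so that each prediction $p_t$ is smoothed to $\tilde p_t = (1-\mu)p_t + \mu\mathbf{1}/K$ with $\mu = 1/(Kn)$; arguing as in \pref{lem:logistic_bounded}, this costs only $O(1)$ additive regret and guarantees $\logloss(\tilde p_t, y) \leq \log(Kn)$ for every $y \in [K]$. The regret bound then still reads $R(n) + O(1)$ in terms of the clipped log loss.

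Next, partition the $n$ samples into $M = \lceil\log_2(2/\delta)\rceil$ disjoint training chunks of size $m = n/(2M)$ and a held-out validation set $V$ of size $n/2$. On each training chunk $k \in [M]$, run the smoothed algorithm and return the Ces\`aro average $g_k = \frac{1}{m}\sum_{t=1}^m \tilde p_t^{(k)} \in \Delta_K$. Since $-\log$ is convex (equivalently, log loss is $1$-exp-concave in its first argument), Jensen's inequality combined with the regret bound and standard online-to-batch conversion in expectation yields
\[
\En\brk*{\logloss(g_k(x), y)} - \inf_{f \in \cF}\En\brk*{\logloss(f(x), y)} \leq \frac{R(m) + O(1)}{m}.
\]
By Markov's inequality and independence across chunks, with probability at least $1 - 2^{-M} \geq 1 - \delta/2$ there exists an index $k^\star$ such that $g_{k^\star}$ has excess risk at most $4R(m)/m$. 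Conditioning on the training chunks, the $g_k$ are fixed with respect to $V$, so the validation sample is a fresh i.i.d. batch.

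The core of the proof is the selection step: define $\hat k \in \arg\min_k \hat R_V(g_k)$ and output $g = g_{\hat k}$. A Hoeffding bound here loses a $\sqrt{n}$ factor, so I would instead invoke a Bernstein-type variance inequality specific to log loss: for any clipped predictor $g$ with coordinates bounded below by $\mu = 1/(Kn)$,
\[
\mathrm{Var}\brk*{\logloss(g(x), y) - \logloss(f^\star(x), y)} \leq C\log(Kn)\cdot (R(g) - R(f^\star)),
\]
where $f^\star \in \arg\min_{f \in \cF}\En[\logloss(f(x), y)]$. This follows from $1$-mixability of the log loss combined with the $O(\log(Kn))$ bound on the loss range. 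Applying Bernstein in both directions to $\logloss(g_k) - \logloss(f^\star)$ and union-bounding over the $M$ candidates at confidence $\delta/(2M)$ gives, for every $k$,
\[
\abs*{\prn*{R(g_k) - R(f^\star)} - \prn*{\hat R_V(g_k) - \hat R_V(f^\star)}} \leq O\prn*{\sqrt{\tfrac{\log(Kn)(R(g_k)-R(f^\star))\log(M/\delta)}{n}} + \tfrac{\log(Kn)\log(M/\delta)}{n}}.
\]
Chaining this inequality through $\hat R_V(g_{\hat k}) \leq \hat R_V(g_{k^\star})$ and solving the resulting self-bounding quadratic yields $R(g_{\hat k}) - R(f^\star) \leq O\prn*{R(g_{k^\star}) - R(f^\star) + \log(Kn)\log(M/\delta)/n}$. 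Substituting the Markov bound on $R(g_{k^\star}) - R(f^\star)$ together with $M = \Theta(\log(1/\delta))$ and $m = \Theta(n/\log(1/\delta))$ produces the claimed rate.

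The main obstacle is the Bernstein-type variance inequality for clipped log loss: establishing that the variance of the excess loss is linearly controlled by the excess risk with a constant scaling only as $\log(Kn)$ rather than polynomially in $1/\mu$. Because $f^\star$ is an arbitrary element of $\cF$ and not the Bayes predictor, the proof cannot rely on well-specification; instead one exploits mixability to control the log-moment generating function of $\logloss(g) - \logloss(f^\star)$ and then Taylor-expands around the mean to recover the variance bound. A secondary subtlety is making sure the union bound over candidates, the Markov amplification, and the failure probabilities in Bernstein combine cleanly to yield the advertised $\log(\log(n)/\delta)$ factor; this is handled by splitting the failure budget as $\delta/2$ for amplification and $\delta/(2M)$ per candidate in the validation step.
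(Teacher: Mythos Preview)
Your chunking-and-amplification step matches the paper: split into $M=\Theta(\log(1/\delta))$ disjoint training blocks, run the (smoothed) online algorithm on each, and use Markov plus independence to guarantee that with probability $1-\delta/2$ some candidate has excess risk $O(R(m)/m)$. The departure is in how the validation half is used, and there your argument has a real gap.

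The Bernstein-type inequality you rely on,
\[
\mathrm{Var}\brk*{\logloss(g(x),y)-\logloss(f^\star(x),y)}\leq C\log(Kn)\,\prn*{R(g)-R(f^\star)},
\]
is false when $f^\star$ is only the risk minimizer in $\cF$ rather than the Bayes predictor. Take $K=2$, a single $x$, true conditional $(1/2,1/2)$, $f^\star=(0.7,0.3)$, and $g=(0.3,0.7)$ (a perfectly legitimate smoothed Ces\`aro average). By symmetry $R(g)=R(f^\star)$, so the right-hand side vanishes, yet $\logloss(g,y)-\logloss(f^\star,y)=\pm\log(7/3)$ has strictly positive variance. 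The identity $\En_{y}\brk*{e^{-(\logloss(g,y)-\logloss(f^\star,y))}}=\En_{y}\brk*{g_y/f^\star_y}=1$ that underlies this kind of Bernstein condition holds only when $f^\star$ is Bayes, so the moment-generating-function argument you sketch cannot recover it in the misspecified case. Since the candidates $g_k$ are improper and $\cF$ is an arbitrary class, there is no convex set containing the $g_k$ whose risk minimizer is $f^\star$; empirical-risk selection on the hold-out set then only yields a slow $\sqrt{\log M/n}$-type term, which would destroy the stated rate.

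The paper sidesteps this by \emph{aggregating} rather than selecting. It recasts the validation stage as an online portfolio problem over $\Delta_M$: for each held-out $(x,y)$ the loss $q\mapsto -\log\bigl(\sum_i q_i\,\tilde h_i(x)_y\bigr)$ is $1$-exp-concave in $q$, so running EWOO on the remaining $n/2$ samples gives $O(M\log n)$ regret against every vertex $e_i$, in particular $e_{i^\star}$. Corollary~2 of \cite{mehta2016fast} then converts this online regret into a high-probability excess-risk bound for the output $g=\En_{i\sim\bar q}[\tilde h_i]$. The crucial point is that the Bernstein-type behavior needed inside Mehta's result is available because the comparator is now the risk minimizer over the full simplex $\Delta_M$, a convex set on which the loss is exp-concave---not an arbitrary $f^\star\in\cF$.
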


\begin{proof}[Proof of \pref{thm:o2b_high_prob}]
Recall that the standard online-to-batch conversion \citep{helmboldwarmuth} produces an (improper) predictor using $n$ data samples by running the online algorithm on those samples and stopping at a random time. Then predictor is online algorithm with its the internal state frozen. This predictor has excess risk bounded by the average regret over $n$ rounds, in expectation over the $n$ data samples.

The algorithm to generate the predictor $g$ with the specified excess risk bound in the theorem statement is given below:
\begin{enumerate}
\item Let $M=\lceil\log(2/\delta)\rceil$. Produce $M$ predictors $h_{1},\ldots,h_{M}: \cX \rightarrow \Delta_K$ by using the online-to-batch conversion on the online multiclass learning algorithm run using $M$ disjoint sets of $n/2M$ samples each. Call the $i$th such set of samples $S_{i}$
\item For $i \in [M]$, define $\tilde{h}_{i}: \cX \rightarrow \Delta_K$ as $\tilde{h}_i(x) = \smooth\prn*{h_i(x)}$ for $\mu = \frac{R(n/M)}{2n/M}$.
\item Construct an online convex optimization instance as follows. The learner's decision set is $\Delta_M$, the set of all distributions on $[M]$. For every data point $(x, y) \in \cX \times [K]$, associate the loss function $\ls_{(x, y)}: \Delta_M \rightarrow \R$ defined as $\ls_{(x, y)}(q) = -\log(\En_{i \sim q}[(\tilde{h}_{i}(x))_y])$. These loss functions are $1$-exp-concave, so run the EWOO algorithm \citep{hazan2007logarithmic} using the remaining $n/2$ examples sequentially to generate loss functions. Let $\bar{q}$ be the average of all the distributions in $\Delta_M$ generated by EWOO. Define $g := \En_{i \sim \bar{q}}[\tilde{h}_i]$.
\end{enumerate}

We now proceed to analyse the excess risk of $g$. First, using the regret bound for the online multiclass learning algorithm, and in-expecation bound on the excess risk for online-to-batch conversion, for every $i \in [M]$, we have 
\[ \En_{S_i}\brk*{\En_{(x, y)}[\ls_\text{log}(h_i(x), y)]} \leq \inf_{f \in \cF} \En_{(x, y)}[\ls_\text{log}(f(x), y)] + \frac{R(n/M)}{n/M}.\]
For any $p \in \Delta_K$, if $\tilde{p} = \smooth(p)$, then for any $y \in [K]$ we have $-\log(\tilde{p}_y) + \log(p_y) = \log(\frac{p_y}{(1-\mu)p_y + \mu/K}) \leq 2\mu$. So for every $i \in [M]$, we have
\[ \En_{S_i}\brk*{\En_{(x, y)}[\ls_\text{log}(\tilde{h}_i(x), y)]} \leq \En_{S_i}\brk*{\En_{(x, y)}[\ls_\text{log}(h_i(x), y)]} + 2\mu.\]
Putting the above two bounds together, using the specified value of $\mu$ and an application of Markov's inequality, with probability at least $1 - e^{-M} = 1 - \frac{\delta}{2}$, there exists some $i^\star \in [M]$ such that
\begin{equation} \label{eq:markov}
	\En_{(x, y)}[\ls_\text{log}(\tilde{h}_{i^\star}(x), y)] \leq \inf_{f \in \cF} \En_{(x, y)}[\ls_\text{log}(f(x), y)] + \frac{2eR(n/M)}{n/M}.
\end{equation}

The EWOO algorithm in step 3 of the procedure enjoys a regret bound of $O(M\log(n))$ (the online convex	optimization problem is an instance of online portfolio selection over $M$ instruments, see \citep{hazan2007logarithmic}).  Furthermore, the application of $\smooth$ makes the range for the log loss be bounded by $\log(K/\mu)$. Thus, by Corollary 2 of \cite{mehta2016fast}, with probability at least $1-\frac{\delta}{2}$,
\begin{align}
\En_{(x, y)}[\ls_{\text{log}}(g(x), y)] &= \En_{(x, y)}[-\log(\En_{i \sim \bar{q}}[(\tilde{h}_i(x))_y])] \notag\\ 
&\leq \En_{(x, y)}[-\log((\tilde{h}_{i^\star}(x))_y)] + O\prn*{\frac{M\log(n) + \log(K/\mu)\log(\log(n)/\delta)}{n}} \label{eq:ewoo-bound}
\end{align}
Note that $\ls_\text{log}(\tilde{h}_{i^\star}(x), y) = -\log((\tilde{h}_{i^\star}(x))_y)$. Applying the union bound and combining inequalities \eqref{eq:markov} and \eqref{eq:ewoo-bound} with some simplification of the bounds using the value of $M$, with probability at least $1-\delta$ we have
\begin{align*}
\En_{(x, y)}[\ls_{\text{log}}(g(x), y)] &\leq \inf_{f \in \cF} \En_{(x, y)}[\ls_\text{log}(f(x), y)] + O\prn*{\frac{\log\prn*{\frac{1}{\delta}}R\prn*{\frac{n}{\log(1/\delta)}} + \log(Kn)\log\prn*{\frac{\log(n)}{\delta}}}{n}}.
\end{align*}
\end{proof}


\subsection{Details from \pref{sec:general_class}}
\label{app:general_class}

For this section we let $\ls$ denote the unweighted multiclass logistic loss: the multiclass logistic loss defined in \pref{sec:prelims} for the special case where $\cY=\crl*{e_i}_{i\in\brk{K}}$.
Before proving \pref{thm:logistic_minimax} we need a few preliminaries. First, we state a version of the Aggregating Algorithm with the logistic loss for finite classes.
\begin{lemma}
  \label{lem:aggregating_finite}
  Let $\cF$ be any finite class of sequences of the form $f=(f_{t})_{t\leq{}n}$ with $f_t\in\bbR^{K}$, where each $f_t$ is available at time $t$ and may depend on $y_{1:t-1}$. Define a strategy
  \begin{enumerate}
  \item $P_t(f)\propto\exp\prn*{-\sum_{s=1}^{t-1}\ls(f_s, y_s)}$ (so $P_{1}=\mathrm{Uniform}(\cF)$).
  \item $\hat{z}_t = \mb{\sigma}^{+}(\mathrm{smooth}_{\frac{1}{n}}(\En_{f\sim{}P_t}\brk*{\mb{\sigma}(f_t)}))$.
  \end{enumerate}
  This strategy enjoys a regret bound of
  \begin{equation}
    \label{eq:logistic_finite}
    \sum_{t=1}^{n}\ls(\hat{z}_t, y_t) - \min_{f\in\cF}\sum_{t=1}^{n}\ls(f_t, y_t) \leq{} \log\abs*{\cF} + 2.
  \end{equation}
  Furthermore, the predictions satisfy $\nrm*{\hat{z}_t}_{\infty}\leq{}\log(Kn)$.
\end{lemma}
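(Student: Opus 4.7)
The plan is to combine the standard Aggregating Algorithm potential-function analysis with the smoothing reduction already established in \pref{lem:logistic_bounded}, invoking the $1$-mixability proved in \pref{prop:unweighted-mixability}. Concretely, I will first analyze the unsmoothed variant $\tilde{z}_t = \mb{\sigma}^{+}(\En_{f\sim P_t}[\mb{\sigma}(f_t)])$, then pass to the smoothed prediction $\hat{z}_t$ via \pref{lem:logistic_bounded}.

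For the unsmoothed step I would introduce the partition function $Z_t = \sum_{f\in\cF}\exp\bigl(-\sum_{s=1}^{t}\ls(f_s,y_s)\bigr)$, with $Z_0=\abs*{\cF}$, so that $P_t$ is the Gibbs distribution associated with $Z_{t-1}$. By construction,
\[
\En_{f\sim P_t}\brk*{\exp(-\ls(f_t,y_t))} = \frac{Z_t}{Z_{t-1}}.
\]
The $1$-mixability of the unweighted logistic loss (\pref{prop:unweighted-mixability}) says exactly that $\tilde{z}_t$ certifies
$
\exp(-\ls(\tilde{z}_t,y_t)) \geq \En_{f\sim P_t}\brk*{\exp(-\ls(f_t,y_t))},
$
and taking logs and telescoping then gives $\sum_{t=1}^{n}\ls(\tilde{z}_t,y_t)\leq -\log(Z_n/Z_0) = \log\abs*{\cF} - \log Z_n$. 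Since $Z_n\geq \exp\bigl(-\sum_{t=1}^{n}\ls(f_t,y_t)\bigr)$ for every $f\in\cF$, lower-bounding by the minimizer yields a clean regret bound of $\log\abs*{\cF}$ against $\min_{f\in\cF}\sum_{t=1}^{n}\ls(f_t,y_t)$. The only subtle point here is that $f_t$ may depend on $y_{1:t-1}$, but this is harmless: at round $t$ the entire sequence $y_{1:t-1}$ is known, so each $f\in\cF$ has a well-defined $f_t$ and the Gibbs-update identity $\En_{f\sim P_t}[\exp(-\ls(f_t,y_t))]=Z_t/Z_{t-1}$ still goes through verbatim.

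To pass from $\tilde{z}_t$ to the smoothed prediction $\hat{z}_t = \mb{\sigma}^{+}(\smooth_{1/n}(\En_{f\sim P_t}[\mb{\sigma}(f_t)]))$, I apply \pref{lem:logistic_bounded} with $\mu=1/n$. The extra regret incurred by smoothing is $2\mu\sum_{t=1}^{n}\nrm*{y_t}_{1}$, and since we are in the unweighted setting ($\cY=\crl*{e_i}_{i\in\brk{K}}$) each $\nrm*{y_t}_1=1$, so this contributes exactly $2\mu n = 2$, producing the claimed $\log\abs*{\cF}+2$ bound. The boundedness statement $\nrm*{\hat{z}_t}_\infty\leq \log(Kn)$ is the second conclusion of \pref{lem:logistic_bounded}: coordinatewise the smoothed distribution is at least $\mu/K = 1/(Kn)$, so $\abs*{\mb{\sigma}^{+}(p)_k}=\abs*{\log p_k}\leq\log(Kn)$.

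I do not foresee a serious obstacle: the two ingredients (mixability for exact aggregation and the smoothing bound) are already established in the paper, so the work is really just assembling them together with the telescoping identity for $Z_t$. The only thing to handle carefully is the adaptive-$f_t$ dependence on $y_{1:t-1}$, which is why I emphasize that the Gibbs identity for $Z_t/Z_{t-1}$ continues to hold at each round after the history is revealed.
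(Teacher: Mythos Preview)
Your proposal is correct and follows essentially the same approach as the paper: analyze the unsmoothed aggregation $\tilde{z}_t$ via $1$-mixability (\pref{prop:unweighted-mixability}) to get regret $\log\abs*{\cF}$, then apply \pref{lem:logistic_bounded} with $\mu=1/n$ to pass to $\hat{z}_t$ and pick up the additive $2$ and the $\log(Kn)$ bound. The only cosmetic difference is that you spell out the partition-function telescoping explicitly, whereas the paper simply cites the standard Aggregating Algorithm bound from \citep{PLG}.
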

\begin{proof}[\pfref{lem:aggregating_finite}]
  First consider the closely related strategy $\wt{z}_t \ldef \mb{\sigma}^{+}(\En_{f\sim{}P_t}\brk*{\mb{\sigma}(f(x_t))})$. In light of the $1$-mixability for the logistic loss proven in \pref{prop:unweighted-mixability}, $\wt{z}_t$ is precisely the finite class version of the Aggregating Algorithm, which guarantees \citep{PLG}:
  \[
    \sum_{t=1}^{n}\ls(\wt{z}_t, y_t) - \min_{f\in\cF}\sum_{t=1}^{n}\ls(f_t, y_t) \leq{} \log\abs*{\cF}.
  \]
  To establish the final result we simply appeal to \pref{lem:logistic_bounded}, using that $\mb{\sigma}(\mb{\sigma}^{+}(p)) = p\;\forall{}p\in\Delta_{K}$.
  \end{proof}


We now formally define a multiclass generalization of a sequential cover.

\begin{definition}
For any set $\cZ$, a $\mathcal{Z}$-valued $K$-ary tree of depth $n$ is a sequence $\mathbf{z} = (\mathbf{z}_1,\ldots,\mathbf{z}_n)$ of $n$ mappings with $\mathbf{z}_t: [K]^{t-1} \to \mathcal{Z}$. 
\end{definition}
\begin{definition}\label{def:cover}
A set $V$  of $\mathbb{R}^K$-valued $K$-ary trees is an $\alpha$-cover (w.r.t. the $L_p$ norm) of $\F$ on an $\X$-valued $K$-ary tree $\x$ of depth $n$ with loss $\ls$ if
$$
\forall f \in \F, ~y \in [K]^n,  ~\exists \vv \in V ~\textnormal{s.t.}~ \left(\frac{1}{n} \sum_{t=1}^n \max_{y'_t\in\brk*{K}}\left|\ell(f(\x_t(y)),y'_t) - \ell(\vv_t(y),y'_t) \right|^p\right)^{1/p} \le \alpha.
$$ 
\end{definition}
\begin{definition}
The $L_p$ covering number of $\F$ on tree $\x$ is defined as
$$
\mathcal{N}_p(\alpha,\ell \circ \F,\x) \ldef \min\{|V| : V \textrm{ is an }\alpha\textnormal{-cover of $\cF$ on $\x$ w.r.t. the $L_p$ norm} \}.
$$
Further, define $\mathcal{N}_p(\alpha,\ell \circ \F) = \sup_\x \mathcal{N}_p(\alpha,\ell \circ \F,\x)$. 
\end{definition}
  
We also need a slight generalization of the notion of covering number defined in \pref{def:cover} for intermediate results.
  
  \begin{definition}\label{def:cover_general}
  Let $U$ be a collection of $\bbR^{K}$-valued $K$-ary trees. A set $V$  of $\mathbb{R}^K$-valued $K$-ary trees is an $\alpha$-cover with respect to the $L_p$ norm for $U$ if
\[
\forall \uu\in{}U, ~y \in [K]^n,  ~\exists \vv \in V ~\textnormal{s.t.}~ \left(\frac{1}{n} \sum_{t=1}^n \max_{y'_t\in\brk*{K}}\left|\ell(\uu_t(y),y'_t) - \ell(\vv_t(y),y'_t) \right|^p\right)^{1/p} \le \alpha.
\]
\end{definition}
\begin{definition}
\label{def:covering_number_general}
The $L_p$ covering number for a collection of trees $U$ with loss $\ls$ is
\[
\mathcal{N}_p(\alpha, \ls\circ{}U) \ldef \min\{|V| : V \textrm{ is an }\alpha\textnormal{-cover of $U$ w.r.t. the } L_p\textnormal{ norm} \}.
\]
\end{definition}

  \begin{proof}[\pfref{thm:logistic_minimax}]
    Define a subset of the output space:
  \[
    \cZ \ldef{} \crl*{z\in\bbR^{K}\mid{} \nrm*{z}_{\infty} \leq{} \log(Kn)}.
  \]
  
We move to an upper bound on the minimax value by restricting predictions to $\cZ$:
\begin{align*}
  \mathcal{V}_n(\F) & = \dtri*{\sup_{x_t\in\cX} \inf_{\hat{z}_t\in\bbR^{K}} \max_{y_t \in [K]}}_{t=1}^n\left[ \sum_{t=1}^n \ell(\hat{z}_t,y_t) - \inf_{f \in \mathcal{F}} \sum_{t=1}^n \ell(f(x_t),y_t)\right]\\
                    & \leq{} \dtri*{\sup_{x_t\in\cX} \inf_{\hat{z}_t\in\cZ} \max_{y_t \in [K]}}_{t=1}^n\left[ \sum_{t=1}^n \ell(\hat{z}_t,y_t) - \inf_{f \in \mathcal{F}} \sum_{t=1}^n \ell(f(x_t),y_t)\right].
\end{align*}
Note that $\cZ$ is a compact subset of a separable metric space and that $\ls$ is convex with respect to $\hat{z}$. Therefore, using repeated application of minimax theorem following \cite{RakSriTew10}\footnote{See \cite{RakSriTew10} for an extensive discussion of the technicalities.} the minimax value can be written as:
\begin{align}
  &= \dtri*{\sup_{x_t\in\cX} \sup_{p_t \in \Delta_{K}} \inf_{\hat{z}_t\in\cZ} \mathbb{E}_{y_t \sim p_t}}_{t=1}^n\left[ \sum_{t=1}^n \ell(\hat{z}_t,y_t) - \inf_{f \in \mathcal{F}} \sum_{t=1}^n \ell(f(x_t),y_t)\right].\notag
    \intertext{Now we perform a standard manipulation of the $\sup$ and loss terms as in \cite{RakSriTew10}:}
&= \dtri*{\sup_{x_t\in\cX} \sup_{p_t \in \Delta_{K}}  \mathbb{E}_{y_t \sim p_t}}_{t=1}^n\left[ \sum_{t=1}^n \inf_{\hat{z}_t\in\cZ} \mathbb{E}_{y_t \sim p_t}\left[\ell(\hat{z}_t,y_t)\right] - \inf_{f \in \mathcal{F}} \sum_{t=1}^n \ell(f(x_t),y_t)\right]\label{eq:inf_inside}\\
&= \sup_{\x, \p} \mathbb{E}_{y \sim \p}\left[ \sum_{t=1}^n \inf_{\hat{z}_t\in\cZ} \mathbb{E}_{y_t \sim \p_t(y)}\left[\ell(\hat{z}_t,y_t)\right] - \inf_{f \in \mathcal{F}} \sum_{t=1}^n \ell(f(\x_t(y)),y_t)\right]\label{eq:logistic_tree}.
\end{align}
In the final line above we have introduced new notation. $\x$ and $\p$ are $\X$- and $\Delta_K$-valued $K$-ary trees of depth $n$. That is, $\x = (\x_1,\ldots,\x_n)$ where $\x_t : [K]^{t-1} \to \X$ and similarly for the tree $\p=(\p_1,\ldots,\p_n)$,  $\p_{t}:\brk*{K}^{t-1}\to\Delta_{K}$. The notation ``$y \sim \p$'' refers to the process in which we first draw $y_1 \sim \p_1$, then draw $y_t \sim \p_t(y_1,\ldots,y_{t-1})$ for subsequent timesteps $t$. We also overload the notation as $\p_{t}(y)\ldef\p_{t}(y_{1:t-1})$, and likewise for $\x$.

With this notation, \pref{eq:logistic_tree} is seen to be \pref{eq:inf_inside} rewritten using that at time $t$, based on draw of previous $y$s, $x_t$ and $p_t$ are chosen to maximize the remaining game value; this process be represented via $K$-ary tree.

Note that the sequence $(\hat{z}_{t})_{t\leq{}n}$ being minimized over in \pref{eq:inf_inside} can depend on the full trees $\x$ and $\p$, but that it is adapted to the path $(y_t)_{t\leq{}n}$, meaning that the value at time $t$ ($\hat{z}_{t}$) can only depend on the $\yr[t-1]$. This property is imporant because the choice we exhibit for $(\hat{z}_{t})_{t\leq{}n}$ will indeed depend on the full trees.

In light of the discussion in \pref{sec:general_class}, the key advantage of having moved to the dual game above is that we can condition on the $K$-ary tree $\x$ and cover $\F$ only on this tree. Let $V^\gamma$ be a minimal $\gamma$-sequential cover of $\ell \circ \mathcal{F}$ on the tree $\mathbf{x}$ with respect to the $L_2$ norm (in the sense of \pref{def:cover}). 

Keeping the tree $\x$ fixed, for each tree $\vv \in V^\gamma$, each $f \in \F$, we define a class of trees $\cF_{\vv}$ ``centered'' at $\vv$---in a sense that will be made precise in a moment---via the following procedure.

\begin{itemize}[leftmargin=*]
\item $\cF_{\vv} = \emptyset$.
\item For each $f\in\cF$ and $y\in\brk*{K}^{n}$ with $\sqrt{\frac{1}{n} \sum_{t=1}^n \max_{y''_t\in\brk*{K}}(\ell(f(\x_t(y)),y''_t) - \ell(\vv_t(y),y''_t) )^2} \le \gamma$: 
\begin{itemize}
\item Define a $\bbR^{K}$-valued $K$-ary tree $\uu_{f,y}$ via: For each $y'\in\brk*{K}^{n}$, 
\[
(\uu_{f,y})_{t}(y') \ldef f(\x_{t}(y'))\ind\crl*{y'_1=y_1,\ldots,y'_{t-1}=y_{t-1}} + \vv_{t}(y')\ind\crl*{\neg{}(y'_1=y_1,\ldots,y'_{t-1}=y_{t-1})}.
\]
In other words, $\uu_{f,y}$ is equal to $f\circ\x$ on the path $y$, and equal to $\vv$ everywhere else.
\item Add $\uu_{f,y}$ to $\cF_{\vv}$.
\end{itemize}
\end{itemize}

The class $\cF_{\vv}$ has two important properties which are formally proven in an auxiliary lemma, \pref{lem:fv_properties}: First, its $L_2$ covering number is (up to low order terms) bounded in terms of the $L_2$ covering number of the class $\cF\circ{}\x$, so it has similar complexity to this class. Second, its $L_2$ radius is bounded by $\gamma$, in the sense that its covering number at scale $\gamma$ is at most $1$.

Note that on any path $y \in [K]^n$ and for each $f \in \F$, there exist $\vv \in V^\gamma$ and $\uu \in \F_\vv$ such that $f(\x_t(y)) = \uu_{t}(y)$. This is because a $\vv$ that is $\gamma$-close to $f$ on the path $y$ through $\x$ is guaranteed by the cover property of $V^{\gamma}$, and so we can take $\uu_{f,y}$ in $\cF_{\vv}$ as the desired $\uu$. This implies that
$$
\inf_{f \in \F }\sum_{t=1}^n \ell(f(\x_t(y)),y_t) \ge \min_{\vv \in V^\gamma} \inf_{\uu \in \F_\vv }\sum_{t=1}^n \ell(\uu_t(y),y_t).
$$

With this we are ready to return to the minimax rate. We already established that
\begin{align}
 \mathcal{V}_n(\F)  &\leq \sup_{\x, \p} \mathbb{E}_{y \sim \p}\left[ \sum_{t=1}^n \inf_{\hat{z}_t\in\cZ} \mathbb{E}_{y_t \sim \p_t(y)}\left[\ell(\hat{z}_t,y_t)\right] - \inf_{f \in \mathcal{F}} \sum_{t=1}^n \ell(f(\x_t(y)),y_t)\right]. \notag 
 \intertext{We now move to an upper bound based on the constructions for the tree collections $V^{\gamma}$ and $\crl*{\cF_{\vv}}_{\vv\in{}V^{\gamma}}$. These collections depend only on the tree $\x$ at the outer supremum above. Writing the choice of these collections as an infimum to make its dependence on the other quantities in the random process as explicit as possible, and using the containment just shown:}
  & \le \sup_{\x}\inf_{V^{\gamma}}\inf_{\crl*{\cF_{\vv}}_{\vv\in{}V^{\gamma}}}\sup_{\p}\mathbb{E}_{y \sim \p}\left[ \sum_{t=1}^n \inf_{\hat{z}_t\in\cZ} \mathbb{E}_{y_t \sim \p_t(y)}\left[\ell(\hat{z}_t,y_t)\right] - \min_{\vv \in V^\gamma} \inf_{\uu \in \mathcal{F}_\vv} \sum_{t=1}^n \ell(\uu_t(y),y_t)\right]. \notag
\end{align}
For the last time in the proof, we introduce a new collection of trees. For each $\vv\in{}V^{\gamma}$ we introduce a $\cZ$-valued $K$-ary tree $\yhtree^{\vv}$, with $\yhtree^{\vv}_{t}:\brk*{K}^{t-1}\to\cZ$. We postpone explicitly constructing the trees for now, but the reader may think of each tree $\yhtree^{\vv}$ as representing the optimal strategy for the set $\cF_{\vv}$ in a sense that will be made precise in a moment.
{\small
\begin{align}
  & \begin{aligned}= \sup_{\x}\inf_{V^{\gamma}}\inf_{\crl*{\cF_{\vv}}_{\vv\in{}V^{\gamma}}}\inf_{\crl*{\yhtree^{\vv}}_{\vv\in{}V^{\gamma}}}\sup_{\p} \mathbb{E}_{y \sim \p}\biggl[& \sum_{t=1}^n \inf_{\hat{z}_t\in\cZ} \mathbb{E}_{y_t \sim \p_t(y)}\left[\ell(\hat{z}_t,y_t)\right] \\
    &- \min_{\vv \in V^\gamma}\left\{ \sum_{t=1}^n \ell(\yhtree_t^\vv(y),y_t) - \sum_{t=1}^n \ell(\yhtree_t^\vv(y),y_t) +  \inf_{\uu \in \mathcal{F}_\vv} \sum_{t=1}^n \ell(\uu_t(y),y_t)\right\}\biggr]\end{aligned} \notag \\
& \leq\sup_{\x}\inf_{V^{\gamma}}\inf_{\crl*{\cF_{\vv}}_{\vv\in{}V^{\gamma}}}\inf_{\crl*{\yhtree^{\vv}}_{\vv\in{}V^{\gamma}}}\left\{\begin{aligned}~& \underbrace{\sup_{\p} \mathbb{E}_{y \sim \p}\biggl[ \sum_{t=1}^n \inf_{\hat{z}_t\in\cZ} \mathbb{E}_{y_t \sim \p_t(y)}\left[\ell(\hat{z}_t,y_t)\right] - \min_{\vv \in V^\gamma}\sum_{t=1}^n \ell(\yhtree_t^\vv(y),y_t)\biggr]}_{(\star)} \\
  & + \underbrace{\sup_{\p}\mathbb{E}_{y \sim \p}\left[ \max_{\vv \in V^\gamma}\left\{ \sum_{t=1}^n \ell(\yhtree_t^\vv(y),y_t) -  \inf_{\uu \in \mathcal{F}_\vv} \sum_{t=1}^n \ell(\uu_t(y),y_t)\right\}\right]}_{(\star\star)}
  \end{aligned}\right\}.\label{eq:interval}
\end{align}}

We now bound terms $(\star)$ and $(\star\star)$ individually by instantiating specific choices for $(\hat{z}_t)_{t\leq{}n}$ and $\crl*{\yhtree^{\vv}}$.

\paragraph{Term $(\star)$}
We select $(\hat{z}_t)_{t\leq{}n}$ using the Aggregating Algorithm as configured in \pref{lem:aggregating_finite}, taking $\cF$ to be the finite collection of sequences $\crl*{\yhtree^{\vv}}_{\vv\in{}V^{\gamma}}$. Since each tree has the property that $\yhtree^{\vv}_{t}$ only depends on $y_{1:t-1}$, \pref{lem:aggregating_finite} indeed applies, which means that for any sequence $\yr[n]\in\brk*{K}^{n}$ of labels the algorithm deterministically satisfies the regret inequality
\[
\sum_{t=1}^n \ell(\hat{z}_t,y_t) - \min_{\vv \in V^\gamma}\sum_{t=1}^n \ell(\yhtree_t^\vv(y),y_t) \leq{} \log\abs*{V^{\gamma}} + 2.
\]
Since the algorithm guarantees $\nrm*{\hat{z}_{t}}_{\infty}\leq{}\log(Kn)$, one can verify that $\hat{z}_{t}\in\cZ$. Furthermore, $\hat{z}_{t}$ depends only on $\yr[t-1]$, and so the predictions of the Aggregating Algorithm are a valid choice for the infimum in $(\star)$. This implies that
\[
(\star) \leq{} \sup_{\x}\log\abs*{V^{\gamma}} + 2 \leq \log\cN_{2}(\gamma, \ls\circ\cF) + 2,
\]
since the regret inequality holds for every possible draw of $\yr[n]$ in the expression $(\star)$.

\paragraph{Term $(\star\star)$}

First, observe that each tree class $\cF_{\vv}$ is uniformly bounded in the sense that \[\sup_{\uu\in\cF_{\vv}}\sup_{y\in\brk*{K}^{n}}\max_{t\in\brk*{n}}\nrm*{\uu_{t}(y)}_{\infty}<\infty.\] This holds because $\uu_{t}(y)$ is either equal to $\vv_{t}(y)$, which is finite, or is equal to $f(\x_{t}(y))$ for some $f\in\cF$, and the class $\cF$ was already assumed to be uniformly bounded.

To bound this term we need a variant of the sequential Rademacher complexity regret bound of \citep{RakSriTew10}, which shows that there exists a deterministic strategy for competing against any collection of trees. This is proven in the auxiliary \pref{lem:rademacher_strategy} following this proof.

In particular, for each tree class $\cF_{\vv}$, there exists a deterministic strategy $\hat{y}_{t}^{\vv}$ that guarantees the inequality
\[
  \sum_{t=1}^n \ell(\hat{y}^{\vv}_{t},y_t) -  \inf_{\uu \in \mathcal{F}_\vv} \sum_{t=1}^n \ell(\uu_t(y),y_t) \leq
  2\cdot\max_{\y,\y'}\Enn_{\eps}\sup_{\uu\in{}\cF_{\vv}}\left[ \sum_{t=1}^n \eps_{t}\ell(\uu_{t}(\y_{1:t-1}(\eps)),\y'_t(\eps))\right] + 2,
 \]
 holds for every sequence, where the supremum on the right-hand-side ranges over $\brk*{K}$-valued binary trees. Futhermore, $\yh_{t}^{\vv}$ is guaranteed by \pref{lem:rademacher_strategy} to lie in the class $\cZ$. We choose this strategy for the collection $\crl*{\yhtree^{\vv}}$ being minimized over in \pref{eq:interval}. Since the regret inequality from \pref{lem:rademacher_strategy} holds deterministically for all sequences $y$ for each $\vv$, we have that
 \[
   (\star\star) \leq{} 2\cdot\max_{\vv\in{}V^{\gamma}}\max_{\y,\y'}\Enn_{\eps}\sup_{\uu\in{}\cF_{\vv}}\left[ \sum_{t=1}^n \eps_{t}\ell(\uu_{t}(\y_{1:t-1}(\eps)),\y'_t(\eps))\right] + 2.
 \]
 For each choice of $\vv$, $\y$, $\y'$ at the outer supremum, we define a class of real-valued trees $W_{\vv, \y, \y'}$  via  $\crl*{(\ww_{t})_{t\leq{}n}\;:\;\ww_{t}(\eps) \ldef \ls(\uu_{t}(\y(\eps_{1:t-1})), \y'_{t}(\eps)) \mid{} \uu\in\cF_{\vv}}$. \pref{lem:chaining_trees} then implies
 \[
   (\star\star) \leq{} 2\max_{\vv\in{}V^{\gamma}}\max_{\y,\y'}\inf_{\alpha>0}\crl*{
      4\alpha{}n + 12\int_{\alpha}^{\mathrm{rad}_{2}(W_{\vv,\y,\y'})}\sqrt{n\log\cN_{2}(\delta, W_{\vv,\y,\y'})}d\delta
    } + 2,
 \]
 with the real-valued covering number $\cN_{2}$ and radius $\mathrm{rad}_{2}$ defined as in \pref{lem:chaining_trees}.

 We now show how to bound this covering number in terms of the covering number for $\cF_{\vv}$. Suppose that $Z$ is a collection of $\bbR^{K}$-valued $K$-ary trees that form a $\delta$-cover for $\cF_{\vv}$ in the sense of \pref{def:cover_general}. Then we have
 \begin{align*}
   &\sup_{\uu\in\cF_{\vv}}\max_{\eps\in\pmo^{n}}\inf_{\zz\in{}Z}\sqrt{\frac{1}{n}\sum_{t=1}^{n}\prn*{\ls(\uu_t(\y(\eps)),\y'_t(\eps)) - \ls(\zz_{t}(\y(\eps)),\y'_t(\eps))}^{2}} \\
   &\leq{} \sup_{\uu\in\cF_{\vv}}\max_{\eps\in\pmo^{n}}\inf_{\zz\in{}Z}\sqrt{\frac{1}{n}\sum_{t=1}^{n}\max_{y'_{t}\in\brk{K}}\prn*{\ls(\uu_t(\y(\eps)),y'_t) - \ls(\zz_t(\y (\eps)),y'_t)}^{2}} \\
   &\leq{} \sup_{\uu\in\cF_{\vv}}\max_{y\in\brk*{K}^{n}}\inf_{\zz\in{}Z}\sqrt{\frac{1}{n}\sum_{t=1}^{n}\max_{y'_{t}\in\brk{K}}\prn*{\ls(\uu_t(y),y'_t) - \ls(\zz_t(y),y'_t)}^{2}}\\
   &\leq{} \delta.
 \end{align*}
 This implies that for any cover of $\cF_{\vv}$ in the sense of \pref{def:cover_general} we can construct a cover for $W_{\vv,\y,\y'}$ at the same scale using the construction $\crl*{(\ww_{t})_{t\leq{}n}\;:\;\ww_{t}(\eps) \ldef \ls(\zz_{t}(\y(\eps_{1:t-1})), \y'_{t}(\eps)) \mid{} \zz\in{}Z}$. Consequently, we have
  \[
    (\star\star) \leq{} 2\max_{\vv\in{}V^{\gamma}}\inf_{\alpha>0}\crl*{
      4\alpha{}n + 12\int_{\alpha}^{\mathrm{rad}_{2}(\cF_{\vv})}\sqrt{n\log\cN_{2}(\delta, \ls\circ\cF_{\vv})}d\delta
    } + 2.
  \]
  In light of \pref{lem:fv_properties}, this is further upper bounded by
  \begin{align*}
    (\star\star) &\leq{} 2\inf_{\alpha>0}\crl*{
      4\alpha{}n + 12\int_{\alpha}^{\gamma}\sqrt{n\log\prn*{\cN_{2}(\delta, \ls\circ\cF, \x)n}}d\delta
                   } + 2 \\
    &\leq{} 2\inf_{\alpha>0}\crl*{
      4\alpha{}n + 12\int_{\alpha}^{\gamma}\sqrt{n\log\prn*{\cN_{2}(\delta, \ls\circ\cF)n}}d\delta
    } + 2.    
  \end{align*}

\paragraph{Final bound}
Combining $(\star)$ and $(\star\star)$, we have
\[
  \mathcal{V}_n(\F)
  \leq{} \log\cN_{2}(\gamma, \ls\circ\cF) + \inf_{\gamma\geq{}\alpha>0}\crl*{
      8\alpha{}n + 24\int_{\alpha}^{\gamma}\sqrt{n\log\prn*{\cN_{2}(\delta, \ls\circ\cF)n}}d\delta
    } + 4.
\]
for any fixed $\gamma$. Optimizing over $\gamma$ yields the result.
\end{proof}

\begin{lemma}
\label{lem:fv_properties}
Let $\cF_{\vv}$ be defined as in the proof of \pref{thm:logistic_minimax} for trees $\vv$ and $\x$ and scale $\gamma$. Then it holds that
\begin{enumerate}
\item $\mathcal{N}_2(\gamma,\ell \circ \cF_{\vv} ) \leq{} 1$.
\item $\mathcal{N}_2(\alpha,\ell \circ \cF_{\vv} ) \leq{} n\cdot{}\mathcal{N}_2(\alpha,\ell \circ \cF, \x )$ for all $\alpha>0$.
\end{enumerate}

\end{lemma}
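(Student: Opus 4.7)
For part 1, I plan to show that the singleton collection $\crl*{\vv}$ itself is a valid $\gamma$-cover of $\cF_{\vv}$. The key structural observation is that each $\uu_{f,y}$ was constructed to agree with $\vv$ at every node off the principal path $y$, so any disagreement between $\uu_{f,y}$ and $\vv$ along an evaluation path $y'$ can only occur on the shared prefix of $y'$ and $y$. Concretely, letting $\tau := \max\crl*{k : y'_{1:k} = y_{1:k}}$: for $t \leq \tau+1$ the indicator in the definition of $\uu_{f,y}$ is satisfied and the common prefix forces $\uu_{f,y,t}(y') = f(\x_t(y))$ and $\vv_t(y') = \vv_t(y)$; for $t > \tau+1$ we have $\uu_{f,y,t}(y') = \vv_t(y')$ exactly. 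The resulting $L_2$ deviation along $y'$ is bounded term-by-term by the path-$y$ deviation used to include $\uu_{f,y}$ in $\cF_{\vv}$, which is at most $\gamma$ by construction.

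For part 2, I plan to construct an explicit cover of size $n\cdot\cN_2(\alpha,\ls\circ\cF,\x)$. Let $Z$ be a minimal $\alpha$-cover of $\cF$ on the tree $\x$. For each $z\in Z$ and each cutoff $t^{\star}\in\crl*{1,\ldots,n}$, I will define a ``hybrid'' tree $\ww^{z,t^\star}$ by $\ww^{z,t^\star}_{t}(y) \ldef z_t(y)\ind\crl*{t\leq t^\star} + \vv_t(y)\ind\crl*{t > t^\star}$, and take the family $\crl*{\ww^{z,t^\star}}_{z\in Z,\, 1\leq t^\star\leq n}$ as the cover. For any $\uu_{f,y}\in\cF_{\vv}$ and evaluation path $y'$, writing $\tau$ for the shared prefix length as above, I will choose $t^\star = \min(\tau+1,n)$ and pick $z\in Z$ to be the cover element witnessing the inequality $\sqrt{\frac{1}{n}\sum_t\max_{y''_t}(\ls(f(\x_t(y)),y''_t) - \ls(z_t(y),y''_t))^2}\leq\alpha$, which exists by the defining property of $Z$ applied along the path $y$ through $\x$.

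The verification then proceeds by a case split along $t$. For $t\leq t^{\star}$, the shared-prefix assumption yields $\uu_{f,y,t}(y') = f(\x_t(y))$ and $\ww^{z,t^\star}_t(y') = z_t(y)$, so each per-coordinate deviation matches exactly a term appearing in the $Z$-cover inequality for path $y$. For $t>t^{\star}$, both trees coincide at $\vv_t(y')$, contributing zero. Summing and taking square roots directly recovers the $\alpha$ bound. I expect the main obstacle to be purely combinatorial bookkeeping around the prefix structure: one must confirm that the cutoff $t^\star = \tau+1$ aligns the ``switch from $z$ to $\vv$'' in $\ww^{z,t^\star}$ with the ``switch from $f\circ\x$ to $\vv$'' inside $\uu_{f,y}$ as observed along $y'$, and that capping $t^\star$ at $n$ (rather than allowing $n+1$) still handles the edge case $y'=y$, since $\ww^{z,n}$ already equals $z$ on all $n$ time steps.
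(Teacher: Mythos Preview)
Your proposal is correct and follows essentially the same approach as the paper: show $\{\vv\}$ is a $\gamma$-cover via the shared-prefix argument for Part 1, then build the hybrid trees $\ww^{z,t^\star}$ (the paper writes them as $\zz^{(\tau)}$ with $\tau\in\{2,\ldots,n+1\}$) for Part 2. The only inconsequential difference is that in Part 2 you select the base cover element $z$ along the principal path $y$ whereas the paper selects it along the evaluation path $y'$; both choices work because the shared prefix forces $\x_t(y')=\x_t(y)$ and $z_t(y')=z_t(y)$ on the relevant range $t\leq t^\star$.
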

\begin{proof}[\pfref{lem:fv_properties}]~\\
\textbf{First claim}~~~~
This is essentially by construction. Recall that each element of $\cF_v$ is of the form
\[
(\uu_{f,y})_{t}(y') \ldef f(\x_{t}(y'))\ind\crl*{y'_1=y_1,\ldots,y'_{t-1}=y_{t-1}} + \vv_{t}(y')\ind\crl*{\neg{}(y'_1=y_1,\ldots,y'_{t-1}=y_{t-1})}.
\]
for some path $y\in\brk*{K}^n$ and $f\in\cF$ for which
\begin{equation}
\label{eq:radius_gamma}
\sqrt{\frac{1}{n} \sum_{t=1}^n \max_{y''_t\in\brk*{K}}(\ell(f(\x_t(y)),y''_t) - \ell(\vv_t(y),y''_t) )^2} \le \gamma.
\end{equation}
These properties imply that $\crl*{\vv}$ is a sequential $\gamma$-cover. Indeed, using the explicit form for $\mb{u}_{f,y}$ above, it can be seen that for each path $y'\in\brk*{K}^{n}$, there exists some time $1<\tau\leq{}n+1$ such that
\[
(\uu_{f,y})_{t}(y') = \left\{
\begin{array}{ll}
f(\x_{t}(y')), & \textrm{ if } t < \tau,\\
\vv_{t}(y'), & \textrm{ if } t \geq{} \tau.
\end{array}
\right.
\]
It also holds that $y_{t}=y'_{t}$ for all $t<\tau-1$.

Using this representation we have that for any path $y'\in\brk*{K}^{n}$:
\begin{align*}
&\sqrt{\frac{1}{n} \sum_{t=1}^n \max_{y''_t\in\brk*{K}}(\ell((\uu_{f,y})_{t}(y'),y''_t) - \ell(\vv_t(y'),y''_t) )^2}\\
&= \sqrt{\frac{1}{n} \sum_{t=1}^{\tau-1} \max_{y''_t\in\brk*{K}}(\ell(f(\x_{t}(y'),y''_t) - \ell(\vv_t(y'),y''_t) )^2}.
\end{align*}
Now use that $\x_{1},\ldots,\x_{\tau-1}$ and $\vv_1,\ldots,\vv_{\tau-1}$ only depend on $y'_1,\ldots,y'_{\tau-2}$, and that $y'_1,\ldots,y'_{\tau-2} = y_1,\ldots,y_{\tau-2}$:
\begin{align*}
&= \sqrt{\frac{1}{n} \sum_{t=1}^{\tau-1} \max_{y''_t\in\brk*{K}}(\ell(f(\x_{t}(y),y''_t) - \ell(\vv_t(y),y''_t) )^2} \\
&\leq \sqrt{\frac{1}{n} \sum_{t=1}^{n} \max_{y''_t\in\brk*{K}}(\ell(f(\x_{t}(y),y''_t) - \ell(\vv_t(y),y''_t) )^2} \\
&\leq{} \gamma.
\end{align*}

\textbf{Second claim}~~~~
Let $V$ be a cover for $\ls\circ\cF$ on $\x$ of size $\mathcal{N}_2(\alpha,\ell \circ \cF, \x)$. Assume $\abs*{V}<\infty$ as the claim holds trivially otherwise. We will construct from $V$ a cover $\wt{V}$ for $\ls\circ\F_{\vv}$ with the following procedure:
\begin{itemize}
\item $\wt{V}=\emptyset$.
\item For each $K$-ary $\bbR^{K}$-valued tree $\zz\in{}V$ and each time $\tau\in\crl*{2,\ldots,n+1}$:
\begin{itemize}
\item Construct a $K$-ary $\bbR^{K}$-valued tree $\zz^{(\tau)}$ via
\[
\zz^{(\tau)}_{t}(y) = \zz_{t}(y)\ind\crl*{t<\tau} + \vv_{t}(y)\ind\crl*{t\geq{}\tau}.
\]
\item Add $\zz^{(\tau)}$ to $\wt{V}$.
\end{itemize}
\end{itemize}
Clearly $\abs*{\wt{V}}\leq{}n\cdot\abs*{V}$. We now show that $\wt{V}$ is an $\alpha$-cover for $\ls\circ\cF_{\vv}$.

Let $\uu_{f,y}$ be an element of $\cF_{\vv}$ of the form described in the proof of the first claim and let $y'\in\brk*{K}^{n}$ be a particular path. Let $\tau$ be such that $(\uu_{f,y})_{t}(y') = f(\x_{t}(y'))\ind\crl*{t<\tau} + \vv_{t}(y')\ind\crl*{t\geq{}\tau}$. Let $\zz\in{}V$ be $\alpha$-close to $f$ on the path $y'$ through $\x$, i.e.
\[
\sqrt{\frac{1}{n} \sum_{t=1}^{n} \max_{y''_t\in\brk*{K}}(\ell(f(\x_{t}(y'),y''_t) - \ell(\zz_t(y'),y''_t) )^2} \leq{} \alpha.
\]
Existence of such a $\zz$ is guaranteed by the cover property of $V$. We will show that $\zz^{(\tau)}$ is $\alpha$-close to $\uu_{f,y}$ on $y'$. Indeed, we have
\begin{align*}
&\sqrt{\frac{1}{n} \sum_{t=1}^n \max_{y''_t\in\brk*{K}}(\ell((\uu_{f,y})_{t}(y'),y''_t) - \ell(\zz^{(\tau)}_t(y'),y''_t) )^2}\\
&= \sqrt{\frac{1}{n} \sum_{t=1}^{\tau-1} \max_{y''_t\in\brk*{K}}(\ell(f(\x_{t}(y'),y''_t) - \ell(\zz_{t}(y'),y''_t) )^2
+ \frac{1}{n} \sum_{t=\tau}^{n} \max_{y''_t\in\brk*{K}}(\ell(\vv_{t}(y'),y''_t) - \ell(\vv_t(y'),y''_t) )^2} \\
&= \sqrt{\frac{1}{n} \sum_{t=1}^{\tau-1} \max_{y''_t\in\brk*{K}}(\ell(f(\x_{t}(y'),y''_t) - \ell(\zz_{t}(y'),y''_t) )^2} \\
&\leq{} \sqrt{\frac{1}{n} \sum_{t=1}^{n} \max_{y''_t\in\brk*{K}}(\ell(f(\x_{t}(y'),y''_t) - \ell(\zz_{t}(y'),y''_t) )^2} \\
&\leq{} \alpha.
\end{align*}
Since this argument works for any $\uu_{f,y}\in\cF_{\vv}$ this establishes that $\wt{V}$ is an $\alpha$-cover of $\cF_{\vv}$.
\end{proof}

The next lemma is almost the same as the sequential Rademacher complexity bound in \cite{RakSriTew10}, with the only technical difference being that the learner competes with a class of trees rather than a class of fixed functions. It is proven using the same argument as in that paper.
\begin{lemma}
  \label{lem:rademacher_strategy}
  Let $U$ be any collection of $\bbR^{K}$-valued $K$-ary trees of depth $n$. Suppose that $C\ldef\sup_{\uu\in{}U}\sup_{y\in\brk*{K}^{n}}\max_{t\in\brk*{n}}\nrm*{\uu_{t}(y)}_{\infty}<\infty$. Then there exists a strategy $\hat{z}_{t}$ that guarantees
  \[
    \sum_{t=1}^{n}\ls(\zh_t, y_t) - \inf_{\uu\in{}U}\sum_{t=1}^{n}\ls(\uu_{t}(y), y_t) \leq{} 2\cdot\max_{\y,\y'}\Enn_{\eps}\sup_{\uu\in{}U}\left[ \sum_{t=1}^n \eps_{t}\ell(\uu_{t}(\y_{1:t-1}(\eps)),\y'_t(\eps))\right] + 2,
  \]
  where $\y$ and $\y'$ are $\brk*{K}$-valued binary trees of depth $n$ and $\eps=(\eps_1,\ldots,\eps_n)$ are Rademacher random variables.

  Furthermore, the predictions $(\hat{z}_{t})_{t\leq{}n}$ satisfy $\nrm*{\hat{z}_{t}}_{\infty}\leq{}\log(Kn)$.

\end{lemma}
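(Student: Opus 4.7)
The plan is to adapt the standard sequential Rademacher bound on the minimax value from Rakhlin, Sridharan, and Tewari (2010) to the setting where the benchmark is a class of $\bbR^K$-valued $K$-ary trees rather than a class of functions on $\cX$, and to additionally enforce the $\ell_\infty$ boundedness of the learner's predictions. The four main steps are: (i) restrict the learner's predictions to the compact set $\cZ = \crl*{z\in\bbR^{K} : \nrm*{z}_\infty \leq \log(Kn)}$, paying a controlled approximation cost; (ii) apply the minimax theorem iteratively to swap inf and sup in the restricted game; (iii) introduce a tangent sequence $y'_t$ and Rademacher symmetrization to produce the sequential Rademacher complexity on the right-hand side; and (iv) extract the \emph{deterministic} strategy from the minimax construction.

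Step (i) costs at most an additive $2$. If $(\tilde{z}_t)_{t\leq n}$ is any strategy satisfying the claimed regret bound without the norm restriction, then defining $\hat{z}_t \ldef \mb{\sigma}^{+}(\smooth_{1/n}(\mb{\sigma}(\tilde{z}_t)))$ yields $\nrm*{\hat{z}_t}_\infty \leq \log(Kn)$ by the second part of \pref{lem:logistic_bounded}, at the cost of at most $2\mu\sum_{t=1}^n \nrm*{y_t}_1 = 2\mu n = 2$ additional regret (here $\nrm*{y_t}_1=1$ since $y_t\in[K]$ is encoded as an indicator). This accounts for the $+2$ in the stated bound.

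For step (ii), since $\cZ$ is compact, $[K]$ is finite, and $\ls(\cdot,y)$ is convex, we may iteratively apply the minimax theorem exactly as in the proof of \pref{thm:logistic_minimax} to obtain
\begin{align*}
\cV_n(U) \;\leq\; \dtri*{\sup_{p_t\in\Delta_K}\En_{y_t\sim p_t}}_{t=1}^n\!\left[\sum_{t=1}^n \inf_{\hat{z}_t\in\cZ}\En_{y'_t\sim p_t}\brk*{\ls(\hat{z}_t,y'_t)} - \inf_{\uu\in U}\sum_{t=1}^n \ls(\uu_t(y),y_t)\right].
\end{align*}
In step (iii), we upper bound $\inf_{\hat{z}_t}\En_{y'_t}[\ls(\hat{z}_t,y'_t)] \leq \En_{y'_t}[\ls(\uu_t(y_{1:t-1}),y'_t)]$ for any $\uu\in U$, pull $\sup_{\uu}$ outside using the standard $\inf\sup\leq\sup\inf$ exchange at the level of the overall expectation, introduce Rademacher signs $\eps_t$ to symmetrize $y_t$ and $y'_t$, and finally rewrite the supremum over conditional distributions $(p_t)$ as a supremum over a pair of $\brk*{K}$-valued binary trees $\y,\y'$ indexed by the Rademacher path. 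The factor of $2$ in the bound comes from the symmetrization.

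The main obstacle will be making the minimax swap in step (ii) fully rigorous when $U$ itself is not assumed compact; we handle this (following RakSriTew2010) by observing that within the random process the infimum over $\uu\in U$ only evaluates $\uu$ along the tree path dictated by the drawn $(y_t)$, so the inner infimum may be commuted with the outer expectations without loss. Once this is done, the deterministic strategy is read off from the iterative argument: at each round $t$, given the worst-case distribution $p_t^\star$ arising in the dual game, the learner plays $\tilde{z}_t = \argmin_{z\in\cZ}\En_{y_t\sim p_t^\star}[\ls(z,y_t)]$, and then applies the smoothing step from (i) to obtain $\hat{z}_t$. This yields both the regret bound and the required $\nrm*{\hat{z}_t}_\infty \leq \log(Kn)$ guarantee.
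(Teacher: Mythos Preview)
Your approach is essentially the same as the paper's: minimax swap, substitute $\hat z_t=\uu_t(y_{1:t-1})$, introduce a tangent sequence, symmetrize with Rademacher signs, then apply \pref{lem:logistic_bounded} at the end to obtain the $\log(Kn)$ bound and the additive $+2$. There is one real inconsistency in your writeup, however, that breaks step (iii) as stated.

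You define the compact set used for the minimax swap as $\cZ=\crl*{z:\nrm{z}_\infty\le\log(Kn)}$, and then in step (iii) you upper bound $\inf_{\hat z_t\in\cZ}\En_{y'_t}[\ls(\hat z_t,y'_t)]$ by plugging in $\hat z_t=\uu_t(y_{1:t-1})$. But this substitution is only valid if $\uu_t(y)\in\cZ$, and the hypothesis only gives $\nrm{\uu_t(y)}_\infty\le C$, which may exceed $\log(Kn)$. The paper handles this by taking $\cZ=\crl*{z:\nrm{z}_\infty\le C}$ for the minimax argument, so that $\uu_t(y)\in\cZ$ is guaranteed, and only \emph{afterwards} applies the smoothing of \pref{lem:logistic_bounded} to the resulting minimax-optimal strategy to shrink the prediction norm to $\log(Kn)$ and collect the $+2$. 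Your detailed description of step (i) in fact reads as if the smoothing is applied at the end (to a strategy $\tilde z_t$ that already achieves the Rademacher bound), which is the correct order; but then the compactness invoked in step (ii) must come from a set of radius $C$, not $\log(Kn)$. Once you fix the radius of $\cZ$ to be $C$, your argument matches the paper's line by line.
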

\begin{proof}[\pfref{lem:rademacher_strategy}]
      Define $\cZ \ldef{} \crl*{z\in\bbR^{K}\mid{} \nrm*{z}_{\infty} \leq{} C}$. The minimax optimal regret amongst deterministic strategies taking values in $\cZ$ is given by
\begin{align*}
  \mathcal{V}_n(U) & \ldef \dtri*{\inf_{\hat{z}_t\in\bbR^{K}} \max_{y_t \in [K]}}_{t=1}^n\left[ \sum_{t=1}^n \ell(\hat{z}_t,y_t) - \inf_{\uu\in{}U} \sum_{t=1}^n \ell(\uu_t(y),y_t)\right].
\end{align*}
Once again, this proof closely follows the sequential Rademacher complexity bound from \cite{RakSriTew10}.
We only sketch the first few steps for this proof as they are identical to the first few steps of the proof of \pref{thm:logistic_minimax}, which is admissible due to compactness of $\cZ$. Using the minimax swap as in that theorem, we can move to an upper bound of
\begin{align*}
  &\leq \dtri*{\sup_{p_t \in \Delta_{K}}  \mathbb{E}_{y_t \sim p_t}}_{t=1}^n\left[ \sum_{t=1}^n \inf_{\hat{z}_t\in\cZ} \mathbb{E}_{y_t \sim p_t}\left[\ell(\hat{z}_t,y_t)\right] - \inf_{\uu\in{}U} \sum_{t=1}^n \ell(\uu_{t}(y),y_t)\right] \\
  &= \dtri*{\sup_{p_t \in \Delta_{K}}  \mathbb{E}_{y_t \sim p_t}}_{t=1}^n\sup_{\uu\in{}U}\left[ \sum_{t=1}^n \inf_{\hat{z}_t\in\cZ} \mathbb{E}_{y_t \sim p_t}\left[\ell(\hat{z}_t,y_t)\right] - \sum_{t=1}^n \ell(\uu_{t}(y),y_t)\right].
    \intertext{Now we choose $\hat{z}_{t}$ to match the value of $\uu_{t}(y) = \uu_{t}(\yr[t-1])$, which is possible by definition of $\cZ$:}
  &\leq \dtri*{\sup_{p_t \in \Delta_{K}}  \mathbb{E}_{y_t \sim p_t}}_{t=1}^n\sup_{\uu\in{}U}\left[ \sum_{t=1}^n \mathbb{E}_{y_t \sim p_t}\left[\ell(\uu_{t}(y),y_t)\right] - \sum_{t=1}^n \ell(\uu_{t}(y),y_t)\right].
    \intertext{Using Jensen's inequality, we pull the conditional expectaitons in the first term outside the supremum over $\uu$ by introducing a tangent sequence $(y_t')_{t\leq{}n}$, where $y'_{t}$ follows the distribution $p_{t}$ conditioned on $\yr[t-1]$.}
  &\leq \dtri*{\sup_{p_t \in \Delta_{K}}  \mathbb{E}_{y_t,y'_t \sim p_t}}_{t=1}^n\sup_{\uu\in{}U}\left[ \sum_{t=1}^n \ell(\uu_{t}(y),y'_t) - \sum_{t=1}^n \ell(\uu_{t}(y),y_t)\right].
    \intertext{Since $y_t$ and $y'_t$ are conditionally i.i.d., we can introduce a Rademacher random variable $\eps_{t}$ at each timestep $t$ as follows:}
  &= \dtri*{\sup_{p_t \in \Delta_{K}} \mathbb{E}_{y_t,y'_t \sim p_t}\Enn_{\eps_t}}_{t=1}^n\sup_{\uu\in{}U}\left[ \sum_{t=1}^n \eps_{t}\prn*{\ell(\uu_{t}(y),y'_t) - \ell(\uu_{t}(y),y_t)}\right].
    \intertext{To decouple the arguments to the losses from the arugments to the tree $\uu$, we move to a pessimistic upper bound:}
  &\leq \dtri*{\sup_{p_t \in \Delta_{K}} \mathbb{E}_{y_t\sim p_t}\max_{y'_t,y''_t\in\brk*{K}}\Enn_{\eps_t}}_{t=1}^n\sup_{\uu\in{}U}\left[ \sum_{t=1}^n \eps_{t}\prn*{\ell(\uu_{t}(y),y'_t) - \ell(\uu_{t}(y),y''_t)}\right] \\
  &= \dtri*{\max_{y_t,y'_t,y''_t\in\brk*{K}}\Enn_{\eps_t}}_{t=1}^n\sup_{\uu\in{}U}\left[ \sum_{t=1}^n \eps_{t}\prn*{\ell(\uu_{t}(y),y'_t) - \ell(\uu_{t}(y),y''_t)}\right]. \\
  \intertext{We now complete the symmetrization as follows:}
  &\leq \dtri*{\max_{y_t,y'_t,y''_t\in\brk*{K}}\Enn_{\eps_t}}_{t=1}^n\sup_{\uu\in{}U}\left[ \sum_{t=1}^n \eps_{t}\ell(\uu_{t}(y),y'_t)\right]
    + \dtri*{\max_{y_t,y'_t,y''_t\in\brk*{K}}\Enn_{\eps_t}}_{t=1}^n\sup_{\uu\in{}U}\left[ \sum_{t=1}^n \eps_{t}\ell(\uu_{t}(y),y''_t)\right] \\
  &= 2\cdot\dtri*{\max_{y_t,y'_t\in\brk*{K}}\Enn_{\eps_t}}_{t=1}^n\sup_{\uu\in{}U}\left[ \sum_{t=1}^n \eps_{t}\ell(\uu_{t}(y),y'_t)\right]\\
  &= 2\cdot\max_{\y,\y'}\Enn_{\eps}\sup_{\uu\in{}U}\left[ \sum_{t=1}^n \eps_{t}\ell(\uu_{t}(\y_{1:t-1}(\eps)),\y'_t(\eps))\right].
\end{align*}
In the last line $\y$ and $\y'$ are taken to be $\brk*{K}$-valued binary trees of depth $n$, so that $\y_{t}(\eps) = \y_{t}(\eps_{1},\ldots\eps_{t-1})$ and likewise for $\y'$.

Finally, to guarantee the boundedness of predictions claimed in the lemma statement, we apply \pref{lem:logistic_bounded} to the minimax optimal strategy, for which we just showed regret is bounded by the sequential Rademacher complexity.
\end{proof}

The last auxiliary lemma in this section is a slight variant of the Dudley entropy integral bound for sequential Rademacher complexity. This lemma can be extracted from the proof of Theorem 4 in \cite{rakhlin2015ptrf}. We do not repeat the proof here.
\begin{lemma}
  \label{lem:chaining_trees}
  Let $W$ be a collection of $\bbR$-valued binary trees. Define $\cN_{p}(\alpha, W)$ to be the size of the smallest class of trees $V$ such that
  \begin{equation}
    \label{eq:cover_real}
    \forall \ww\in{}W, \eps\in\pmo^{n},  ~\exists \vv \in V ~\textnormal{s.t.}~ \left(\frac{1}{n} \sum_{t=1}^n\prn*{\ww_{t}(\eps) - \vv_{t}(\eps)}^p\right)^{1/p} \le \alpha.
  \end{equation}
  Let $\mathrm{rad}_{p}(W)\ldef{}\min\crl*{\alpha\mid{}\cN_{p}(\alpha, W) = 1}$. Then it holds that
  \begin{equation}
    \label{eq:chaining_real}
    \Enn_{\eps}\sup_{\ww\in{}W}\sum_{t=1}^{n}\eps_{t}\ww_{t}(\eps) \leq{} \inf_{\alpha>0}\crl*{
      4\alpha{}n + 12\int_{\alpha}^{\mathrm{rad}_{2}(W)}\sqrt{n\log\cN_{2}(\delta, W)}d\delta
    }.
    \end{equation}

\end{lemma}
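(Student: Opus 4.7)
The plan is to prove this via classical Dudley-style entropy chaining, adapted to the sequential (tree-valued) setting. The essential complication relative to the i.i.d.\ case is that the ``nearest'' element of a sequential $\alpha$-cover of $W$ for a given $\ww\in{}W$ may depend on the entire Rademacher path $\eps$, not merely on a fixed finite discretization of the sample space; this requires care when applying standard maximal inequalities to sequential cover elements.

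First I would set up the chain. Fix any $\alpha \in (0, \mathrm{rad}_2(W)]$, set $\alpha_j = 2^{-j}\mathrm{rad}_2(W)$, and let $J$ be the largest integer with $\alpha_J \geq \alpha$. Let $V_j$ denote a minimal $L_2$ sequential cover of $W$ at scale $\alpha_j$, so $|V_0|=1$ and $|V_j| = \cN_2(\alpha_j, W)$ in general. For each pair $(\ww,\eps)$, let $\vv^{(j)}[\ww,\eps]\in V_j$ be a cover element satisfying $\prn*{\frac{1}{n}\sum_{t}(\ww_t(\eps) - \vv^{(j)}_t(\eps))^2}^{1/2} \leq \alpha_j$, and decompose via telescoping:
\[
\sum_{t=1}^{n}\eps_t\ww_t(\eps) = \sum_{t=1}^{n}\eps_t \vv^{(0)}_t(\eps) + \sum_{j=1}^{J}\sum_{t=1}^{n}\eps_t\prn*{\vv^{(j)}_t - \vv^{(j-1)}_t}(\eps) + \sum_{t=1}^{n}\eps_t\prn*{\ww_t - \vv^{(J)}_t}(\eps),
\]
where the dependence of each $\vv^{(j)}$ on $(\ww,\eps)$ is suppressed to avoid notational clutter.

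Next I would bound each piece. The $j=0$ term has expectation zero: since $V_0$ is a singleton and $\vv^{(0)}_t(\eps)$ depends on $\eps$ only through $\eps_{1:t-1}$, the random variables $\eps_t \vv^{(0)}_t(\eps)$ form martingale differences. The residual is bounded deterministically by Cauchy--Schwarz, $\bigl|\sum_t \eps_t (\ww_t - \vv^{(J)}_t)(\eps)\bigr| \leq \sqrt{n}\cdot\sqrt{n\alpha_J^2} = n\alpha_J \leq 2n\alpha$. For each chaining link, enlarge the supremum from $\ww\in{}W$ to all pairs $(\vv_j,\vv_{j-1})\in V_j\times V_{j-1}$, of which there are at most $\cN_2(\alpha_j,W)^2$; the difference of any such pair defines a fixed tree whose $L_2$ norm along any path $\eps$ on which it actually arises as a selection is at most $\alpha_j + \alpha_{j-1} = 3\alpha_j$. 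A sub-Gaussian martingale maximal inequality (Azuma/Freedman combined with a union bound over pairs) then gives $O(\alpha_j\sqrt{n\log\cN_2(\alpha_j, W)})$ per level. Summing over $j$ and comparing the geometric sum with a Riemann integral yields
\[
\sum_{j=1}^{J} O\prn*{\alpha_j\sqrt{n\log \cN_2(\alpha_j, W)}} = O\prn*{\int_{\alpha}^{\mathrm{rad}_2(W)} \sqrt{n \log \cN_2(\delta, W)}\, d\delta},
\]
and combining with the $O(n\alpha)$ tail contribution produces the claimed bound after adjusting constants.

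The main technical obstacle is the chaining-link bound: the $L_2$ separation $3\alpha_j$ for a pair $(\vv^{(j)}[\ww,\eps], \vv^{(j-1)}[\ww,\eps])$ only holds on the specific $\eps$'s on which it is selected, not uniformly in $\eps$, so the naive application of Massart's lemma uniformly over $V_j \times V_{j-1}$ does not go through. The standard resolution, as in \cite{rakhlin2015ptrf}, is to apply Freedman's inequality with the predictable quadratic variation (which is precisely what the cover controls on the selected path) rather than a uniform supremum bound, or equivalently to introduce per-pair selector trees that zero out non-selected paths so that a sub-Gaussian bound applies uniformly. Either device incurs only a constant-factor loss, after which the remaining manipulations are routine.
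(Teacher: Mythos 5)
Your proposal is correct and follows essentially the same route as the argument the paper defers to (the paper does not reprove this lemma but extracts it from \cite{rakhlin2015ptrf}): a multi-scale telescoping chain, a martingale/Cauchy--Schwarz treatment of the top and residual terms, and a union bound over pairs of cover elements at adjacent scales, with the path-dependence of the selected cover elements handled exactly as in that reference. You correctly identify the one genuinely sequential subtlety---that the $L_2$ separation of a pair only holds on the paths where it is selected---and the standard selector-tree/maximal-inequality fix, so nothing is missing.
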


\subsection{Details from \pref{sec:log_loss}}
\label{app:logloss}

We first define a suitable notion of sequential cover for the log loss setting:
\begin{definition}
  \label{def:cover_real}
  For a fixed $\cX$-valued binary tree $\x$, define $\cN_{\infty}(\alpha, \cF, \x)$ to be the size of the smallest set of $\brk*{0,1}$-valued binary trees $V$ such that
  \[
    \forall f \in \F, ~\eps \in \pmo^{n},  ~\exists \vv \in V ~\textnormal{s.t.}~ \max_{t\in\brk*{n}}\abs*{f(\x_{t}(\eps)) - \vv_{t}(\eps)} \leq{} \alpha.
  \]Further, define $\mathcal{N}_{\infty}(\alpha,\F) = \sup_\x \mathcal{N}_{\infty}(\alpha,\F,\x)$. 
\end{definition}
We also require a generalization of \pref{def:cover_real} for general tree classes.
\begin{definition}
  \label{def:cover_real_tree}
  For a class of $\brk*{0,1}$-valued binary trees $U$, define $\cN_{\infty}(\alpha, U)$ to be the size of the smallest set of $\brk*{0,1}$-valued binary trees $V$ such that
  \[
    \forall \uu \in U, ~\eps \in \pmo^{n},  ~\exists \vv \in V ~\textnormal{s.t.}~ \max_{t\in\brk*{n}}\abs*{\uu_{t}(\eps) - \vv_{t}(\eps)} \leq{} \alpha.
  \]
\end{definition}

We now turn to the proof of \pref{thm:logloss_minimax}. It follows the same structure as the proof in \pref{app:general_class} with a few technical differences related the slightly different notion of cover used and the non-Lipschitzness of the log loss. We first give one more definition.

\begin{definition}
  For any $\delta\in(0,1/2]$, we define the truncation to the range $\brk*{\delta, 1-\delta}$ via $\clip(p) = \max\crl*{\delta, \min\crl*{1-\delta, p}}$.
\end{definition}

The following proposition is a simple consequence of the fact that $\clip$ is $1$-Lipschitz.
\begin{proposition}
  \label{prop:clip_covering}
  For any class of trees $U$ and any $\delta\in(0,1/2]$, $\cN_{\infty}(\alpha, \clip\circ{}U)\leq{}\cN_{\infty}(\alpha, U)$.
\end{proposition}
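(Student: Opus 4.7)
The plan is to prove the inequality by transporting a near-optimal cover of $U$ through the $\clip$ operation, leveraging that $\clip$ is $1$-Lipschitz. The essential observation is that $\clip(p) = \max\crl*{\delta, \min\crl*{1-\delta, p}}$ is a composition of $\max$ and $\min$ against constants, and each such operation is $1$-Lipschitz on $\bbR$, so $\abs*{\clip(a)-\clip(b)}\leq\abs*{a-b}$ for all $a,b\in\bbR$. I would start the proof by stating and justifying this Lipschitz estimate in one line.

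Next, I would fix an arbitrary $\alpha>0$ and consider a minimal cover $V$ of $U$ of size $N\ldef\cN_{\infty}(\alpha, U)$ in the sense of \pref{def:cover_real_tree} (assuming the finite case; the inequality holds trivially if $N=\infty$). I would form the candidate cover $\clip\circ{}V \ldef \crl*{(\clip\circ\vv_{1},\ldots,\clip\circ\vv_{n}) : \vv\in V}$ for $\clip\circ{}U$. Since applying $\clip$ pointwise to each tree does not increase the number of trees, $\abs*{\clip\circ{}V}\leq{}N$.

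The key step is verifying that $\clip\circ{}V$ is an $\alpha$-cover of $\clip\circ{}U$. For any tree $\uu\in U$ and path $\eps\in\pmo^{n}$, the cover property of $V$ yields some $\vv\in{}V$ with $\max_{t\in\brk*{n}}\abs*{\uu_{t}(\eps)-\vv_{t}(\eps)}\leq\alpha$. Applying $\clip$ coordinatewise and using its $1$-Lipschitz property, I would conclude
\[
\max_{t\in\brk*{n}}\abs*{\clip(\uu_{t}(\eps))-\clip(\vv_{t}(\eps))} \leq \max_{t\in\brk*{n}}\abs*{\uu_{t}(\eps)-\vv_{t}(\eps)} \leq \alpha,
\]
which is exactly the cover condition for $\clip\circ{}U$ as in \pref{def:cover_real_tree}. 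Therefore $\cN_{\infty}(\alpha,\clip\circ{}U)\leq\abs*{\clip\circ{}V}\leq{}N=\cN_{\infty}(\alpha,U)$, establishing the claim.

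There is no real obstacle here; the only subtlety worth flagging in the write-up is that the same $\vv$ that covers $\uu$ on a path $\eps$ is reused to produce the witness $\clip\circ\vv$ that covers $\clip\circ\uu$ on the same path, so the construction is manifestly uniform in $\eps$, as required by the sup-over-paths cover definition.
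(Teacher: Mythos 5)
Your proof is correct and is exactly the argument the paper intends: the paper states the proposition as ``a simple consequence of the fact that $\clip$ is $1$-Lipschitz'' without writing out the details, and your write-up (push a minimal cover of $U$ through $\clip$ and use the Lipschitz bound pathwise) supplies precisely those details. No gaps.
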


\begin{proof}[\pfref{thm:logloss_minimax}]
  The proof is very similar to that of \pref{thm:logistic_minimax}. When it would otherwise be repetitive we will only sketch details and instead refer back to the proof of that theorem.

  To begin, fix $\delta\in(0,1/2]$. We will work with the clipped class $\cF^{\delta} = \clip\circ{}\cF$ just as in \cite{PLG}. It was shown there that
  \[
    \cV_{n}^{\log}(\cF) \leq{} \cV_{n}^{\log}(\cF^{\delta}) + \delta{}n.
  \]

  With this restriction, we proceed exactly as in the proof of \pref{thm:logistic_minimax}. First, restrict the learner's predictions to $\brk*{\delta,1-\delta}$ to guarantee boundedness of the loss:
  \begin{align*}
  \mathcal{V}^{\mathrm{log}}_n(\F^{\delta}) & = \dtri*{\sup_{x_t\in\cX} \inf_{\lpred_t\in\brk*{0,1}} \max_{y_t \in \crl*{0,1}}}_{t=1}^n\left[ \sum_{t=1}^n \logloss(\lpred_t,y_t) - \inf_{f \in \mathcal{F}^{\delta}} \sum_{t=1}^n \logloss(f(x_t),y_t)\right]\\
                    & \leq{} \dtri*{\sup_{x_t\in\cX} \inf_{\lpred_t\in\brk*{\delta,1-\delta}} \max_{y_t \in \crl*{0,1}}}_{t=1}^n\left[ \sum_{t=1}^n \logloss(\lpred_t,y_t) - \inf_{f \in \mathcal{F}^{\delta}} \sum_{t=1}^n \logloss(f(x_t),y_t)\right].
\end{align*}
Since compactness holds, we can apply the minimax theorem and manipulate terms in the same fashion as in the proof of \pref{thm:logistic_minimax} to arrive at the following expression
\begin{align}
&= \sup_{\x, \p} \mathbb{E}_{y \sim \p}\left[ \sum_{t=1}^n \inf_{\lpred_t\in\brk*{\delta,1-\delta}} \mathbb{E}_{y_t \sim \p_t(y)}\left[\logloss(\lpred_t,y_t)\right] - \inf_{f \in \mathcal{F}} \sum_{t=1}^n \logloss(f(\x_t(y)),y_t)\right]\label{eq:logloss_tree}.
\end{align}
In the final line above $\x$ and $\p$ are $\X$- and $\Delta_{\crl*{0,1}}$-valued binary trees (indexed by $\zo$) of depth $n$. The notation ``$y \sim \p$'' refers to the process in which we first draw $y_1 \sim \p_1$, then draw $y_t \sim \p_t(y_1,\ldots,y_{t-1})$ for subsequent timesteps $t$.

Let $V^\gamma$ be a minimal $\gamma$-sequential cover of $\mathcal{F}$ on the tree $\mathbf{x}$ with respect to the $L_{\infty}$ norm in the sense of \pref{def:cover_real}. 

Following the proof of \pref{thm:logistic_minimax}, we define a collection of $\brk*{\delta,1-\delta}$-valued binary trees for each element of $V^{\gamma}$, with the tree $\x$ fixed. For each tree $\vv \in V^\gamma$, each $f \in \F^{\delta}$, we define a class of trees $\cF_{\vv}^{\delta}$ as follows:

\begin{itemize}[leftmargin=*]
\item Initially $\cF^{\delta}_{\vv} = \emptyset$.
\item For each $f\in\Fclip$ and $y\in\crl*{0,1}^{n}$ with $\max_{t\in\brk*{n}}\abs*{f(\x_{t}(y))-\vv_{t}(y)} \le \gamma$: 
\begin{itemize}
\item Define a $\brk*{\delta, 1-\delta}$-valued binary tree $\uu_{f,y}$ via: For each $y'\in\pmo^{n}$, 
\[
  (\uu_{f,y})_{t}(y'_{1:t-1}) \ldef f(\x_{t}(y'))\ind\crl*{y'_1=y_1,\ldots,y'_{t-1}=y_{t-1}} + \vv_{t}(y')\ind\crl*{\neg{}(y'_1=y_1,\ldots,y'_{t-1}=y_{t-1})}.
\]
(So that $\uu_{f,y}$ is equal to $f\circ\x$ on the path $y$, and equal to $\vv$ everywhere else.)
\item Add $\uu_{f,y}$ to $\cF_{\vv}^{\delta}$.
\end{itemize}
\end{itemize}

Just like the construction in \pref{thm:logistic_minimax}, $\cF_{\vv}^{\delta}$ has two properties: Its $L_{\infty}$ covering number is bounded in terms of the $L_{\infty}$ covering number of the class $\Fclip\circ{}\x$, and its $L_{\infty}$ radius is bounded by $\gamma$. These properties are stated in \pref{lem:fv_properties_logloss}.

On any path $y \in \zo^n$ and for each $f \in \F$, there exist $\vv \in V^\gamma$ and $\uu \in \F_\vv^{\delta}$ such that $f(\x_t(y)) = \uu_{t}(y)$. This is because a $\vv$ that is $\gamma$-close to $f$ on the path $y$ through $\x$ is guaranteed by the cover property of $V^{\gamma}$, and so we can take $\uu_{f,y}$ in $\cF_{\vv}$ as the desired $\uu$. This implies that
$$
\inf_{f \in \F }\sum_{t=1}^n \logloss(f(\x_t(y)),y_t) \ge \min_{\vv \in V^\gamma} \inf_{\uu \in \F_\vv^{\delta} }\sum_{t=1}^n \logloss(\uu_t(y),y_t).
$$
Returning to the minimax rate, all the properties of the tree families we have established so far imply
\begin{align}
\notag & \mathcal{V}^{\mathrm{log}}_n(\F^{\delta})  \\ & \le \sup_{\x}\inf_{V^{\gamma}}\inf_{\crl*{\cF_{\vv}^{\delta}}_{\vv\in{}V^{\gamma}}}\sup_{\p}\mathbb{E}_{y \sim \p}\left[ \sum_{t=1}^n \inf_{\lpred_t\in\brk*{\delta, 1-\delta}} \mathbb{E}_{y_t \sim \p_t(y)}\left[\logloss(\lpred_t,y_t)\right] - \min_{\vv \in V^\gamma} \inf_{\uu \in \mathcal{F}_\vv^{\delta}} \sum_{t=1}^n \logloss(\uu_t(y),y_t)\right]. \notag
\end{align}
As in the proof of \pref{thm:logistic_minimax}, we introduce a family of trees representing the minimax optimal strategy competing with each tree class $\cF_{\vv}^{\delta}$. For each $\vv\in{}V^{\gamma}$, we introduce a $\brk*{\delta, 1-\delta}$-valued binary tree $\phtree^{\vv}$, with $\phtree^{\vv}_{t}:\zo^{t-1}\to\deltarange$.
{\small
\begin{align}
  & \begin{aligned}= \sup_{\x}\inf_{V^{\gamma}}\inf_{\crl*{\cF_{\vv}^{\delta}}_{\vv\in{}V^{\gamma}}}\inf_{\crl*{\phtree^{\vv}}_{\vv\in{}V^{\gamma}}}\sup_{\p} \mathbb{E}_{y \sim \p}\biggl[& \sum_{t=1}^n \inf_{\hat{p}_t\in\deltarange} \mathbb{E}_{y_t \sim \p_t(y)}\left[\logloss(\hat{p}_t,y_t)\right] \\
    &\hspace{-3em}- \min_{\vv \in V^\gamma}\left\{ \sum_{t=1}^n \logloss(\phtree_t^\vv(y),y_t) - \sum_{t=1}^n \logloss(\phtree_t^\vv(y),y_t) +  \inf_{\uu \in \mathcal{F}_\vv^{\delta}} \sum_{t=1}^n \logloss(\uu_t(y),y_t)\right\}\biggr].\end{aligned} \notag \\
& \leq\sup_{\x}\inf_{V^{\gamma}}\inf_{\crl*{\cF_{\vv}^{\delta}}_{\vv\in{}V^{\gamma}}}\inf_{\crl*{\phtree^{\vv}}_{\vv\in{}V^{\gamma}}}\left\{\begin{aligned}~& \underbrace{\sup_{\p} \mathbb{E}_{y \sim \p}\biggl[ \sum_{t=1}^n \inf_{\hat{p}_t\in\deltarange} \mathbb{E}_{y_t \sim \p_t(y)}\left[\logloss(\hat{p}_t,y_t)\right] - \min_{\vv \in V^\gamma}\sum_{t=1}^n \logloss(\phtree_t^\vv(y),y_t)\biggr]}_{(\star)} \\
  & + \underbrace{\sup_{\p}\mathbb{E}_{y \sim \p}\left[ \max_{\vv \in V^\gamma}\left\{ \sum_{t=1}^n \logloss(\phtree_t^\vv(y),y_t) -  \inf_{\uu \in \mathcal{F}_\vv^{\delta}} \sum_{t=1}^n \logloss(\uu_t(y),y_t)\right\}\right]}_{(\star\star)}
  \end{aligned}\right\}.\label{eq:star_logloss}
\end{align}}

We now bound the terms $(\star)$ and $(\star\star)$ individually as follows:

\paragraph{Term $(\star)$}
We select $(\hat{p}_t)_{t\leq{}n}$ using the Aggregating Algorithm as configured in \pref{lem:vovk_logloss}, with $W$ as the finite collection of sequences $\crl*{\phtree^{\vv}}_{\vv\in{}V^{\gamma}}$. This is possible because $\phtree^{\vv}_{t}$ only depends on $y_{1:t-1}$.
\[
\sum_{t=1}^n \logloss(\hat{p}_t,y_t) - \min_{\vv \in V^\gamma}\sum_{t=1}^n \logloss(\phtree_t^\vv(y),y_t) \leq{} \log\abs*{V^{\gamma}} + 2.
\]
Since the algorithm's predictions lie in $\deltarange$ they are a valid choice for the infimum in $(\star)$. This implies that
\[
(\star) \leq{} \sup_{\x}\log\abs*{V^{\gamma}}\leq \log\cN_{\infty}(\gamma, \cF^{\delta}).
\]

\paragraph{Term $(\star\star)$}

First, note that we can take each tree class $\cF_{\vv}^{\delta}$ to be $\deltarange$-valued without loss of generality. We exhibit a deterministic strategy for each class by invoking the generic minimax regret bound \pref{lem:logloss_covering}. Since the collection is $\deltarange$-valued, the lemma guarantees existence of a deterministic strategy $(\hat{p}_t)_{t\leq{}n}$ with a regret bound of
{\small
  \begin{align*}
    &\sum_{t=1}^{n}\logloss(\hat{p}_t, y_t) - \inf_{\uu\in{}\cF_{\vv}^{\delta}}\sum_{t=1}^{n}\logloss(\uu_{t}(y), y_t) \\
    &\leq{}
    2n\delta\log(1/\delta) \\
    &~~~~+ \frac{C}{\delta}\log\cN_{\infty}(\gamma, \cF_{\vv}^{\delta}) + \inf_{\alpha\in(0, \gamma]}\crl*{
      \frac{4n\alpha}{\delta} + 30\sqrt{\frac{2n}{\delta}}\int_{\alpha}^{\gamma}\sqrt{\log\cN_{\infty}(\rho,\cF_{\vv}^{\delta})}d\rho
      + \frac{8}{\delta}\int_{\alpha}^{\gamma}\log\cN_{\infty}(\rho,\cF_{\vv}^{\delta})d\rho
      }.
  \end{align*}}

  By \pref{lem:fv_properties_logloss}, $\mathcal{N}_{\infty}(\gamma, \cF_{\vv}^{\delta} ) \leq{} 1$, and so we can drop the leading covering number term in the bound:
  {\small
  \[
    \leq{}
    2n\delta\log(1/\delta)
    + \inf_{\alpha\in(0, \gamma]}\crl*{
      \frac{4n\alpha}{\delta} + 30\sqrt{\frac{2n}{\delta}}\int_{\alpha}^{\gamma}\sqrt{\log\cN_{\infty}(\rho,\cF_{\vv}^{\delta})}d\rho
      + \frac{8}{\delta}\int_{\alpha}^{\gamma}\log\cN_{\infty}(\rho,\cF_{\vv}^{\delta})d\rho
    }.
  \]}
  \pref{lem:fv_properties_logloss} also implies that we can upper bound the covering number in terms of that of $\cF^{\delta}$:
    {\small\[
    \leq{}
    2n\delta\log(1/\delta)
    + \inf_{\alpha\in(0, \gamma]}\crl*{
      \frac{4n\alpha}{\delta} + 30\sqrt{\frac{2n}{\delta}}\int_{\alpha}^{\gamma}\sqrt{\log(n\cN_{\infty}(\rho,\cF^{\delta}, \x))}d\rho
      + \frac{8}{\delta}\int_{\alpha}^{\gamma}\log(n\cN_{\infty}(\rho,\cF^{\delta}, \x))d\rho
    }.
  \]}
  Since the regret inequality holds deterministically and uniformly for all sequences $y$ for each $\vv$, we have that
{\small\begin{align*}
   &(\star\star) \\&    \leq{}
    2n\delta\log(1/\delta)
    + \inf_{\alpha\in(0, \gamma]}\crl*{
      \frac{4n\alpha}{\delta} + 30\sqrt{\frac{2n}{\delta}}\int_{\alpha}^{\gamma}\sqrt{\log(n\cN_{\infty}(\rho,\cF^{\delta}, \x))}d\rho
      + \frac{8}{\delta}\int_{\alpha}^{\gamma}\log(n\cN_{\infty}(\rho,\cF^{\delta}, \x))d\rho
    }.
 \end{align*}}

  \paragraph{Final bound}
We combine $(\star)$ and $(\star\star)$, take the supremum over $\x$, and apply \pref{prop:clip_covering} to conclude that $\mathcal{V}^{\mathrm{log}}_n(\F)$ is bounded by
{\small
\begin{align*}
  &3n\delta\log(1/\delta)
  + \log\cN_{\infty}(\gamma, \cF)\\
    &~~~~+ \inf_{\alpha\in(0, \gamma]}\crl*{
      \frac{4n\alpha}{\delta}
       + 30\sqrt{\frac{2n}{\delta}}\int_{\alpha}^{\gamma}\sqrt{\log(n\cN_{\infty}(\rho,\cF))}d\rho
      + \frac{8}{\delta}\int_{\alpha}^{\gamma}\log(n\cN_{\infty}(\rho,\F))d\rho
    }.
\end{align*}}
The theorem statement uses that we are free to choose any value for $\delta$ and $\gamma$.
\end{proof}

The remaining lemmas in this section mirror those used in the proof of \pref{thm:logistic_minimax}, with the most substantive difference being that we required a more refined chaining bound for general classes under the log loss from \cite{RakSri15}. We omit their proofs.

\begin{lemma}
\label{lem:fv_properties_logloss}
Let $\cF_{\vv}^{\delta}$ be defined as in the proof of \pref{thm:logistic_minimax} for trees $\vv$ and $\x$ and scale $\gamma$. Then it holds that
\begin{enumerate}
\item $\mathcal{N}_{\infty}(\gamma, \cF_{\vv}^{\delta} ) \leq{} 1$.
\item $\mathcal{N}_{\infty}(\alpha, \cF^{\delta}_{\vv} ) \leq{} n\cdot{}\mathcal{N}_{\infty}(\alpha, \cF^{\delta}, \x )$ for all $\alpha>0$.
\end{enumerate}
\end{lemma}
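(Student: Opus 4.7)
The proof follows the same template as \pref{lem:fv_properties}, and is in fact slightly simpler because the $L_\infty$ norm requires only pointwise control rather than averaging. The key observation, just as in the $L_2$ case, is that on any path $y' \in \zo^n$, each element $\uu_{f,y} \in \cF_\vv^\delta$ either agrees with $f \circ \x$ or with $\vv$ at every coordinate, with the switch occurring at the first index $\tau$ at which $y'$ disagrees with $y$ (taking $\tau = n+1$ if $y' = y$). The argument exploits this dichotomy on a coordinate-by-coordinate basis.

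For the first claim, the plan is to verify that the singleton $\{\vv\}$ is already a $\gamma$-cover of $\cF_\vv^\delta$ in the sense of \pref{def:cover_real_tree}. Fix any $\uu_{f,y} \in \cF_\vv^\delta$ and any path $y'$, and let $\tau$ be the disagreement time. For indices $t \geq \tau$ the identity $(\uu_{f,y})_t(y') = \vv_t(y')$ makes the gap vanish, while for $t < \tau$ we have $y'_{1:t-1} = y_{1:t-1}$, so $\x_t(y') = \x_t(y)$ and $\vv_t(y') = \vv_t(y)$, which gives
\[
  \abs*{(\uu_{f,y})_t(y') - \vv_t(y')} = \abs*{f(\x_t(y)) - \vv_t(y)} \leq \gamma
\]
by the property $\max_t |f(\x_t(y)) - \vv_t(y)| \leq \gamma$ that was required when $\uu_{f,y}$ was admitted into $\cF_\vv^\delta$. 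Taking the max over $t$ completes the argument.

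For the second claim, the plan is to take a minimal $\alpha$-cover $V$ of $\cF^\delta$ on $\x$ in the sense of \pref{def:cover_real} and inflate it by a factor of $n$ to obtain a cover of $\cF_\vv^\delta$. For each $\zz \in V$ and each candidate switching time $\tau \in \{2, \ldots, n+1\}$, form the hybrid tree
\[
  \zz^{(\tau)}_t(y') \ldef \zz_t(y') \ind\crl*{t < \tau} + \vv_t(y') \ind\crl*{t \geq \tau},
\]
and collect these into $\wt V$, so that $|\wt V| \leq n |V|$. Given any $\uu_{f,y} \in \cF_\vv^\delta$ and path $y'$, pick $\tau$ to be the disagreement time and pick $\zz \in V$ that is $\alpha$-close to $f$ on $y'$ through $\x$. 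For $t \geq \tau$ both $\uu_{f,y}$ and $\zz^{(\tau)}$ evaluate to $\vv_t(y')$, while for $t < \tau$ they equal $f(\x_t(y'))$ and $\zz_t(y')$ respectively, giving $|(\uu_{f,y})_t(y') - \zz^{(\tau)}_t(y')| \leq \alpha$. Taking the max over $t$ yields the desired bound.

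The argument presents no real obstacle beyond keeping the bookkeeping between $y$, $y'$, and $\tau$ consistent; working in $L_\infty$ avoids the extra averaging step needed in the $L_2$ analogue, and the hybrid construction remains $\brk*{0,1}$-valued because both $\zz$ and $\vv$ already are.
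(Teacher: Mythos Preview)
Your proposal is correct and follows exactly the template the paper intends: the paper itself omits the proof of \pref{lem:fv_properties_logloss}, stating only that it mirrors \pref{lem:fv_properties}, and your argument is precisely the $L_\infty$ transcription of that proof---singleton $\{\vv\}$ as the $\gamma$-cover for the first claim, and the hybrid trees $\zz^{(\tau)}$ indexed by the $n$ possible switching times for the second. The bookkeeping on $\tau$, $y$, $y'$ is right, and your remark that $L_\infty$ lets you drop the averaging step is the one simplification over the $L_2$ case.
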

Note that the covering number (\pref{def:cover_real_tree}) was defined for trees indexed by $\pmo^{n}$, but trees in $\cF_{\vv}^{\delta}$ are indexed by $\zo^{n}$. We overload the covering number in the natural way in the lemma above and subsequent lemmas.

\begin{lemma}[\cite{PLG}]
  \label{lem:vovk_logloss}
  Let $W$ be any class of $\deltarange$-valued binary trees of depth $n$. Then Vovk's Aggregating Algorithm configured with $W$ as a benchmark class of experts generates predictions $(\hat{p}_t)_{t\leq{}n}$ that enjoy regret
  \begin{equation}
    \label{eq:logistic_finite}
    \sum_{t=1}^{n}\logloss(\hat{p}_t, y_t) - \min_{\ww\in{}W}\sum_{t=1}^{n}\logloss(\ww_{t}(y), y_t) \leq{} \log\abs*{\cW}.
  \end{equation}
  Furthermore, the predictions $(\hat{p}_t)_{t\leq{}n}$ lie in $\deltarange$.
\end{lemma}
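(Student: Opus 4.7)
The plan is to verify that the standard analysis of Vovk's Aggregating Algorithm applies verbatim when the reference ``experts'' are trees whose round-$t$ output depends only on the observed history $y_{1:t-1}$. The two ingredients are $1$-mixability of the log loss and the telescoping log-partition-function argument.

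First I would record $1$-mixability of $\logloss$ on $\brk*{0,1}\times\zo$ together with an explicit choice of mixed prediction. The key identity is that $\exp(-\logloss(p,y)) = p^{y}(1-p)^{1-y}$, which is affine in $p$ for each $y\in\zo$. Hence for any distribution $\pi$ on $\brk*{\delta,1-\delta}$, setting $p_{\pi}=\En_{p\sim\pi}\brk*{p}$ gives
\[
\En_{p\sim\pi}\brk*{\exp(-\logloss(p,y))} = \exp(-\logloss(p_{\pi},y)) \quad \forall y\in\zo,
\]
which is $1$-mixability with equality, and moreover $p_{\pi}\in\deltarange$ whenever $\pi$ is supported on $\deltarange$.

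Next I would define the algorithm. Since each $\ww\in W$ satisfies $\ww_{t}:\zo^{t-1}\to\deltarange$, the trees form a valid family of experts in the usual online prediction sense. Initialize $P_{1}$ as the uniform distribution on $W$; at time $t$, having observed $\yr[t-1]$, predict
\[
\hat{p}_{t} \;=\; \En_{\ww\sim P_{t}}\brk*{\ww_{t}(\yr[t-1])},
\]
which lies in $\deltarange$ by convexity, and then update
\[
P_{t+1}(\ww) \;\propto\; P_{t}(\ww)\exp(-\logloss(\ww_{t}(\yr[t-1]),y_{t})).
\]
By the mixability identity applied to the push-forward of $P_{t}$ under $\ww\mapsto\ww_{t}(\yr[t-1])$, we get the per-round bound
\[
\exp(-\logloss(\hat{p}_{t},y_{t})) \;=\; \En_{\ww\sim P_{t}}\brk*{\exp(-\logloss(\ww_{t}(\yr[t-1]),y_{t}))}.
\]

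Finally I would telescope. Let $Z_{t}=\En_{\ww\sim P_{1}}\brk*{\exp\prn*{-\sum_{s=1}^{t}\logloss(\ww_{s}(\yr[s-1]),y_{s})}}$ with $Z_{0}=1$. The definition of $P_{t}$ gives $Z_{t}/Z_{t-1} = \En_{\ww\sim P_{t}}\brk*{\exp(-\logloss(\ww_{t}(\yr[t-1]),y_{t}))}$, which by the displayed equality equals $\exp(-\logloss(\hat{p}_{t},y_{t}))$. Summing logarithms,
\[
\sum_{t=1}^{n}\logloss(\hat{p}_{t},y_{t}) \;=\; -\log Z_{n} \;\leq\; -\log\prn*{\tfrac{1}{\abs*{W}}\exp\prn*{-\min_{\ww\in W}\textstyle\sum_{t}\logloss(\ww_{t}(y),y_{t})}} \;=\; \log\abs*{W} + \min_{\ww\in W}\sum_{t=1}^{n}\logloss(\ww_{t}(y),y_{t}),
\]
which is the desired regret bound. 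There is no real obstacle here; the only thing to be careful about is that the ``experts'' are tree-valued and hence $\ww_{t}(y)$ must be read as $\ww_{t}(\yr[t-1])$ throughout, so that the Aggregating Algorithm is well-defined causally and the telescoping identity is legal.
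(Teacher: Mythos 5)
Your proof is correct and is exactly the standard Aggregating Algorithm analysis from \cite{PLG} that the paper cites for this lemma (the paper omits the proof): $1$-mixability of the log loss with equality via the affine identity $\exp(-\logloss(p,y))=p^{y}(1-p)^{1-y}$, followed by the telescoping log-partition argument, with the observation that the mixed prediction is a convex combination and hence stays in $\deltarange$. Your remark that the tree-valued experts must be read causally as $\ww_t(\yr[t-1])$ is precisely the (minor) point that makes the finite-expert analysis apply verbatim here, so there is nothing to add.
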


\begin{lemma}[Extracted from \cite{RakSri15}]
  \label{lem:logloss_covering}
  Let $W$ be any class of $\deltarange$-valued binary trees of depth $n$. Then there exists a deterministic prediction strategy $(\hat{p}_t)_{t\leq{}n}$ that enjoys regret
  \begin{align*}
    &\sum_{t=1}^{n}\logloss(\hat{p}_t, y_t) - \inf_{\ww\in{}W}\sum_{t=1}^{n}\logloss(\ww_{t}(y), y_t) \\
    &\leq{}
    2n\delta\log(1/\delta)
    + \frac{C}{\delta}\log\cN_{\infty}(\gamma, W) \\ 
    &~~~~+ \inf_{\alpha\in(0, \gamma]}\crl*{
      \frac{4n\alpha}{\delta} + 30\sqrt{\frac{2n}{\delta}}\int_{\alpha}^{\gamma}\sqrt{\log\cN_{\infty}(\rho,W)}d\rho
      + \frac{8}{\delta}\int_{\alpha}^{\gamma}\log\cN_{\infty}(\rho,W)d\rho
      },
  \end{align*}
  for all $\gamma>0$ and for some absolute constant $C>0$. The predictions $(\hat{p}_t)_{t\leq{}n}$ lie in $\deltarange$.
\end{lemma}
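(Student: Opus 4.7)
The plan is to mirror the two-level covering argument from \pref{thm:logistic_minimax}, with two modifications specific to the log loss: first, because $\logloss$ is unbounded near $\crl*{0,1}$, I would reduce to the clipped class $\cF^{\delta}\ldef\clip\circ\cF$, absorbing an additive $\delta n$ in regret per the standard argument in \cite{PLG}; second, the within-cell regret bound must be replaced by one tailored to log loss that correctly captures the $1/\delta$ scaling coming from $\nrm*{\grad\logloss}\asymp 1/\delta$ on $\deltarange$.

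Following the template of \pref{thm:logistic_minimax}, I would first restrict the learner's predictions to $\deltarange$ (no loss since any prediction outside is dominated by its clipping under log loss), then apply the minimax theorem on the compact set $\deltarange$ to dualize the supremum over labels into a tree process indexed by an $\X$-valued binary tree $\mathbf{x}$ and a $\Delta_{\zo}$-valued tree $\p$. For each fixed $\mathbf{x}$ chosen by the outer supremum I would pick an $L_\infty$ sequential $\gamma$-cover $V^{\gamma}$ of $\cF^{\delta}$ on $\mathbf{x}$ and, for each $\vv\in V^{\gamma}$, define a cell $\cF_{\vv}^{\delta}$ of trees agreeing with $\vv$ up to $\gamma$ along some path through $\mathbf{x}$ and equal to $\vv$ off that path. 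This construction, verified in the analogue of \pref{lem:fv_properties}, produces cells of $L_\infty$-radius at most $\gamma$ (so $\cN_{\infty}(\gamma,\cF_{\vv}^{\delta})\leq 1$) whose covering numbers inherit those of $\cF^{\delta}$ up to a factor of $n$.

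The regret then splits telescopically into two terms: $(\star)$, the regret of aggregating over minimax-optimal strategies $\phtree^{\vv}$, one per cell, and $(\star\star)$, the worst-case within-cell regret $\sum\logloss(\phtree_{t}^{\vv},y_t)-\inf_{\uu\in\cF_{\vv}^{\delta}}\sum\logloss(\uu_{t}(y),y_t)$. For $(\star)$, I would invoke Vovk's Aggregating Algorithm (\pref{lem:vovk_logloss}) over the finite collection $\crl*{\phtree^{\vv}}_{\vv\in V^{\gamma}}$; since $\logloss$ is $1$-mixable, this costs only $\log\abs*{V^{\gamma}}\leq\log\cN_{\infty}(\gamma,\cF)$, with \emph{no} $1/\delta$ factor --- this is the critical payoff of the two-level decomposition relative to directly chaining $\cF^{\delta}$.

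For $(\star\star)$, within each cell I would apply the log-loss chaining bound \pref{lem:logloss_covering}, which on a $\deltarange$-valued class of $L_\infty$-radius $\gamma$ produces exactly the remaining three terms $\alpha n/\delta$, $\sqrt{n/\delta}\int_{\alpha}^{\gamma}\sqrt{\log\cN_{\infty}}\,d\rho$, and $(1/\delta)\int_{\alpha}^{\gamma}\log\cN_{\infty}\,d\rho$; the leading $(C/\delta)\log\cN_{\infty}(\gamma,\cF_{\vv}^{\delta})$ term vanishes because $\cN_{\infty}(\gamma,\cF_{\vv}^{\delta})\le 1$. The main conceptual obstacle is exactly this separation: a naive Lipschitzification of $\logloss$ on $\deltarange$ would multiply the leading covering term by $1/\delta$, recovering the suboptimal bound of \cite{RakSri15}; avoiding that requires putting the aggregation outside the chaining so that the $1/\delta$ factor is paid only on the integrals at scales smaller than $\gamma$, where it is damped by the integrand. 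Once \pref{lem:logloss_covering} is in place, the rest is mechanical: substitute $\cN_{\infty}(\rho,\cF_{\vv}^{\delta})\leq n\cdot\cN_{\infty}(\rho,\cF^{\delta})\leq n\cdot\cN_{\infty}(\rho,\cF)$ (the second inequality by \pref{prop:clip_covering}), add the $\delta n$ clipping penalty, take the supremum over $\mathbf{x}$, and optimize over $\gamma\geq\alpha>0$, absorbing the $\log n$ and $\log(1/\delta)$ terms into $\tilde{O}$.
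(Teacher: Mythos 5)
There is a fundamental mismatch between what you were asked to prove and what you have sketched. The target statement, \pref{lem:logloss_covering}, is the \emph{within-cell} chaining bound itself: for an \emph{arbitrary} class $W$ of $\deltarange$-valued trees, a deterministic strategy with regret controlled by $\frac{C}{\delta}\log\cN_{\infty}(\gamma,W)$ plus the two $1/\delta$-weighted entropy integrals. Your proposal instead reconstructs the proof of \pref{thm:logloss_minimax} (clipping, minimax dualization, the cover $V^{\gamma}$ and cells $\cF_{\vv}^{\delta}$, aggregation for $(\star)$, chaining for $(\star\star)$), and at the decisive step you write that you ``would apply the log-loss chaining bound \pref{lem:logloss_covering}'' to handle $(\star\star)$. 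That is, you invoke the very lemma you were asked to establish as a black box; the argument is circular with respect to the actual target, and nothing in the proposal addresses where the lemma's bound comes from. The giveaway is that the bound you end with drops the $\frac{C}{\delta}\log\cN_{\infty}(\gamma,\cdot)$ leading term (using $\cN_{\infty}(\gamma,\cF_{\vv}^{\delta})\leq 1$) --- but that term is present in the lemma precisely because the lemma makes no radius assumption on $W$; it is the theorem, not the lemma, whose bound lacks the $1/\delta$ on the leading entropy term.

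What is actually needed to prove \pref{lem:logloss_covering} is the machinery of \citet{RakSri15}: sequential symmetrization adapted to the log loss, followed by a multi-scale chaining argument in which the lack of global Lipschitzness of $p\mapsto -\log p$ is handled by the restriction to $\deltarange$ (giving the $1/\delta$ gradient bound) together with a second-order/self-bounding term that produces the additional $\frac{1}{\delta}\int_{\alpha}^{\gamma}\log\cN_{\infty}(\rho,W)\,d\rho$ integral alongside the usual Dudley term $\sqrt{n/\delta}\int_{\alpha}^{\gamma}\sqrt{\log\cN_{\infty}(\rho,W)}\,d\rho$. The paper does not reprove any of this; it explicitly states the lemma is ``extracted from \cite{RakSri15}'' and omits the proof. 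So your proposal neither matches the paper's treatment (a citation) nor supplies an independent derivation: it proves a downstream result conditional on the statement in question. The high-level insight in your write-up --- that placing the mixable aggregation \emph{outside} the chaining is what removes the $1/\delta$ from the leading entropy term --- is correct and well articulated, but it belongs to the proof of \pref{thm:logloss_minimax}, not to this lemma.
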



\section{Efficient Implementation}
\label{app:efficient}

In this section we discuss an efficient (i.e. polynomial time in the parameters of the problem) randomized implementation of \pref{alg:mixing_multiclass}. The main idea is to exploit the log-concavity of the density of $P_t$ in the algorithm and to use well-established Markov chain Monte Carlo samplers for such densities to collect enough matrices $W$ sampled from the distribution to approximate the prediction $\zh_t$ sufficiently well to ensure the increase in regret is small.

Fix a round $t$. Recall that the density on $\cW$ we wish to sample from in round $t$ of the algorithm is 
\[
  P_{t}(W) \propto \exp\prn{-\tfrac{1}{L}\textstyle{\sum}_{s=1}^{t-1}\ls(Wx_s, y_s)}.
\]
For notational convenience, define the function $F_t: \cW \rightarrow \R$ as $F_t(W) := \exp\prn{-\tfrac{1}{L}\textstyle{\sum}_{s=1}^{t-1}\ls(Wx_s, y_s)}$. It is easy to check that $F_t$ is log-concave.
\begin{assumption}
\label{ass:sampler}
We have access to a sampler that makes $\text{poly}(1/\veps, n, d, B, R)$ queries to $F_t$ and produces a sample $W$ with distribution $\tilde{P}_t$ such that $d_{\textrm{TV}}(\tilde{P}_t, P_t)\leq{}\veps$.
\end{assumption}

Such samplers are well-known in the literature: for example, the hit-and-run sampler \citep{lovasz2006fast}, the projected Langevin Monte Carlo sampler \citep{bubeck2015sampling}, and the Dikin walk sampler \citep{narayanan2017efficient}. It is easy to derive appropriate bounds on all the relevant parameters of $F_t$ that are involved in the analysis of these samplers so that the samplers run in polynomial time. While this gives a theoretically efficient implementation, the running time bounds are too loose to be practical (for example, see the calculations below for projected Langevin Monte Carlo sampler). We have not attempted to improve these running time bounds; that is a direction for future work.

\begin{example}[\cite{bubeck2015sampling}]
Let $W$ have density $P\propto e^{-f}$ for some $\beta$-smooth, $S$-Lipschitz convex function $f$ over a convex body $\cW$ contained in a euclidian ball of radius $D$ in dimension $d$.
Projected Langevin Monte Carlo produces a sample from $\wt{P}$ with $d_{\textrm{TV}}(\wt{P}, P)\leq{}\veps$ after
$O\prn*{
\frac{D^{6}\max\crl*{d, DS, D\beta}^{12}}{\veps^{12}}
}
$
evaluations. For our setting, when $\nrm*{x_t}_{2}\leq{}R$ and $\nrm*{y_t}_{1}\leq{}L$, the loss $w\mapsto{}\ls(\tri*{w,x_t}, y_t)$ is $O(RL)$-Lipschitz and smooth. We therefore have $S,\beta\leq{}RLn$ and $D=B$, which yields the following bound on the number of queries to $F_t$:
\[
O\prn*{
\frac{B^{6}\max\crl*{dK, BRLn}^{12}}{\veps^{12}}
}.
\]

\end{example}

Given access to a sampler, we can now prove \pref{prop:alg_polytime}. In the following, we use the phrase ``with high probability'' to indicate that the statement referred to holds with probability at least $1 - \delta$ for any $\delta > 0$. We also use the notation $\tilde{O}(\cdot)$ and $\tilde{\Omega}(\cdot)$ to suppress logarithmic dependence on $1/\delta$, $d$, $K$, and $n$.
\begin{proof}[Proof of \pref{prop:alg_polytime}.]
The idea is very straightforward: for some parameters $m \in \mathbb{N}$ and $\veps > 0$ to be specified later, in each round $t$, simply use the sampler with error tolerance $\frac{\veps}{2}$ repeatedly $m$ times to collect samples $W^{(i)}$ for $i \in [m]$ and then approximate the prediction by $\zt_t = \mb{\sigma}^{+}\prn*{\smooth\prn*{\En_{i \sim [m]}\brk*{\mb{\sigma}(W^{(i)}x_t)}}}$. Here, ``$i \sim [m]$'' denotes sampling $i$ uniformly from $[m]$, and $m = \text{poly}(n, d, B, R, 1/\delta)$ will be chosen to be large enough to ensure that this approximation incurs only $1/n$ additional loss in each round, with high probability, and thus at most $O(1)$ additional loss over all $n$ rounds.

\newcommand{\tp}{\tilde{p}}
\newcommand{\ttp}{\tilde{\tilde{p}}}

It remains to provide appropriate bounds on $m$. In the following, we will fix the round $t$ and drop the subscript $t$ from $P_t, \tilde{P}_t, x_t, y_t$, etc. for notational clarity. 

Define the distributions $p =\smooth\prn*{\En_{W \sim P}\brk*{\mb{\sigma}(Wx)}}$, $\tp = \smooth\prn*{\En_{W \sim \tilde{P}}\brk*{\mb{\sigma}(Wx)}}$ and $\ttp = \smooth\prn*{\En_{i \sim [m]}\brk*{\mb{\sigma}(W^{(i)}x)}}$. Then standard Chernoff-Hoeffding bounds and a union bound over all $k \in [K]$ imply that if $m = \tilde{\Omega}\prn*{1/\veps^2}$, then with high probability, we have $\|\tp - \ttp\|_\infty \leq \frac{\veps}{2}$. Furthermore, the sampler ensures $d_{\textrm{TV}}(\tilde{P}, P)\leq{} \frac{\veps}{2}$, which implies that $\|p - \tp\|_\infty \leq \frac{\veps}{2}$ since each coordinate of $p$ and $\tp$ are i n $[0, 1$. Thus, by the triangle inequality, we have $\|p - \ttp\|_\infty \leq \veps$.

We now bound the excess loss for using $\ttp$ instead of $p$ in the algorithm, using the fact the weighted multiclass logistic loss can be equivalently viewed as a weighted multiclass log loss after passing the logits through the softmax function $\mb{\sigma}$. Thus, the additional loss equals
\[ \sum_{k \in [K]} y_k \log\prn*{\tfrac{p_k}{\ttp_k}} \leq \sum_{k \in [K]} y_k \log\prn*{\tfrac{\ttp_k + \veps}{\ttp_k}} \leq \sum_{k \in [K]} y_k \log\prn*{1 + \tfrac{\veps K}{\mu}} \leq \tfrac{\veps KL}{\mu}.\]
The first inequality above follows from the bound $\|p - \ttp\|_\infty \leq \veps$, and the second from the fact that $\ttp_k \geq \frac{\mu}{K}$ for all $k \in [K]$, and the third from $\log(1 + a) \leq a$ for all $a \in \R_+$ and $\|y\|_1 \leq L$. Thus, setting $\veps = \frac{\mu}{KLn}$ ensures that the additional loss is at most $1/n$ with high probability, as required.
\end{proof}


\end{document}